\def\eqref#1{equation~\ref{#1}}
\def\Eqref#1{Eq.~(\ref{#1})}
\def\1{\bm{1}}
\newcommand{\epsilonv}{\bm{\epsilon}}
\def\va{{\bm{a}}}
\def\vs{{\bm{s}}}
\def\vu{{\bm{u}}}
\def\vx{{\bm{x}}}
\def\mI{{\bm{I}}}
\DeclareMathAlphabet{\mathsfit}{\encodingdefault}{\sfdefault}{m}{sl}
\SetMathAlphabet{\mathsfit}{bold}{\encodingdefault}{\sfdefault}{bx}{n}
\def\gL{{\mathcal{L}}}
\def\gN{{\mathcal{N}}}
\def\gU{{\mathcal{U}}}
\newcommand{\E}{\mathbb{E}}
\newcommand{\KL}{D_{\mathrm{KL}}}
\DeclareMathOperator*{\argmin}{arg\,min}
\definecolor{colorstart}{rgb}{0.0, 0.0, 1.0}
\definecolor{colorend}{rgb}{0.0, 1.0, 0.5}
\newsavebox{\measurebox}
\title{Score Regularized Policy Optimization \\ through Diffusion Behavior}
\author{Huayu Chen$^1$, Cheng Lu$^1$, Zhengyi Wang$^1$, Hang Su$^{1,2}$, Jun Zhu$^{1,2}$\thanks{Corresponding author.} \\
$^1$Department of Computer Science \& Technology, Institute for AI, BNRist Center,\\
 Tsinghua-Bosch Joint ML Center, THBI Lab, Tsinghua University\\
 $^2$Pazhou Laboratory (Huangpu), Guangzhou, Guangdong\\
\texttt{\{chenhuay21,wang-zy21\}@mails.tsinghua.edu.cn}; \\
\texttt{lucheng.lc15@gmail.com} ; \ \ \texttt{\{suhangss,dcszj\}@tsinghua.edu.cn}
}
\newtheorem{proposition}{Proposition}
\newtheorem{remark}{Remark}
\begin{document}

\maketitle
\begin{abstract}
Recent developments in offline reinforcement learning have uncovered the immense potential of diffusion modeling, which excels at representing heterogeneous behavior policies. 
However, sampling from diffusion policies is considerably slow because it necessitates tens to hundreds of iterative inference steps for one action. 
To address this issue, we propose to extract an efficient deterministic inference policy from critic models and pretrained diffusion behavior models, leveraging the latter to directly regularize the policy gradient with the behavior distribution's score function during optimization. 
Our method enjoys powerful generative capabilities of diffusion modeling while completely circumventing the computationally intensive and time-consuming diffusion sampling scheme, both during training and evaluation. 
Extensive results on D4RL tasks show that our method boosts action sampling speed by more than 25 times compared with various leading diffusion-based methods in locomotion tasks, while still maintaining state-of-the-art performance. 
Code: \url{https://github.com/thu-ml/SRPO}.
\end{abstract}

\begin{figure}[h] \centering
    \vspace{-3mm}
    \includegraphics[width=0.8\textwidth]{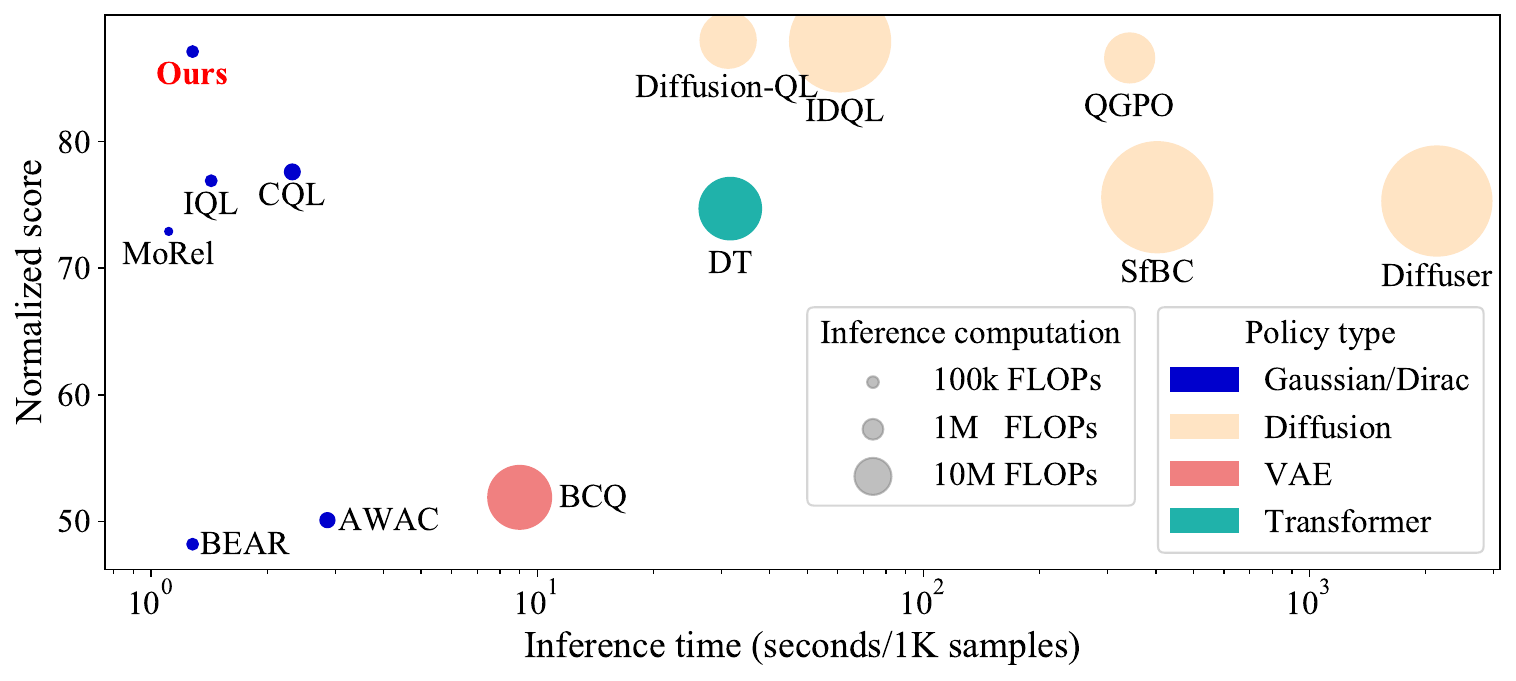}
    \caption{Performance and computational efficiency of different algorithms in D4RL Locomotion tasks. Computation time is assessed using a consistent hardware setup and PyTorch backend.}
    \vspace{-1mm}
    \label{fig:computation}
    \vspace{-3mm}
\end{figure}

\section{Introduction}
Offline reinforcement learning (RL) aims to tackle decision-making problems by solely utilizing a pre-collected behavior dataset. This offers a practical solution for tasks where data collection can be associated with substantial risks or exorbitant costs. A central challenge for offline RL is the realization of behavior regularization, which entails ensuring the learned policy stays in support of the behavior distribution. 
Weighted regression \citep{awr, iql} provides a promising approach that directly utilizes behavioral actions as sources of supervision for policy training. Another prevalent approach is behavior-regularized policy optimization \citep{bear, brac}, which builds a generative behavior model, followed by constraining the divergence between the learned and the behavior model during policy optimization.

An expressive generative model holds a pivotal role in the aforementioned regularization methods. In weighted regression, a unimodal actor is prone to suffer from the ``mode covering" issue, a phenomenon where policies end up selecting out-of-support actions in the middle region between two behavioral modes \citep{diffusionql, idql}. Expressive policy classes, with diffusion models \citep{diffusion} as prime choices, help to resolve this issue. In behavior-regularized policy optimization, diffusion modeling can also be significantly advantageous for an accurate estimate of policy divergence, due to its strong ability to represent heterogeneous behavior datasets, outperforming conventional methods like Gaussians or variational auto-encoders (VAEs) \citep{arq, sfbc}.

However, a major drawback of utilizing diffusion models in offline RL is the considerably slow sampling speed -- diffusion policies usually require 5-100 iterative inference steps to create an action sample. 
Moreover, diffusion policies tend to be excessively stochastic, forcing the generation of dozens of action candidates in parallel to pinpoint the final optimal one \citep{diffusionql}. 
As existing methods necessitate sampling from or backpropagating through diffusion policies during training and evaluation, it has significantly slowed down experimentation and limited the application in fields that are computationally sensitive or require high control frequency, such as robotics. Therefore, it is critical to systematically investigate the question: \emph{is it feasible to fully exploit the generative capabilities of diffusion models without directly sampling actions from them?}

In this paper, we propose \textbf{S}core \textbf{R}egularized \textbf{P}olicy \textbf{O}ptimization (SRPO) with a positive answer to the above question. The basic idea is to extract a simple deterministic inference policy from critic and diffusion behavior models to avoid the iterative diffusion sampling process during evaluation. To achieve this, we show that the gradient of the divergence term in regularized policy optimization is essentially related to the score function of behavior distribution.
The latter can be effectively approximated by any pretrained score-based model including diffusion models \citep{sde}. This allows us to directly regularize the policy \textit{gradient} instead of the policy \textit{loss}, removing the need to generate fake behavioral actions for policy-divergence estimation (Section \ref{sec:SRPO}).

We develop a practical algorithm to solve continuous control tasks (Section \ref{sec:practical_algo}) by combining SRPO with implicit Q-learning \citep{iql} and continuous-time diffusion behavior modeling \citep{qgpo}. For policy extraction, we incorporate similar techniques that have facilitated recent advances in text-to-3D research such as DreamFusion \citep{dreamfusion}. These include leveraging an ensemble of score approximations under different diffusion times to exploit the pretrained behavior model and a baseline term to reduce variance for gradient estimation. We empirically show that these techniques successfully help improve performance and stabilize training for policy extraction.

We evaluate our method in D4RL tasks \citep{d4rl}. Results demonstrate that our method enjoys a more than 25$\times$ boost in action sampling speed and less than $1\%$ of computational cost for evaluation compared with several leading diffusion-based methods while maintaining similar overall performance in locomotion tasks (Figure \ref{fig:computation}).
We also conduct 2D experiments to better illustrate that SRPO successfully constrains the learned policy close to various complex behavior distributions.

\section{Background}
\subsection{Offline Reinforcement Learning}
Consider a typical Markov Decision Process (MDP) described by the tuple $\langle\mathcal{S},\mathcal{A}, P,r,\gamma\rangle$, where $\mathcal{S}$ is the state space, $\mathcal{A}$ the action space, $P(\vs'|\vs,\va)$ the transition function, $r(\vs, \va)$ the reward function and $\gamma$ the discount factor. The goal of reinforcement learning (RL) is to train a parameterized policy $\pi_\theta(\va|\vs)$ which maximizes the expected episode return.
Offline RL relies solely on a static dataset $\mathcal{D}^\mu$ containing interacting history $\{\vs, \va, r, \vs'\}$ between a behavior policy $\mu(\va|\vs)$ and the environment to train the parameterized policy. 

Suppose we can evaluate the quality of a given action by estimating its expected return-to-go using a Q-network $Q_{\phi}(\vs,\va) \approx Q^\pi(\vs, \va) := \mathbb{E}_{\vs_1=\vs, \va_1=\va; \pi} [\sum_{n=1}^\infty \gamma^n r(\vs_n, \va_n)]$, we can formulate the training objective of offline RL as $\max_{\pi} \mathbb{E}_{\vs \sim \mathcal{D}^\mu, \va \sim \pi(\cdot|\vs)} Q_\phi(\vs, \va) - \frac{1}{\beta} \KL \left[\pi(\cdot |\vs) || \mu(\cdot |\vs) \right]$ \citep{brac}. Note that a KL regularization term is added mainly to ensure the learned policy stays in support of explored actions. $\beta$ is some temperature coefficient. Previous work \citep{rwr, awr} has shown that the optimal policy for such an optimization problem is
\begin{align}
    \pi^*(\va|\vs) &= \frac{1}{Z(\vs)} \ \mu(\va|\vs) \ \mathrm{exp}\left(\beta Q_\phi(\vs, \va) \right),
\label{Eq:pi_optimal}
\end{align}
where $Z(\vs)$ is the partition function. The core problem for offline RL now becomes how to efficiently model and sample from the optimal policy distribution $\pi^*(\cdot|\vs)$.
\subsection{Optimal Policy Extraction}
\label{sec:policy_extraction}

\begin{figure}[t]
    \begin{minipage}{0.295\linewidth}
		\centering
		\includegraphics[width=\linewidth]{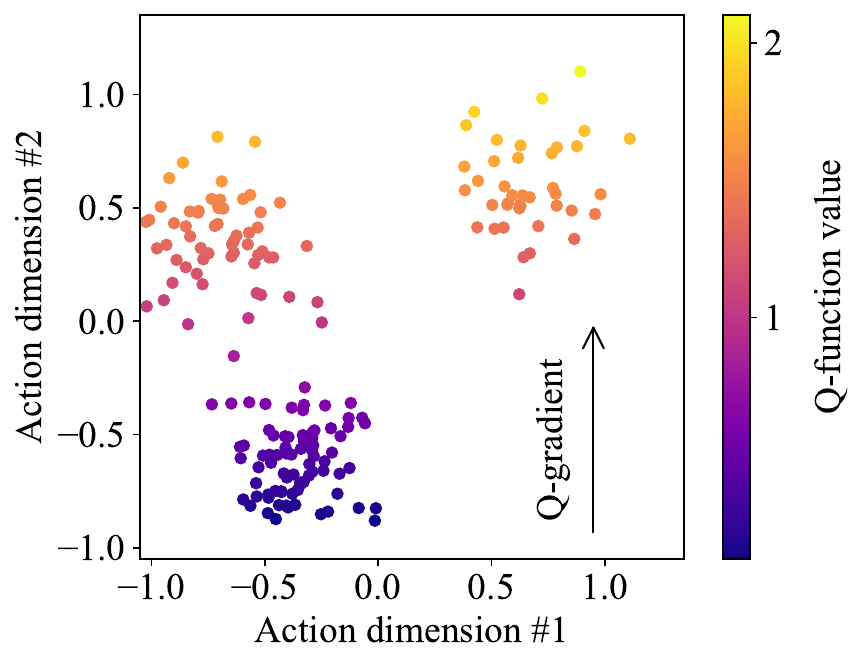}
        {\small (a) Behavior dataset \\ \textcolor{white}{null}}
	\end{minipage} 
    \begin{minipage}{0.225\linewidth}
		\centering
		\includegraphics[width=\linewidth]{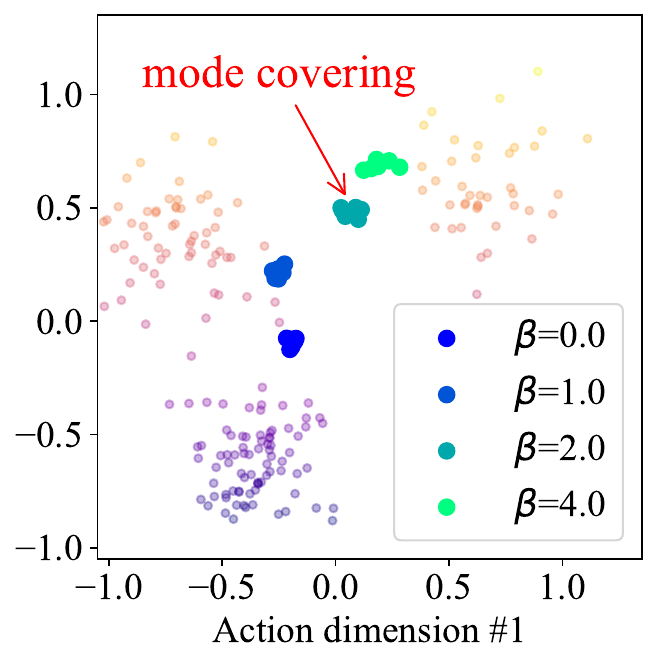}
	    {\small (b) Forward KL \& \\  Unimodal policy}
	\end{minipage} 
    \begin{minipage}{0.225\linewidth}
		\centering
		\includegraphics[width=\linewidth]{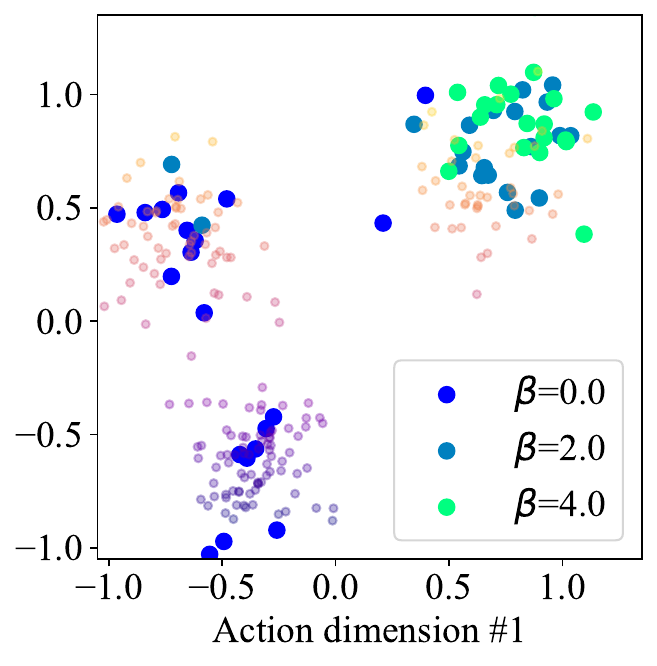}
		{\small (c) Forward KL \& \\ Expressive policy}
	\end{minipage} 
    \begin{minipage}{0.225\linewidth}
		\centering
		\includegraphics[width=\linewidth]{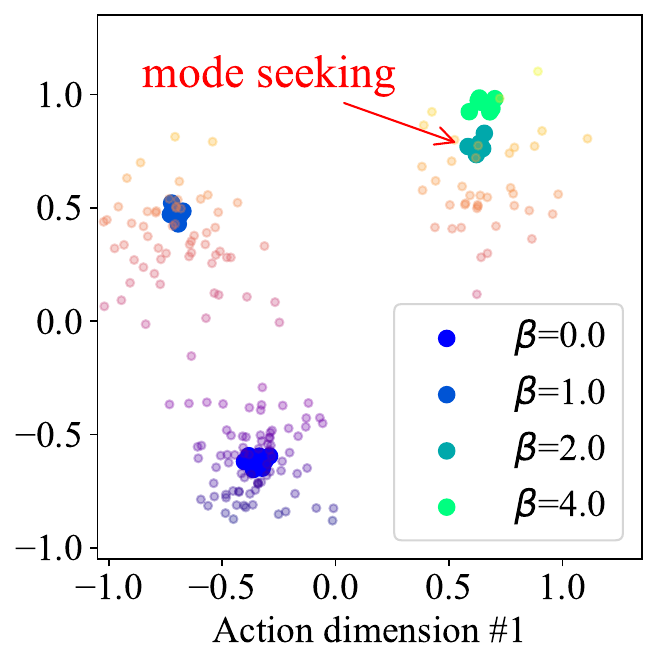}
		{\small (d) Reverse KL \& \\  Unimodal policy}
	\end{minipage} 
	\vspace{-0.05in}
    \caption{\label{fig:forward_reverse_comp}Comparison of different policy extraction methods under bandit settings. Forward KL policy extraction is prone to generate out-of-support actions if the policy is not sufficiently expressive (e.g., Gaussians). This can be mitigated either by employing a more expressive policy class or by switching to a reverse KL objective (our method), which demonstrates a mode-seeking nature.}
	\vspace{-0.15in}
\end{figure}

Existing methods to explicitly model $\pi^*$ with a parameterized policy $\pi_\theta$ can be roughly divided into two main categories—\emph{weighted regression} and \emph{behavior-regularized policy optimization}:
\begin{align}
     \label{eq:forward_objective}
    \min_{\theta} \mathbb{E}_{\vs \sim \mathcal{D}^\mu}  \underbrace{\KL \left[\pi^*(\cdot  | \vs) \middle|\middle| \pi_\theta(\cdot  | \vs)\right]}_{\hspace{-3.3cm} \big\updownarrow \hspace{1.3cm}  \rlap{\text{\normalsize \textit{Forward} KL}}} & \Leftrightarrow      \underbrace{\max_{\theta} \mathbb{E}_{(\vs, \va) \sim \mathcal{D}^\mu} \left[ \frac{1}{Z(\vs)} \mathrm{log} \ \pi_\theta(\va | \vs) \ e^{\beta Q_\phi(\vs, \va)} \right]}_{\text{\normalsize \textit{Weighted Regression}}},\\
    \label{eq:reverse_objective}
    \min_{\theta} \mathbb{E}_{\vs \sim \mathcal{D}^\mu}  \underbrace{\KL \left[\pi_\theta(\cdot  | \vs) \middle|\middle| \pi^*(\cdot  | \vs)\right]}_{\text{\normalsize \textit{Reverse} KL}} & \Leftrightarrow      \underbrace{\max_{\theta} \mathbb{E}_{\vs \sim \mathcal{D}^\mu, \va \sim \pi_\theta} Q_\phi(\vs, \va)
     - \frac{1}{\beta} \KL \left[\pi_\theta(\cdot |\vs) || \mu(\cdot |\vs) \right]}_{\text{\normalsize \textit{Behavior-Regularized Policy Optimization}}}.  
\end{align}
Weighted regression directly utilizes behavioral actions as sources of supervision for policy training. This circumvents the necessity to explicitly model the intricate behavior policy but leads to another mode-covering issue due to the objective's forward-KL nature (\Eqref{eq:forward_objective}). The direct consequence is that weighted regression algorithms display sensitivity to the proportion of suboptimal data in the dataset \citep{yue2022boosting}, especially when the policy model lacks distributional expressivity \citep{sfbc}, as is depicted in Figure \ref{fig:forward_reverse_comp}. Recent work \citep{diffusionql, qgpo} attempts to alleviate this by employing more expressive policy classes, such as diffusion models \citep{diffusion}. However, these methods usually compromise on computational efficiency \citep{efficientdiffusionrl}.

In comparison, behavior-regularized policy optimization \citep{brac} emerges as a more suitable approach for training simpler policy models such as Gaussian models. This fundamentally stems from its basis on a reverse-KL objective (\Eqref{eq:reverse_objective}), which inherently encourages a mode-seeking behavior.
However, approximating the second KL term in \Eqref{eq:reverse_objective} is usually difficult. 
Regarding this, in practical implementation previous studies \citep{bear, brac, SBAC} usually first construct generative behavior models to approximate the policy-behavior divergence. 
\subsection{Diffusion Models for Score Function Estimation}
\label{sec:diffusionsde}
Diffusion models \citep{sohl2015deep, diffusion,sde} are powerful generative models. They operate by defining a forward diffusion process to perturb the data distribution into a noise distribution for training the diffusion model. Subsequently, this model is employed to reverse the diffusion process, thereby generating data samples from pure noise.

In particular, the forward process is conducted by gradually adding Gaussian noise to samples $\vx_0$ from an unknown data distribution $q_0(\vx_0):=q(\vx)$ at time $0$, forming a series of diffused distributions $q_t(\vx_t)$ at time $t$. The transition distribution $q_{t0}(\vx_t|\vx_0)$ is:
\begin{equation}
\label{Eq:forward_diffusion}
    q_{t0}(\vx_t|\vx_0) = \gN(\vx_t | \alpha_t\vx_0, \sigma_t^2\mI), \quad\quad  \text{which implies} \quad \quad  \vx_t = \alpha_t\vx_0 + \sigma_t \epsilonv.
\end{equation}
Here, $\alpha_t,\sigma_t > 0$ are manually defined, and $\epsilonv$ is random Gaussian noise.

For the reverse process, \citet{diffusion} train a diffusion model $\epsilonv_\theta(\vx_t|t)$ to predict the noise added to the diffused sample $\vx_t$ in order to iteratively reconstruct $\vx_0$. The optimization problem is
\begin{equation}
\label{Eq:diffusion_loss}
\min_\theta \E_{t,\vx_0,\epsilonv}\left[
        \|\epsilonv_\theta(\vx_t|t) - \epsilonv\|_2^2
    \right].
\end{equation}
More formally, \citet{sde} show that diffusion models are in essence estimating the \textit{score function} $\nabla_{\vx_t}\log q_t(\vx_t)$ of the diffused data distribution $q_t$, such that:
\begin{equation}
\label{Eq:diffusion_score}
\nabla_{\vx_t}\log q_t(\vx_t) = -\epsilonv^*(\vx_t|t)/\sigma_t \approx -\epsilonv_\theta(\vx_t|t)/\sigma_t,
\end{equation}
and the reverse diffusion process can alternatively be interpreted as discretizing an ODE:
\begin{equation}
\label{Eq:diffusion_ode}
    \frac{\mathrm{d} \vx_t}{\mathrm{d} t}=f(t)\vx_t-\frac{1}{2}g^2(t)\nabla_{\vx_t}\log q_t(\vx_t),
\end{equation}
where $f(t)=\frac{\mathrm{d}\log \alpha_t}{\mathrm{d} t},g^2(t)=\frac{\mathrm{d} \sigma_t^2}{\mathrm{d} t}-2\frac{\mathrm{d}\log \alpha_t}{\mathrm{d} t}\sigma_t^2$, leaving $\nabla_{\vx_t}\log q_t(\vx_t)$ as the only unknown term.
In offline RL, diffusion models have been discovered as an effective tool for modeling heterogeneous behavior policies. Usually states $\vs$ are considered as conditions while actions $\va$ are considered as data points $\vx$, such that a conditional diffusion model $\epsilonv(\va_t|\vs, t)$ can be constructed to represent $\mu(\va|\vs)$.

\section{Score Regularized Policy Optimization}
\label{sec:SRPO}
In this paper, we seek to learn a deterministic policy $\pi_\theta$ to capture the mode of a potentially complex policy distribution $\pi^*$ introduced in \Eqref{Eq:pi_optimal}. To achieve this, we employ a reverse-KL policy extraction scheme (\Eqref{eq:reverse_objective}) given its mode-seeking nature:
\begin{align}
    \label{eq:ideal_objective}
    \max \gL_{\pi}(\theta) &= \mathbb{E}_{\vs \sim \mathcal{D}^\mu, \va \sim \pi_\theta} Q_\phi(\vs, \va)
    - \frac{1}{\beta} \KL \left[\pi_\theta(\cdot |\vs) || \mu(\cdot |\vs) \right].
\end{align}
Solving the above optimization problem requires estimating $\KL \left[\pi_\theta(\cdot |\vs) || \mu(\cdot |\vs) \right]$. Regarding this, previous research \citep{bear, brac, SBAC, spot} use sample-based methods: first constructing a behavioral model $\mu_\psi \approx \mu$, followed by sampling fake actions from $\mu_\psi(\cdot |\vs)$ and $\pi_\theta(\cdot |\vs)$ to approximate the policy divergence. However, this approach necessitates sampling from the behavior model during training, imposing a substantial computational burden. This drawback is exacerbated when employing expressive yet sampling-expensive behavior models such as diffusion models.

We propose an alternative way to solve \Eqref{eq:ideal_objective}. By decomposing the KL term, we can get
\begin{align}
 \gL_{\pi}(\theta) = \underbrace{\mathbb{E}_{\vs \sim \mathcal{D}^\mu, \va \sim \pi_\theta} Q_\phi(\vs, \va)}_{\text{Policy optimization}} + \ \frac{1}{\beta}\underbrace{\mathbb{E}_{\vs \sim \mathcal{D}^\mu, \va \sim \pi_\theta}  \textcolor{blue}{\log \mu(\va|\vs)}}_{\text{Behavior regularization}} + \frac{1}{\beta}\underbrace{\mathbb{E}_{\vs \sim \mathcal{D}^\mu} \mathcal{H}(\pi_\theta(\cdot|\vs))}_{\text{Entropy (often constant\footnotemark)}}.
    \label{eq:ideal_objective_split}
\end{align}
\footnotetext{In this paper we only consider the cases where $\pi_\theta$ is an isotropic Gaussian with fixed variance. For brevity, we informally view Dirac as Gaussian whose variance is infinitesimally small.}
Then we calculate the gradient of \Eqref{eq:ideal_objective_split} under the condition that $\pi_\theta$ is deterministic. Applying the chain rule and the reparameterization trick, we have:
\begin{equation}
\label{eq:ideal_objective_gradient}
    \nabla_{\theta} \gL_{\pi}(\theta) = \mathbb{E}_{\vs \sim \mathcal{D}^\mu} \left[ \nabla_{\va} Q_\phi(\vs, \va)|_{\va=\pi_\theta(\vs)} + \frac{1}{\beta} \textcolor{blue}{\underbrace{\nabla_{\va} \log \mu(\va|\vs)|_{\va=\pi_\theta(\vs)}}_{= - \epsilonv^*(\va_t|\vs, t)/\sigma_t|_{t \to 0} \ \ \text{(by Eq. \ref{Eq:diffusion_score})}}} \right] \nabla_{\theta} \pi_\theta(\vs).
\end{equation}
It is noted that the only unknown term above is the score function $\nabla_{\va} \log \mu(\va|\vs)$ of the behavior distribution. Our key insight is that a pretrained diffusion behavior model can be leveraged to effectively estimate this term. This is because diffusion models $\epsilonv(\vx|t)$ are essentially approximating the score function $\nabla_{\vx} \log \mu_t(\vx)$ of the diffused data distribution $\mu_t(\vx_t)$ (\Eqref{Eq:diffusion_score}). 

Specifically, we first pretrain a diffusion behavior model, denoted as $\epsilonv(\va_t|\vs, t)$, to approximate $\nabla_{\va} \log \mu(\va|\vs)$. By doing so, we can regularize the optimization process of another deterministic actor $\pi_\theta$. We term our method as \textbf{S}core \textbf{R}egularized \textbf{P}olicy \textbf{O}ptimization (SRPO), given its distinctive feature of performing regularization at the gradient level, as opposed to the loss function level. Figure \ref{fig:bandit_SRPO} provides a 2D bandit example of SRPO.

\begin{figure}[t]
    \begin{minipage}{0.195\linewidth}
		\centering
		\includegraphics[width=\linewidth]{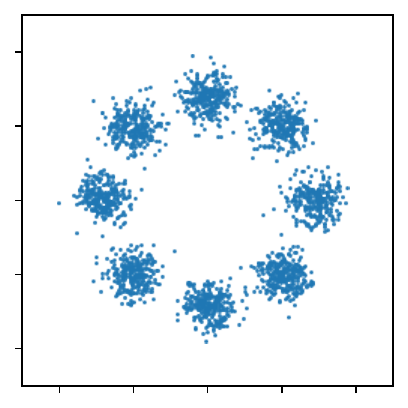}
        {\scriptsize (a) Data samples \\ (from behavior dataset)}
	\end{minipage} 
    \begin{minipage}{0.195\linewidth}
		\centering
		\includegraphics[width=\linewidth]{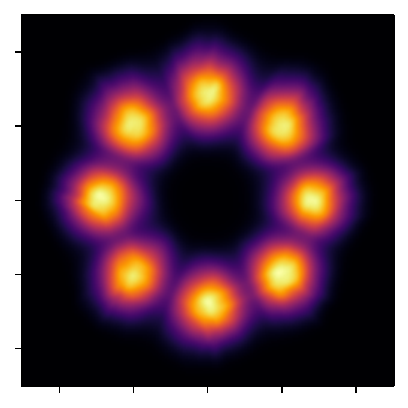}
	{\scriptsize (b) Behavior density $\hat{\mu}(\va)$ \\ (by diffusion models)}
	\end{minipage} 
    \begin{minipage}{0.195\linewidth}
		\centering
		\includegraphics[width=\linewidth]{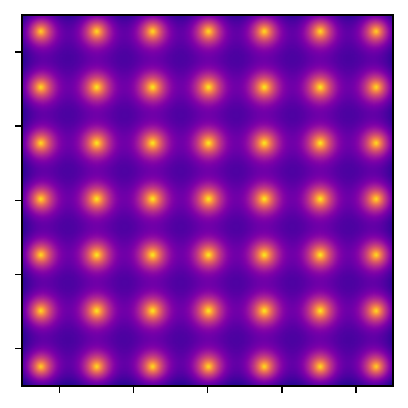}
		{\scriptsize (c) Quadratic Q-functions \\ (stacked)}
	\end{minipage} 
    \begin{minipage}{0.195\linewidth}
		\centering
		\includegraphics[width=\linewidth]{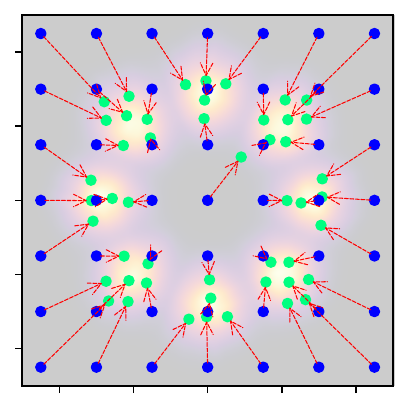}
		{\scriptsize (d) Behavior regularization \\ ($\textcolor{colorstart}{\frac{1}{\beta} = 0} \textcolor{red}{\longrightarrow} \textcolor{colorend}{\frac{1}{\beta} = 1}$)}
	\end{minipage} 
    \begin{minipage}{0.195\linewidth}
		\centering
		\includegraphics[width=\linewidth]{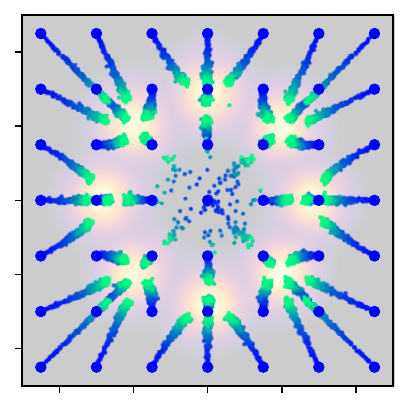}
		{\scriptsize (e) Result policy shift \\ (by varying $0\leq\frac{1}{\beta}\leq1$)}
	\end{minipage} \\
	\begin{minipage}{\linewidth}
	\centering
	\includegraphics[width=0.5\linewidth]{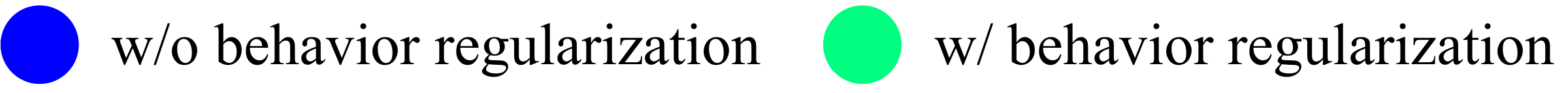}
	\end{minipage} 
	\caption{\label{fig:bandit_SRPO} Illustration of SRPO in 2D bandit settings. \textbf{(a)} A predefined complex data distribution, which represents the potentially heterogeneous behavior policy $\mu(\va)$.  \textbf{(b)} A diffusion model $\hat\mu(\va)$ is trained to fit the behavior distribution. The data density can be analytically calculated based on \citet{sde}. \textbf{(c)} The Q-function is manually defined as a quadratic function: $Q(\va):=-(\va - \va_{\text{tar}})^2$, where $\va_{\text{tar}}$ represents the 2D point with the highest estimated Q-value and is selected from a set of grid intersections. These individual Q-functions with different $\va_{\text{tar}}$ are depicted together in a stacked way in Figure (c). \textbf{(d)}\&\textbf{(e)} By optimizing deterministic policies $\pi(\cdot) = \va_{\text{reg}}$ according to \Eqref{eq:ideal_objective_gradient} and tuning the temperature coefficient $\beta$, resulting policies shift from greedy ones which tend to maximize corresponding Q-functions to conservative ones which are successfully constrained close to the behavior distribution. See more experimental results in Appendix \ref{appendix:toy_more}.}
\vspace{-0.05in}
\end{figure}

\begin{figure}[t]
\centering
    \begin{minipage}{0.23\linewidth}
		\centering
		\includegraphics[width=\linewidth]{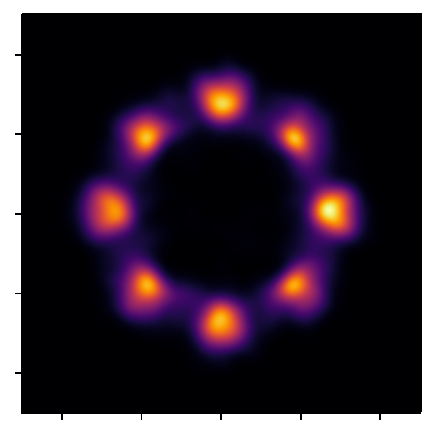}
	{\scriptsize Behavior Density (VAEs)}
	\end{minipage} 
    \begin{minipage}{0.23\linewidth}
		\centering
		\includegraphics[width=\linewidth]{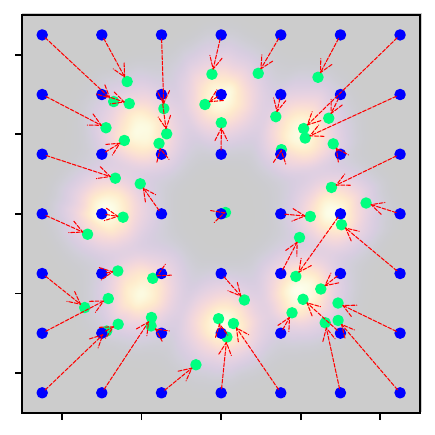}
		{\scriptsize BCQ \citep{bcq}}
	\end{minipage} 
    \begin{minipage}{0.23\linewidth}
		\centering
		\includegraphics[width=\linewidth]{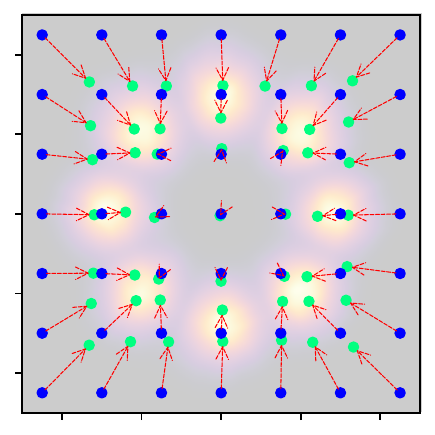}
		{\scriptsize BEAR \citep{bear}}
	\end{minipage} 
    \begin{minipage}{0.23\linewidth}
		\centering
		\includegraphics[width=\linewidth]{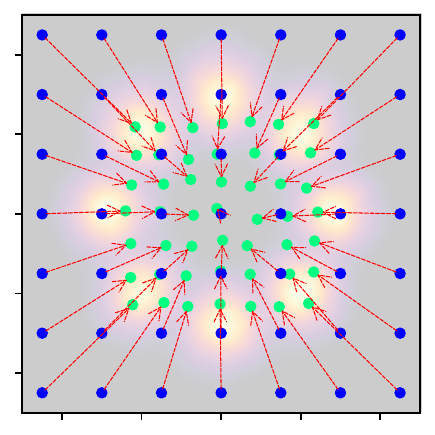}
		{\scriptsize TD3+BC \citep{td3+bc}}
	\end{minipage} \\
	\caption{\label{fig:compare_SRPO} Performance of other behavior regularization methods. See more results in Appendix \ref{appendix:toy_more}.}
\vspace{-0.2in}
\end{figure}

Compared with previous work \citep{diffusionql, qgpo, idql} that directly trains a diffusion policy for inference in evaluation, the main advantage of SRPO is its computational efficiency. SRPO entirely circumvents the computationally demanding action sampling scheme associated with the diffusion process. Yet, it still taps into the robust generative strengths of diffusion models, especially their ability to represent potentially diverse behavior datasets.

\section{Practical Algorithm}
\label{sec:practical_algo}
In this section, we derive a practical algorithm for applying SRPO in offline RL (Algorithm \ref{alg:srpo}). The algorithm includes three parts: implicit Q-learning (Section \ref{sec:practical_q}); diffusion-based behavior modeling (Section \ref{sec:practical_q}); and score-regularized policy extraction (Section \ref{sec:practical_policy}).
\subsection{Pretraining the Diffusion Behavior Model and Q-networks}
\label{sec:practical_q}
For Q-networks, we choose to use implicit Q-learning \citep{iql} to decouple critic training from actor training. The core ingredient of the training pipeline is expectile regression, which requires only sampling actions from existing datasets for bootstrapping:
\begin{equation}
\label{Eq:V_loss}
    \min_\zeta \gL_{V}(\zeta) =  \E_{(\vs,\va) \sim \mathcal{D}^\mu}[L_2^\tau(Q_{\phi}(\vs,\va) - V_\zeta(\vs))], \mbox{ where } L_2^\tau(\vu) = |\tau-\mathbbm{1}(\vu < 0)|\vu^2,
\end{equation}
\begin{equation}
\label{Eq:Q_loss}
    \min_\phi \gL_{Q}(\phi) =  \E_{(\vs,\va, \vs') \sim \mathcal{D}^\mu}\left[
        \|r(\vs, \va) + \gamma V_\zeta(\vs') - Q_{\phi}(\vs,\va)\|_2^2
    \right].
\end{equation}
When the expectile parameter $\tau \in (0, 1)$ is larger than 0.5, the asymmetric L2-objective $\gL_{V}(\zeta)$ would downweight suboptimal actions which have lower Q-values, removing the need of an explicit policy.

For behavior models, in order to represent the behavior distribution with high fidelity and estimate its score function, we follow previous work \citep{sfbc, idql} and train a conditional behavior cloning model:
\begin{equation}
\label{Eq:diffusion_loss}
    \min_\psi \gL_{\mu}(\psi) =  \E_{t,\epsilonv, (\vs,\va) \sim \mathcal{D}^\mu}\left[
        \|\epsilonv_\psi(\va_t| \vs,t) - \epsilonv\|_2^2
    \right]_{\va_t=\alpha_t \va + \sigma_t \epsilonv},
\end{equation}
\begin{wrapfigure}{R}{0.4\textwidth}
    \begin{minipage}{0.4\textwidth}
        \vspace{-9mm}
        \begin{algorithm}[H]
            \caption{SRPO}
            \begin{algorithmic}
            \label{alg:srpo}
                \STATE Initialize parameters $\psi$, $\zeta$, $\phi$, $\theta$.
                \STATE \small{\color{gray} \textit{// Critic training (IQL)}}   
                \FOR{each gradient step}
                \STATE $\zeta \leftarrow \zeta - \lambda_V \nabla_\zeta L_V(\zeta)$ (Eq. \ref{Eq:V_loss})
                \STATE $\phi \leftarrow \phi - \lambda_Q \nabla_{\phi} L_Q(\phi)$ (Eq. \ref{Eq:Q_loss})
                \ENDFOR
                \STATE \small{\color{gray} \textit{// Behavior training}}   
                \FOR{each gradient step}
                \STATE $\psi \leftarrow \psi - \lambda_{\mu} \nabla_\psi L_{\mu}(\psi)$ (Eq. \ref{Eq:diffusion_loss})
                \ENDFOR
                \STATE \small{\color{gray} \textit{// Policy extraction}}                
                \FOR{each gradient step}
                \STATE $\theta \leftarrow \theta + \lambda_\pi \nabla_\theta \gL^{\text{surr}}_{\pi}(\theta)$ (Eq. \ref{eq:final_policy_extraction_loss})
                \ENDFOR
            \end{algorithmic}
        \end{algorithm}
    \end{minipage}
    \vspace{-12mm}
\end{wrapfigure}

where $t \sim \gU(0, 1)$ and $\epsilonv\sim\gN(\bm{0},\mI)$. The model architecture of $\epsilonv_\psi$ is consistent with the one proposed by \citet{idql}, with the sole difference being that our model uses continuous-time inputs similar to \citet{qgpo} instead of discrete ones as used by \citet{diffusionql} and \citet{idql}.

Once we have finished training the behavior model $\epsilonv_\psi$, we can use it to estimate the score function of $\mu_t$, the diffused distribution of $\mu(\va| \vs)$ at time $t$. We have $\nabla_{\va_t} \log \mu_t(\va_t|\vs, t) = - \epsilonv_\psi^*(\va_t| \vs,t) / \sigma_t  \approx - \epsilonv_\psi(\va_t| \vs,t) / \sigma_t$ as shown by \citet{sde}.

\subsection{Policy Extraction from Pretrained Models}
\label{sec:practical_policy}

\begin{figure}[]
\centering
\begin{minipage}{0.03\linewidth}
\rotatebox{90}{\normalsize{density-map}}
\end{minipage}
\begin{minipage}{0.187\linewidth}
	\centering
        {\footnotesize diffusion $t=0.0$} 
	\includegraphics[width=\linewidth]{pics/toy_figures/8gaussians_behavior.pdf}
\end{minipage} 
\begin{minipage}{0.187\linewidth}
	\centering
 {\footnotesize diffusion $t=0.1$}
	\includegraphics[width=\linewidth]{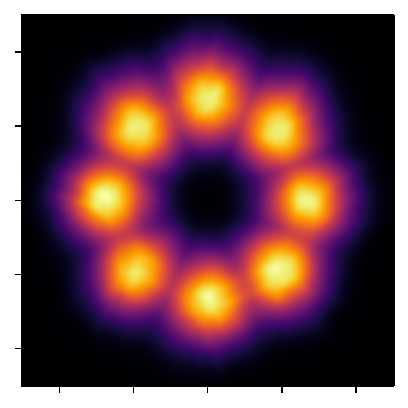}
\end{minipage} 
\begin{minipage}{0.187\linewidth}
	\centering
 {\footnotesize diffusion $t=0.3$}
	\includegraphics[width=\linewidth]{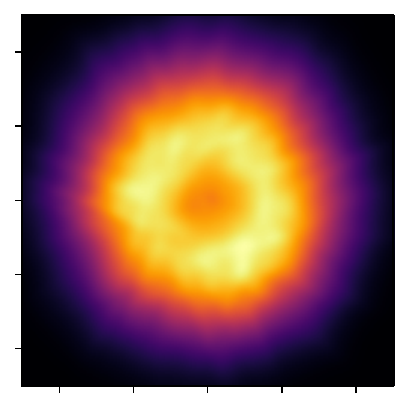}
\end{minipage} 
\begin{minipage}{0.187\linewidth}
	\centering
 {\footnotesize diffusion $t=1.0$}
	\includegraphics[width=\linewidth]{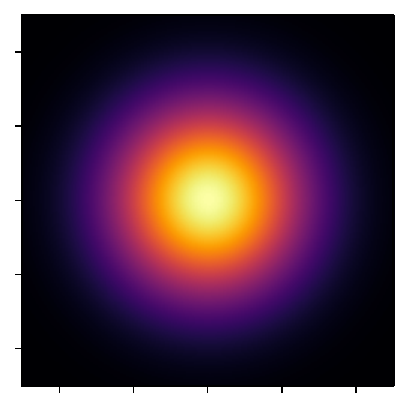}
\end{minipage}
\begin{minipage}{0.187\linewidth}
	\centering
 {\footnotesize \textbf{ensemble $t \in (0,1)$}}
	\includegraphics[width=\linewidth]{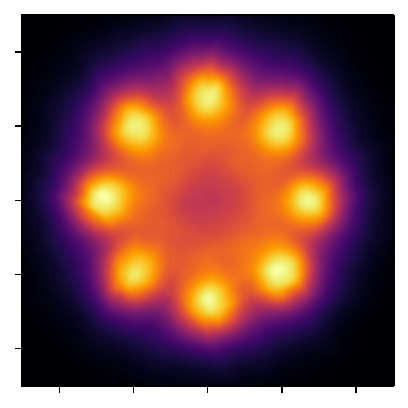}
\end{minipage} \\

\begin{minipage}{0.03\linewidth}
\rotatebox{90}{\normalsize{regularization}}
\end{minipage}
\begin{minipage}{0.187\linewidth}
	\centering
	\includegraphics[width=\linewidth]{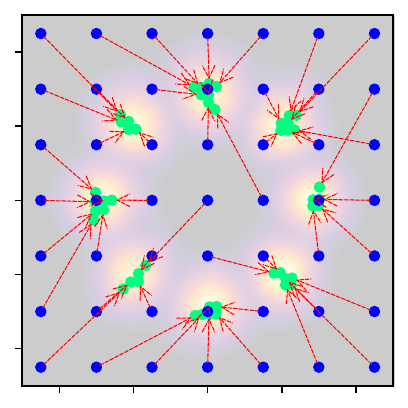}
\end{minipage} 
\begin{minipage}{0.187\linewidth}
	\centering
	\includegraphics[width=\linewidth]{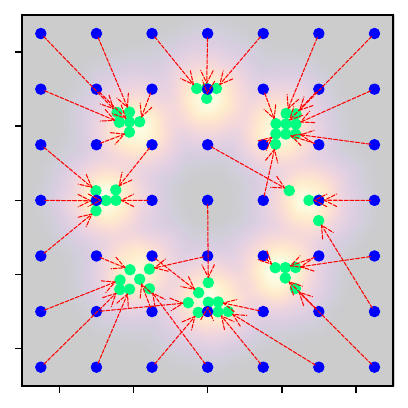}
\end{minipage} 
\begin{minipage}{0.187\linewidth}
	\centering
	\includegraphics[width=\linewidth]{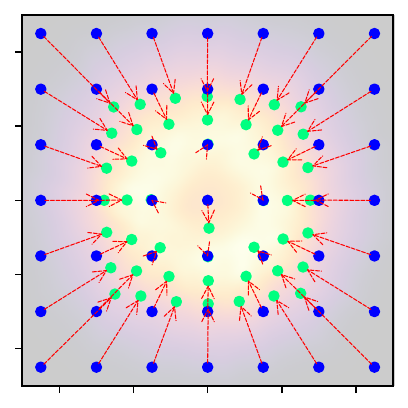}
\end{minipage} 
\begin{minipage}{0.187\linewidth}
	\centering
	\includegraphics[width=\linewidth]{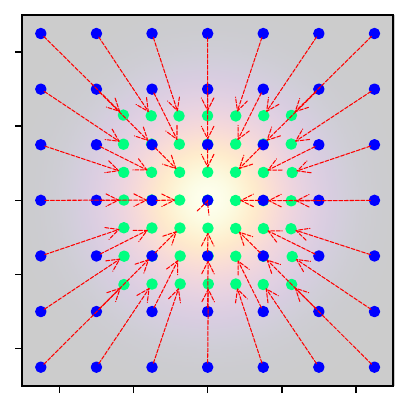}
\end{minipage}
\begin{minipage}{0.187\linewidth}
	\centering
	\includegraphics[width=\linewidth]{pics/toy_figures/8gaussians_transition.pdf}
\end{minipage} \\

	\caption{\label{fig:main_different} Empirical benefits of ensembling multiple diffusion times. See Remark \ref{remark1_label} in Appendix \ref{sec:analysis} for a detailed explanation.}
\end{figure}

The policy extraction scheme proposed in Section \ref{sec:SRPO} only leverages the pretrained diffusion behavior model $\epsilonv_\phi(\va_t|\vs, t)$ at time $t\rightarrow0$, where the behavior distribution $\mu$ has not been diffused. However, $\epsilonv_\phi(\va_t|\vs, t)$ is trained to represent a series of diffused behavior distributions $\mu_t$ at various times $t \in (0, 1)$. 
In order to exploit the generative capacity of $\epsilonv_\phi$, we replace the original training objective $\gL_{\pi}(\theta)$ with a new surrogate objective:
\begin{equation}
\label{eq:SRPO}
    \max_\theta \gL^{\text{surr}}_{\pi}(\theta) = \mathbb{E}_{\vs, \va \sim \pi_\theta} Q_\phi(\vs, \va)
     - \frac{1}{\beta} \mathbb{E}_{t, \vs} \omega(t) \frac{\sigma_t}{\alpha_t} \KL \left[\pi_{t, \theta}(\cdot |\vs) || \mu_t(\cdot |\vs) \right],
\end{equation}
where $t \sim \gU(0.02, 0.98)$, $\vs \sim \mathcal{D}^\mu$. Both $\mu_t$ and $\pi_{\theta, t}$ follow the same forward diffusion process in \Eqref{Eq:forward_diffusion}, where 
$\mu_{t}(\va_t|\vs) := \mathbb{E}_{\va \sim \mu(\cdot|\vs)} \gN(\va_t|\alpha_t\va,\sigma_t^2\mI)$, and $\pi_{\theta, t}(\va_t|\vs):=\mathbb{E}_{\va \sim \pi_\theta(\cdot|\vs)} \gN(\va_t|\alpha_t\va,\sigma_t^2\mI)$. 
$\omega(t)$ is a weighting function 
that adjusts the importance of each time $t$. We can nearly recover $\gL_{\pi}(\theta)$ by setting $\omega(t)$ to $\delta(t-0.02) \frac{\alpha_{0.02}}{\sigma_{0.02}}$ (ablation studies in Section \ref{sec:ablation}).

Empirically, the surrogate objective $\gL^{\text{surr}}_{\pi}(\theta)$ ensembles various diffused behavior policies $\mu_t$ to regularize training of the same parameterized policy $\pi_\theta$. This is supported by an observation (\textbf{Proposition} \ref{prop1} in Appendix \ref{sec:analysis}): $\argmin_\pi \KL \left[\pi_{t}(\cdot |\vs)   || \mu_t(\cdot |\vs) \right]=\argmin_\pi \KL \left[\pi(\cdot |\vs) || \mu(\cdot |\vs) \right]$. 

Similarly to Section \ref{sec:SRPO}, we can optimize \Eqref{eq:SRPO} by calculating its gradient:
\setcounter{proposition}{1}
\begin{proposition} (Proof in Appendix \ref{sec:analysis}) Given that $\pi_\theta$ is deterministic ($\va=\pi_\theta(\vs)$) such that $\pi_{\theta, t}$ is Gaussian ($\va_t = \alpha_t \va + \sigma_t\epsilonv$, \ $\epsilonv\sim\gN(\bm{0},\mI)$), the gradient for optimizing $\max \gL^{\text{surr}}_{\pi}(\theta)$ satisfies
{\small
\begin{equation}
    \label{eq:final_policy_extraction_loss}
    \nabla_{\theta} \gL^{\text{surr}}_{\pi}(\theta) \approx  \biggl[ \mathbb{E}_{\vs} \nabla_{\va} Q_\phi(\vs, \va)|_{\va=\pi_\theta(\vs)} - \frac{1}{\beta} \mathbb{E}_{t, \vs, \epsilonv} \textcolor{black}{\omega(t)} (\epsilonv_{\psi}(\va_t|\vs, t) \textcolor{blue}{\underbrace{- \epsilonv}_{\hspace{-6mm}\text{subtracted baseline}\hspace{-6mm}}})|_{\va_t=\alpha_t \pi_\theta(\vs) + \sigma_t \epsilonv}\bigg] \nabla_{\theta} \pi_\theta(\vs).
\end{equation}
}
\end{proposition}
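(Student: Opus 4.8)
The plan is to differentiate the surrogate objective $\gL^{\text{surr}}_{\pi}(\theta)$ from \Eqref{eq:SRPO} term by term, exploiting the fact that $\pi_\theta$ is deterministic so that the diffused policy $\pi_{\theta,t}(\cdot|\vs)=\gN(\cdot|\alpha_t\pi_\theta(\vs),\sigma_t^2\mI)$ is an isotropic Gaussian whose covariance is independent of $\theta$. The first (policy optimization) term $\mathbb{E}_{\vs}Q_\phi(\vs,\pi_\theta(\vs))$ is immediate: the chain rule gives $\mathbb{E}_{\vs}\nabla_{\va}Q_\phi(\vs,\va)|_{\va=\pi_\theta(\vs)}\nabla_\theta\pi_\theta(\vs)$, which already matches the first summand of the claim.

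The work lies in the behavior regularization term. First I would expand each KL divergence as a cross-entropy minus an entropy, $\KL[\pi_{\theta,t}\|\mu_t]=-\mathbb{E}_{\va_t\sim\pi_{\theta,t}}\log\mu_t(\va_t|\vs)-\mathcal{H}(\pi_{\theta,t}(\cdot|\vs))$. The key simplification is that $\mathcal{H}(\pi_{\theta,t})=\tfrac{d}{2}\log(2\pi e\sigma_t^2)$ depends only on the covariance $\sigma_t^2\mI$, hence is constant in $\theta$ and drops out of the gradient (mirroring the footnoted entropy-is-constant argument for the undiffused objective). Thus only the cross-entropy survives, and I would apply the reparameterization trick $\va_t=\alpha_t\pi_\theta(\vs)+\sigma_t\epsilonv$ to pull the gradient inside the expectation over the fixed noise $\epsilonv\sim\gN(\bm{0},\mI)$.

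Next I would invoke the chain rule once more to obtain $\nabla_\theta\log\mu_t(\va_t|\vs)=\alpha_t\,\nabla_{\va_t}\log\mu_t(\va_t|\vs)\,\nabla_\theta\pi_\theta(\vs)$, and then substitute the score identity $\nabla_{\va_t}\log\mu_t(\va_t|\vs)=-\epsilonv_\psi^*(\va_t|\vs,t)/\sigma_t\approx-\epsilonv_\psi(\va_t|\vs,t)/\sigma_t$ from \Eqref{Eq:diffusion_score}; this replacement is the sole source of the $\approx$ in the statement. At this point the three scalar factors collected along the way, namely the weight $\tfrac{\sigma_t}{\alpha_t}$ from the objective, the $\alpha_t$ from the reparameterized mean, and the $1/\sigma_t$ from the score identity, cancel exactly to leave $\omega(t)$. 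This cancellation is precisely why the weighting $\tfrac{\sigma_t}{\alpha_t}$ was inserted in \Eqref{eq:SRPO}, and verifying it cleanly is the step I expect to require the most care.

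Finally I would introduce the baseline. Adding $-\epsilonv$ inside the noise expectation leaves the gradient unchanged because $\mathbb{E}_{\epsilonv}[\epsilonv]=\bm{0}$ and the factor $\nabla_\theta\pi_\theta(\vs)$ does not depend on $\epsilonv$, so that $\mathbb{E}_{t,\vs,\epsilonv}\omega(t)\,\epsilonv\,\nabla_\theta\pi_\theta(\vs)=\bm{0}$; this is a standard control-variate argument, and the modification is exact in expectation while reducing the variance of the Monte Carlo estimator used in Algorithm \ref{alg:srpo}. Collecting the policy-optimization summand together with the regularization summand carrying the cancelled factors and the zero-mean baseline then yields exactly \Eqref{eq:final_policy_extraction_loss}.
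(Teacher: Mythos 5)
Your proof is correct, and its core mechanics match the paper's: reparameterize $\va_t=\alpha_t\pi_\theta(\vs)+\sigma_t\epsilonv$ so that $\partial\va_t/\partial\theta=\alpha_t\nabla_\theta\pi_\theta(\vs)$, substitute the score identity $\nabla_{\va_t}\log\mu_t(\va_t|\vs)=-\epsilonv^*(\va_t|\vs,t)/\sigma_t\approx-\epsilonv_\psi(\va_t|\vs,t)/\sigma_t$ (correctly identified as the sole source of the $\approx$), and verify the cancellation $\frac{\sigma_t}{\alpha_t}\cdot\frac{1}{\sigma_t}\cdot\alpha_t=1$ that leaves only $\omega(t)$. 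Where you genuinely diverge is in handling the $\log\pi_{\theta,t}$ half of the KL. You note that $\mathcal{H}(\pi_{\theta,t})=\frac{d}{2}\log(2\pi e\sigma_t^2)$ depends only on the covariance, hence is constant in $\theta$, drop it exactly, and then reintroduce $-\epsilonv$ a posteriori as a zero-mean control variate (valid since $\epsilonv$ is independent of $t,\vs$ and of $\nabla_\theta\pi_\theta(\vs)$). The paper instead differentiates the $-\log\pi_{\theta,t}$ term pathwise in \Eqref{eq:SDS_split}: the gradient splits into a ``policy score'' term and a ``parameter score'' term; the parameter score vanishes identically (for \emph{any} $\pi_{\theta,t}$, Gaussian or not), while the policy score evaluated at the reparameterized sample equals $-\epsilonv/\sigma_t$ and, after the same cancellation, produces the $-\epsilonv$ baseline organically --- the ``sticking the landing'' construction of \citet{roeder2017sticking}. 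The two routes are mathematically equivalent, but they buy different things: yours is more elementary, cleanly separates the exact gradient from the variance-reduced estimator, and sidesteps the explicit policy-score computation (where the paper's write-up in fact carries a sign/evaluation-point typo that your route never encounters); the paper's derivation explains \emph{why} this particular control variate is natural --- it is precisely the policy's own score contribution, which is what makes it correlated with $\epsilonv_\psi(\va_t|\vs,t)$ and hence effective at reducing variance rather than being an arbitrary zero-mean shift.
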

\setcounter{proposition}{0}
Note that we additionally subtract the action noise $\epsilonv$ from $\epsilonv_{\psi}(\va_t|\vs, t)$ in the above equation. This term does not influence the expected value of $\nabla_{\theta} \gL^{\text{surr}}_{\pi}(\theta)$ but could reduce the estimation variance since it is correlated with $\epsilonv_{\psi}(\va_t|\vs, t)$ (See Appendix \ref{sec:analysis}). We refer to it as the \emph{baseline} term because it is similar to the subtracted baseline in Policy Gradient \citep{rlbook} algorithms.

The idea of ensembling diffused behavior policies for regularization and subtracting the baseline term to reduce estimation variance both draw inspiration from the latest developments in text-to-3D research such as DreamFusion \citep{dreamfusion}. We elaborate more on the connection between the two work in Section \ref{sec:related} and ablate these techniques in the realm of continuous control in Section \ref{sec:ablation}. 
\section{Related Work}
\label{sec:related}
\paragraph{Behavior Regularization in Offline Reinforcement Learning.}
Behavior regularization can be achieved either implicitly or explicitly. 
Explicit methods usually necessitate the construction of a behavior model to regularize the learned policy. 
For example, TD3+BC \citep{td3+bc} implicitly views the behavior as Gaussians and introduces an auxiliary L2-loss term to realize the regularization. SBAC \citep{SBAC} and  Fisher-BRC \citep{kostrikov2021offline} leverage explicit Gaussian (mixture) behavior models. BCQ \citep{bcq} and BEAR \citep{bear} leverage VAE behavior models \citep{vae}. Diffusion-QL \citep{diffusionql} is similar to TD3+BC but swaps the auxiliary loss with a diffusion-centric objective. QGPO \citep{qgpo} views a pretrained diffusion behavior as the Bayesian prior in energy-guided sampling.



\paragraph{Diffusion Models in Offline Reinforcement Learning.}
Recent advancements in offline RL have identified diffusion models as an impactful tool. A primary strength of diffusion modeling lies in its robust generative capability, combined with a straightforward training pipeline  \citep{diffusion_beat_gan, xu2022geodiff, dreamfusion}. This renders it particularly suitable for modeling heterogeneous behavior datasets \citep{diffuser, diffusionql, dd, idql}, generating in-support actions for Q-learning \citep{arq, qgpo}, and representing multimodal policies \citep{sfbc, csbc}.

However, a significant concern for integrating diffusion models into RL is the considerable time taken for sampling. Various strategies have been proposed to address this challenge, including employing a parallel sampling scheme during both training and evaluation \citep{sfbc}, utilizing a specialized diffusion ODE solver such as DPM-solver \citep{lu2022dpm,qgpo}, adopting an approximate diffusion sampling scheme to minimize sampling steps required \citep{efficientdiffusionrl}, and crafting high-throughput network architectures \citep{idql}. While these techniques offer improvements, they don't entirely eliminate the need for iterative sampling.

\paragraph{Score Distillation Methods.}
Recent developments in text-to-3D generation have enabled the transformation of textual information into 3D content without requiring any 3D training data \citep{dreamfusion, prolificdreamer}. This is realized by distilling knowledge from text-to-image diffusion models. A representative method in this domain is DreamFusion \citep{dreamfusion}. It optimizes a 3D NeRF model \citep{nerf} by ensuring its projected 2D gradient follows the score direction of a large-scale, pretrained 2D diffusion model \citep{imagen}. Similar to DreamFusion, SRPO also employs a diffusion model to guide the training of a subsequent network. However, our method emphasizes score regularization as opposed to score distillation. The behavior score is additionally incorporated to regularize the Q-gradient instead of being the only supervising signal.

\section{Evaluation}
\subsection{D4RL Performance}
\begin{table*}[t]
\centering
\small
\resizebox{1.0\textwidth}{!}{%
\begin{tabular}{llcccccccccc}
\toprule
\multicolumn{1}{c}{\bf Dataset} & \multicolumn{1}{c}{\bf Environment} & \multicolumn{1}{c}{\bf BEAR} & \multicolumn{1}{c}{\bf TD3+BC} & \multicolumn{1}{c}{\bf IQL} & \multicolumn{1}{c}{\bf SfBC} & \multicolumn{1}{c}{\bf Diffuser} & \multicolumn{1}{c}{\bf Diffusion-QL}& \multicolumn{1}{c}{\bf QGPO} & \multicolumn{1}{c}{\bf IDQL} & \multicolumn{1}{c}{\bf SRPO (Ours)} \\
\midrule
Medium-Expert & HalfCheetah    &  $53.4$ &  $90.7$     &  $86.7$     & $\bf{92.6}$   & $79.8$       &  $\bf{96.8}$ & $\bf{93.5}$ & $\bf{95.9}$ & $\bf{92.2\pm3.0}$               \\
Medium-Expert & Hopper         &  $96.3$ &  $98.0$     &  $91.5$     & $\bf{108.6}$  & $\bf{107.2}$ & $\bf{111.1}$ & $\bf{108.0}$ & $\bf{108.6}$ & $100.1\pm13.9$           \\
Medium-Expert & Walker2d       & $40.1$  &  $\bf{110.1}$     & $\bf{109.6}$& $\bf{109.8}$  & $\bf{108.4}$ & $\bf{110.1}$ & $\bf{110.7}$ & $\bf{112.7}$ & $\bf{114.0\pm2.1}$           \\
\midrule
Medium        & HalfCheetah    &  $41.7$ &  $48.3$     &  $47.4 $    & $45.9$        & $44.2$       &  $51.1$ & $54.1$ & $51.0$ & $\bf{60.4\pm0.8}$           \\
Medium        & Hopper         &  $52.1$ &  $59.3$     &  $66.3$     & $57.1$        & $58.5$ & $90.5$ & $\bf{98.0}$ & $65.4$ & $\bf{95.5\pm2.0}$ \\
Medium        & Walker2d       &  $59.1$ &  $\bf{83.7}$     &  $78.3$     & $77.9$        & $79.7$ & $\bf{87.0}$ & $\bf{86.0}$ & $82.5$ & $\bf{84.4\pm4.4}$ \\
\midrule
Medium-Replay & HalfCheetah    &  $38.6$ &  $44.6$     &  $44.2$     &   $37.1$      & $42.2$ & $47.8$ & $47.6$ & $45.9$ & $\bf{51.4\pm3.4}$ \\
Medium-Replay & Hopper         &  $33.7$ &  $60.9$     &  $94.7$     &   $86.2$      & $\bf{101.3}$ & $\bf{100.7}$ & $\bf{96.9}$ & $92.1$ & $\bf{101.2\pm1.0}$ \\
Medium-Replay & Walker2d       &  $19.2$ &  $81.8$     &  $73.9$     &   $65.1$      & $61.2$ &  $\bf{95.5}$ & $84.4$ & $85.1$ & $84.6\pm7.1$ \\
\midrule
\multicolumn{2}{c}{\bf Average (Locomotion)}&  $51.9$ & $75.3$ & $76.9$ &   $75.6$ &      $75.3$ & $\bf{88.0}$ & $\bf{86.6}$ & $82.1$ & $\bf{87.1}$ \\
\midrule
Default       & AntMaze-umaze  &  $73.0$ & $78.6$ & $87.5$ & $92.0$   & - & $93.4$ & $\bf{96.4}$ & $\bf{94.0}$ & $\bf{97.1\pm2.7}$ \\
Diverse       & AntMaze-umaze  &  $61.0$ & $71.4$ & $62.2$ & $\bf{85.3}$   & - & $66.2$ & $74.4$ & $80.2$ & $\bf{82.1\pm10.8}$ \\
\midrule  
Play          & AntMaze-medium &  $0.0$ & $10.6$ & $71.2$ & $\bf{81.3}$   & - & $76.6$ & $\bf{83.6}$ & $\bf{84.5}$ & $\bf{80.7\pm7.1}$ \\
Diverse       & AntMaze-medium &  $8.0$ & $3.0$ & $70.0$ & $\bf{82.0}$   & - & $78.6$ & $\bf{83.8}$ & $\bf{84.8}$ & $75.0\pm12.3$ \\
\midrule
Play          & AntMaze-large  &  $0.0$ & $0.2$ & $39.6$ & $59.3$ & - & $46.4$ & $\bf{66.6}$ & $\bf{63.5}$ & $53.6\pm12.5$ \\
Diverse       & AntMaze-large  &  $0.0$ & $0.0$ & $47.5$ & $45.5$ & - & $56.6$ &  $\bf{64.8}$ & $\bf{67.9}$ & $53.6\pm6.3$ \\
\midrule
\multicolumn{2}{c}{\bf Average (AntMaze)}&  $23.7$ & $27.3$ & $63.0$ &   $74.2$&   -   &  $69.6$ & $\bf{78.3}$ & $\bf{79.1}$ & $73.6$ \\
\bottomrule
\end{tabular}
}
\caption{Evaluation numbers of D4RL benchmarks (normalized as suggested by \citet{d4rl}). We report mean ± standard deviation of algorithm performance across 6 random seeds at the end of training. Numbers within 5 \% of the maximum in every individual task are highlighted.}
\label{tbl:rl_results}
\vspace{-0.2in}
\end{table*}

In Table \ref{tbl:rl_results}, we evaluate the D4RL performance \citep{d4rl} of SRPO against other offline RL algorithms. Our chosen benchmarks include conventional methods like BEAR \citep{bear}, TD3+BC \citep{td3+bc}, and IQL \citep{iql}, which feature extracting a Gaussian/Dirac policy for evaluation. We also look at newer diffusion-based offline RL techniques, such as Diffuser \citep{diffuser}, DIffusion-QL \citep{diffusionql}, SfBC \citep{sfbc}, QGPO \citep{qgpo}, and IDQL \citep{idql}. These methods tend to be more computationally intensive but generally offer better results.

Of all the baselines, our comparison with IDQL is particularly informative. This is because SRPO shares a virtually identical training pipeline and model architecture for critic and behavior models with IDQL, as is deliberately crafted. The most significant distinction lies in their approaches for extracting policy: IDQL skips the policy extraction step, choosing to evaluate directly with the behavior policy, using a selecting-from-behavior-candidates technique \citep{sfbc}. In contrast, SRPO extracts a Dirac policy from behavior and critic models. 

\begin{wrapfigure}{r}{0.49\textwidth}
    \centering
    \vspace{-3mm}
    \includegraphics[width=0.49\linewidth]{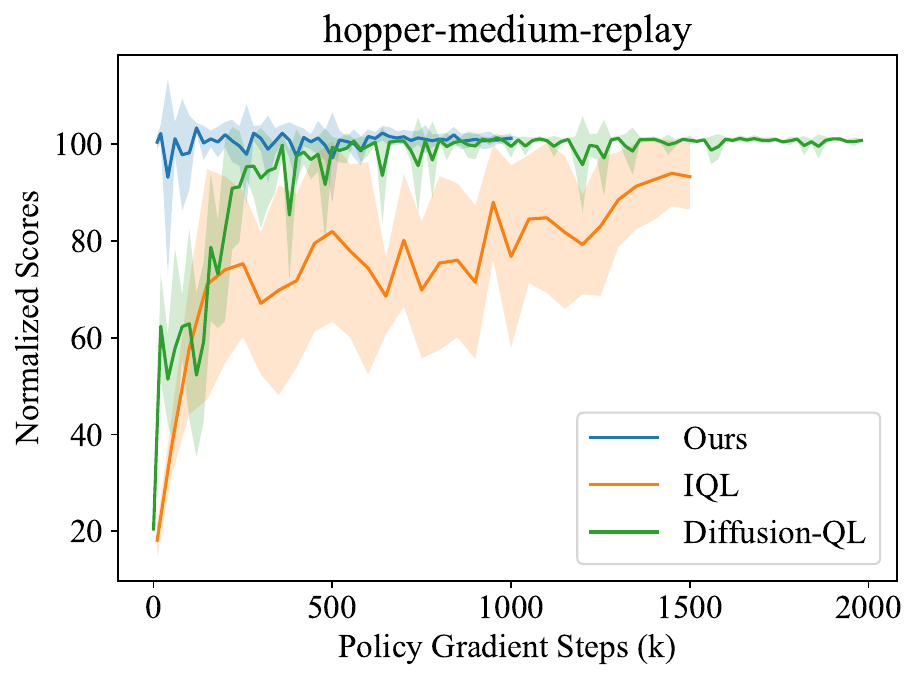}
    \includegraphics[width=0.49\linewidth]{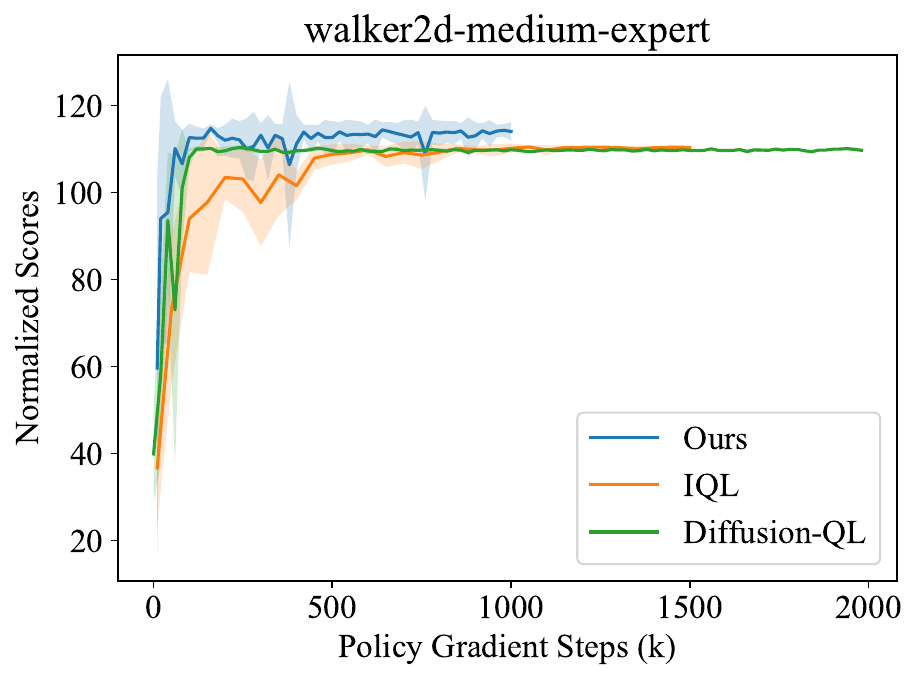} \\
    \includegraphics[width=0.49\linewidth]{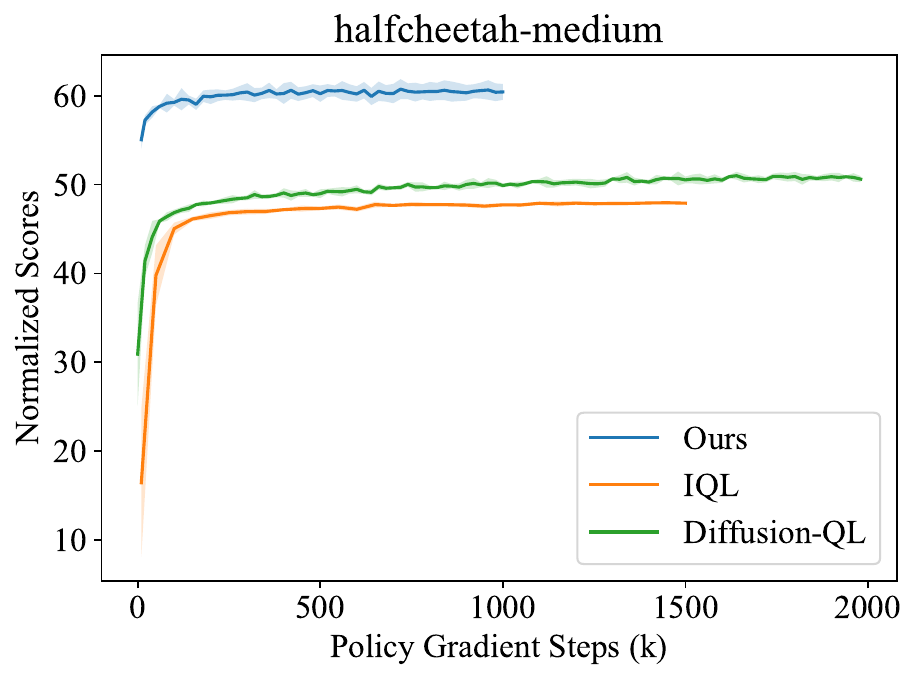}
    \includegraphics[width=0.49\linewidth]{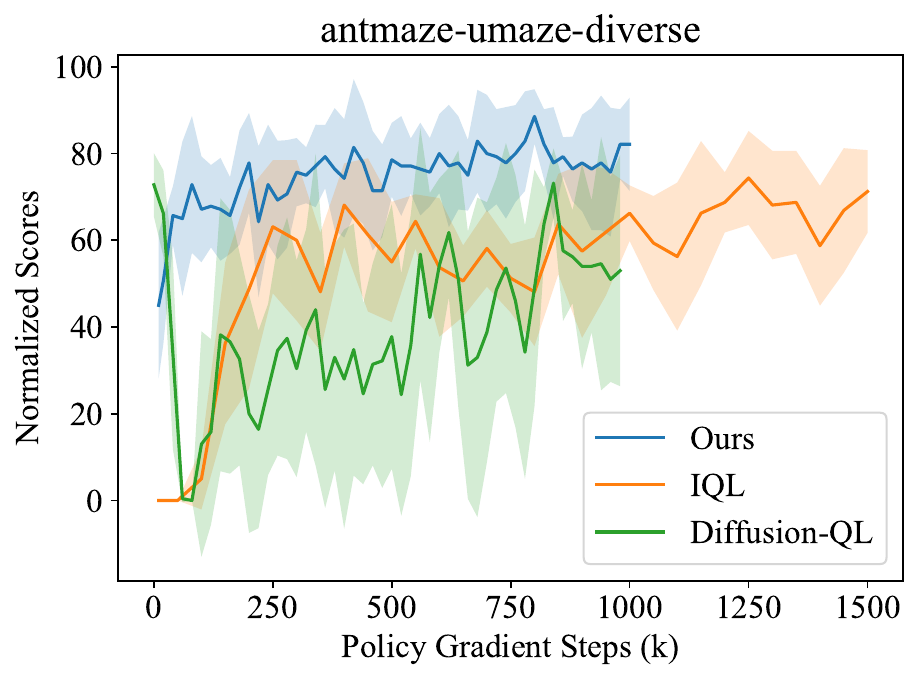} \\
    \caption{Training curves of SRPO (ours) and several baselines. See complete experimental results in Appendix \ref{sec:training_curves}.}
    \label{fig:sampled_plots}
    \vspace{-1mm}
\end{wrapfigure}

Overall, SRPO consistently surpasses referenced baselines that also utilize a Gaussian (or Dirac) inference policy, leading by large margins in the majority of tasks. It also comes close to matching the benchmarks set by other state-of-the-art diffusion-based methods, such as Diffusion-QL and IDQL, though it features a much simpler inference policy. Moreover, in the case of the convergence rate and training stability, we showcase training plots of SRPO alongside several baselines in Figure \ref{fig:sampled_plots}. Results also suggest that compared with both weighted regression methods like IQL and reverse-KL-based optimization strategies like Diffusion-QL, SRPO exhibits more favorable attributes. We attribute the performance gain of SRPO to the utilization of diffusion behavior modeling, which facilitates a more effective realization of the training formulation. 
\newpage
\subsection{Computational Efficiency}
\begin{wrapfigure}{r}{0.49\textwidth}
    \centering
    \vspace{-5mm}
    \includegraphics[width=0.49\linewidth]{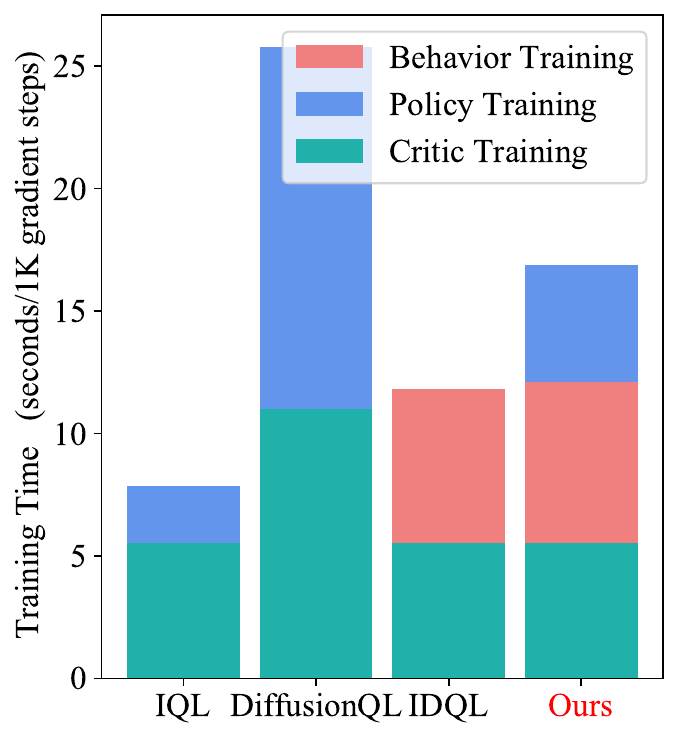}
    \includegraphics[width=0.49\linewidth]{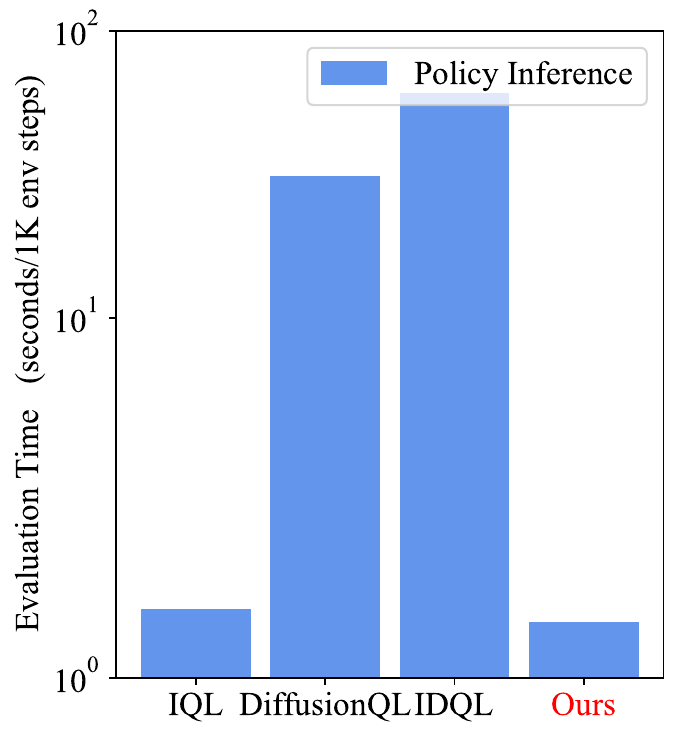}
    \caption{Training and inference (evaluation) time required for different algorithms in D4RL locomotion tasks. All experiments are conducted with the same PyTorch backend and the same computing hardware setup.}
    \label{fig:time}
    \vspace{-3mm}
\end{wrapfigure}

As indicated by Figure \ref{fig:computation} and \ref{fig:time}, SRPO maintains high computational efficiency, especially fast inference speed, while enabling the use of a powerful diffusion model. Notably, the action sampling speed of SRPO is 25 to 1000 times faster than that of other diffusion-based methods. Additionally, its computational FLOPS amount to only 0.25\% to 0.01\% of other methods. This makes SRPO ideal for computation-sensitive contexts such as robotics. It also speeds up experimentation, since policy evaluation typically consumes over half of the experiment duration for previous diffusion-based approaches. This efficiency stems from SRPO's design, which completely avoids diffusion sampling throughout both training and evaluation procedures.

\subsection{Ablation Studies}
\label{sec:ablation}
\begin{wrapfigure}{r}{0.5\textwidth}
    \centering
    \vspace{-5mm}
    \includegraphics[width=\linewidth]{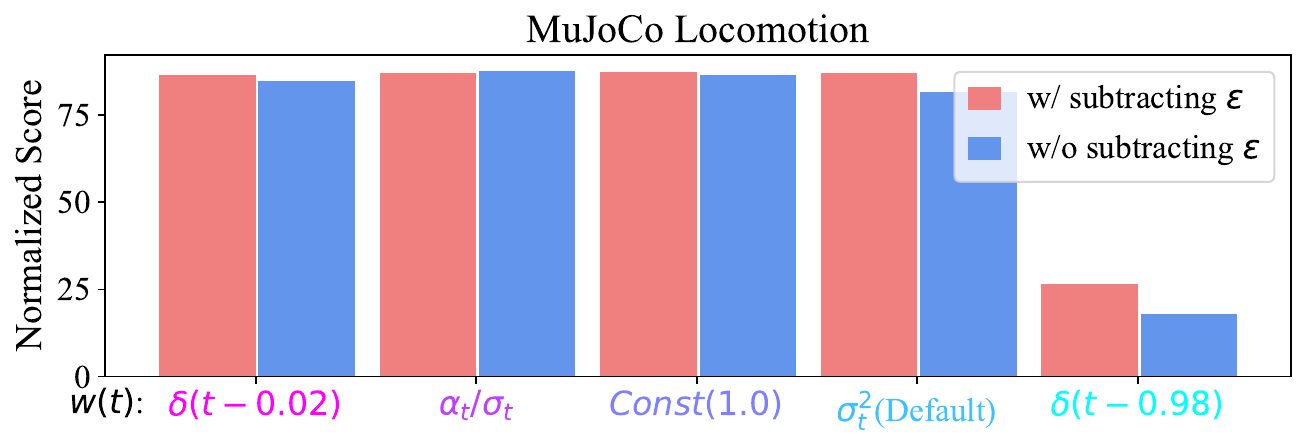}
    \includegraphics[width=\linewidth]{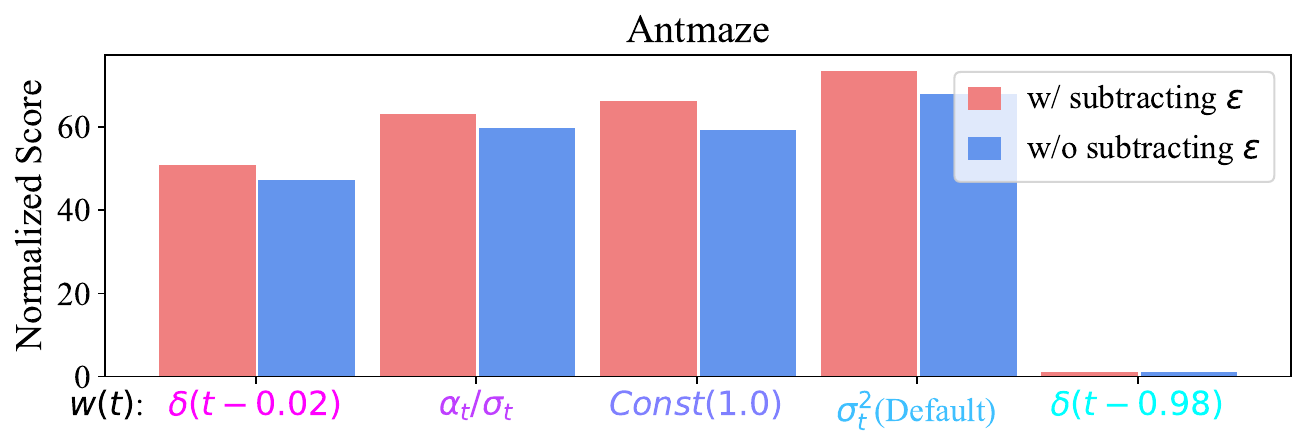}
    \includegraphics[width=\linewidth]{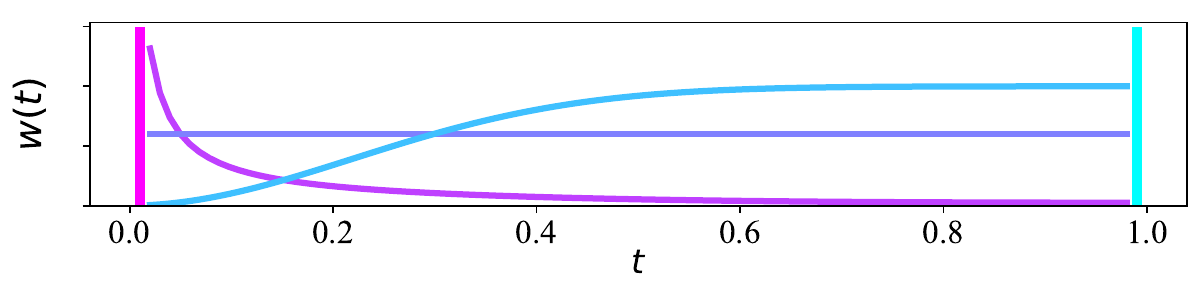}
    \vspace{-5mm}
    \caption{Ablation for implementation details of \Eqref{eq:final_policy_extraction_loss}. All results are averaged under 6 random seeds.}
    \label{fig:ablation}
    \vspace{-8mm}
\end{wrapfigure}

\paragraph{Weighting function $\omega(t)$.} 
As is explained in Section \ref{sec:practical_policy}, if $\omega(t) \propto \delta(t-0.02)$, the surrogate objective nearly recovers the original one, which could guarantee convergence to the wanted optimal policy $\pi^*$. Adjusting $\omega(t)$ to ensemble diffused behavior score at various times $t\in(0,1)$ might yield a smoother and more robust gradient landscape. However, it biases the original training objective. In an extreme case where $\omega(t)\propto\delta(t-0.98)$, the estimated behavior score almost becomes the score function of pure Gaussian distribution, making it entirely helpless. From Figure \ref{fig:ablation}, we empirically observe that Antmaze tasks are sensitive to variation of $\omega(t)$, while Locomotion tasks are not. Overall, we find $\omega(t)=\sigma_t^2$ is a suitable choice for all tested tasks and choose it as the default hyperparameter throughout our experiments. This hyperparameter choice is consistent with previous literature \citep{dreamfusion}.

\paragraph{Subtracting baselines $\epsilonv$.} In Figure \ref{fig:ablation}, we show that subtracting baselines $\epsilonv$ from the estimated behavior gradient, as is described in \Eqref{eq:final_policy_extraction_loss}, consistently offers slightly better performance.

\paragraph{Temperature coefficient $\beta$ and others.} Varying $\beta$ directly controls the conservativness of policy, and thus influences the performance. Experimental results with respect to $\beta$ and other implementation choices are presented in Appendix \ref{sec:details}.

\section{Conclusion}
In this paper, we introduce Score Regularized Policy Optimization (SRPO), an innovative offline RL algorithm harnessing the capabilities of diffusion models while circumventing the time-consuming diffusion sampling scheme. SRPO tackles the behavior-regularized policy optimization problem and provides a computationally efficient way to realize behavior regularization through diffusion behavior modeling. The fusion of SRPO with techniques like implicit Q-learning can further solidify its application in computationally sensitive domains, such as robotics. 
\newpage
\section*{Reproducibility}
To ensure that our work is reproducible, we submit the source code as supplementary material. Reported experimental numbers are averaged under multiple random seeds. We provide implementation details of our work in Appendix \ref{sec:details}.

\section*{Acknowledgement}

This work was supported by the National Key Research and Development Program of China (No. 2020AAA0106302), NSFC Projects (Nos.~62350080, 92370124, 92248303, 62106123, 61972224), BNRist (BNR2022RC01006), Tsinghua Institute for Guo Qiang, and the High Performance Computing Center, Tsinghua University. J.Z. was also supported by the New Cornerstone Science Foundation through the XPLORER PRIZE.

\bibliography{iclr2024_conference}

\begin{thebibliography}{34}
\providecommand{\natexlab}[1]{#1}
\providecommand{\url}[1]{\texttt{#1}}
\expandafter\ifx\csname urlstyle\endcsname\relax
  \providecommand{\doi}[1]{doi: #1}\else
  \providecommand{\doi}{doi: \begingroup \urlstyle{rm}\Url}\fi

\bibitem[Ajay et~al.(2023)Ajay, Du, Gupta, Tenenbaum, Jaakkola, and Agrawal]{dd}
Anurag Ajay, Yilun Du, Abhi Gupta, Joshua~B. Tenenbaum, Tommi~S. Jaakkola, and Pulkit Agrawal.
\newblock Is conditional generative modeling all you need for decision making?
\newblock In \emph{The Eleventh International Conference on Learning Representations}, 2023.

\bibitem[Chen et~al.(2023)Chen, Lu, Ying, Su, and Zhu]{sfbc}
Huayu Chen, Cheng Lu, Chengyang Ying, Hang Su, and Jun Zhu.
\newblock Offline reinforcement learning via high-fidelity generative behavior modeling.
\newblock In \emph{The Eleventh International Conference on Learning Representations}, 2023.

\bibitem[Dhariwal \& Nichol(2021)Dhariwal and Nichol]{diffusion_beat_gan}
Prafulla Dhariwal and Alexander~Quinn Nichol.
\newblock Diffusion models beat {GAN}s on image synthesis.
\newblock In \emph{Advances in Neural Information Processing Systems}, 2021.

\bibitem[Fu et~al.(2020)Fu, Kumar, Nachum, Tucker, and Levine]{d4rl}
Justin Fu, Aviral Kumar, Ofir Nachum, George Tucker, and Sergey Levine.
\newblock D4rl: Datasets for deep data-driven reinforcement learning.
\newblock \emph{arXiv preprint arXiv:2004.07219}, 2020.

\bibitem[Fujimoto \& Gu(2021)Fujimoto and Gu]{td3+bc}
Scott Fujimoto and Shixiang~Shane Gu.
\newblock A minimalist approach to offline reinforcement learning.
\newblock \emph{Advances in neural information processing systems}, 34:\penalty0 20132--20145, 2021.

\bibitem[Fujimoto et~al.(2019)Fujimoto, Meger, and Precup]{bcq}
Scott Fujimoto, David Meger, and Doina Precup.
\newblock Off-policy deep reinforcement learning without exploration.
\newblock In \emph{Proceedings of the 36th International Conference on Machine Learning}, 2019.

\bibitem[Goo \& Niekum(2022)Goo and Niekum]{arq}
Wonjoon Goo and Scott Niekum.
\newblock Know your boundaries: The necessity of explicit behavioral cloning in offline rl.
\newblock \emph{arXiv preprint arXiv:2206.00695}, 2022.

\bibitem[Hansen-Estruch et~al.(2023)Hansen-Estruch, Kostrikov, Janner, Kuba, and Levine]{idql}
Philippe Hansen-Estruch, Ilya Kostrikov, Michael Janner, Jakub~Grudzien Kuba, and Sergey Levine.
\newblock Idql: Implicit q-learning as an actor-critic method with diffusion policies.
\newblock \emph{arXiv preprint arXiv:2304.10573}, 2023.

\bibitem[Ho et~al.(2020)Ho, Jain, and Abbeel]{diffusion}
Jonathan Ho, Ajay Jain, and Pieter Abbeel.
\newblock Denoising diffusion probabilistic models.
\newblock In \emph{Advances in Neural Information Processing Systems}, 2020.

\bibitem[Janner et~al.(2022)Janner, Du, Tenenbaum, and Levine]{diffuser}
Michael Janner, Yilun Du, Joshua Tenenbaum, and Sergey Levine.
\newblock Planning with diffusion for flexible behavior synthesis.
\newblock In \emph{International Conference on Machine Learning}, 2022.

\bibitem[Kang et~al.(2023)Kang, Ma, Du, Pang, and Yan]{efficientdiffusionrl}
Bingyi Kang, Xiao Ma, Chao Du, Tianyu Pang, and Shuicheng Yan.
\newblock Efficient diffusion policies for offline reinforcement learning.
\newblock \emph{arXiv preprint arXiv:2305.20081}, 2023.

\bibitem[Kingma \& Welling(2014)Kingma and Welling]{vae}
Diederik~P. Kingma and Max Welling.
\newblock Auto-encoding variational bayes.
\newblock In \emph{2nd International Conference on Learning Representations}, 2014.

\bibitem[Kostrikov et~al.(2021)Kostrikov, Fergus, Tompson, and Nachum]{kostrikov2021offline}
Ilya Kostrikov, Rob Fergus, Jonathan Tompson, and Ofir Nachum.
\newblock Offline reinforcement learning with fisher divergence critic regularization.
\newblock In \emph{International Conference on Machine Learning}, pp.\  5774--5783. PMLR, 2021.

\bibitem[Kostrikov et~al.(2022)Kostrikov, Nair, and Levine]{iql}
Ilya Kostrikov, Ashvin Nair, and Sergey Levine.
\newblock Offline reinforcement learning with implicit {Q}-learning.
\newblock In \emph{International Conference on Learning Representations}, 2022.

\bibitem[Kumar et~al.(2019)Kumar, Fu, Soh, Tucker, and Levine]{bear}
Aviral Kumar, Justin Fu, Matthew Soh, George Tucker, and Sergey Levine.
\newblock Stabilizing off-policy q-learning via bootstrapping error reduction.
\newblock \emph{Advances in Neural Information Processing Systems}, 2019.

\bibitem[Lu et~al.(2022)Lu, Zhou, Bao, Chen, Li, and Zhu]{lu2022dpm}
Cheng Lu, Yuhao Zhou, Fan Bao, Jianfei Chen, Chongxuan Li, and Jun Zhu.
\newblock Dpm-solver: A fast ode solver for diffusion probabilistic model sampling in around 10 steps.
\newblock \emph{arXiv preprint arXiv:2206.00927}, 2022.

\bibitem[Lu et~al.(2023)Lu, Chen, Chen, Su, Li, and Zhu]{qgpo}
Cheng Lu, Huayu Chen, Jianfei Chen, Hang Su, Chongxuan Li, and Jun Zhu.
\newblock Contrastive energy prediction for exact energy-guided diffusion sampling in offline reinforcement learning.
\newblock In \emph{Proceedings of the 40th International Conference on Machine Learning}, 2023.

\bibitem[Mildenhall et~al.(2021)Mildenhall, Srinivasan, Tancik, Barron, Ramamoorthi, and Ng]{nerf}
Ben Mildenhall, Pratul~P Srinivasan, Matthew Tancik, Jonathan~T Barron, Ravi Ramamoorthi, and Ren Ng.
\newblock Nerf: Representing scenes as neural radiance fields for view synthesis.
\newblock \emph{Communications of the ACM}, 65\penalty0 (1):\penalty0 99--106, 2021.

\bibitem[Pearce et~al.(2022)Pearce, Rashid, Kanervisto, Bignell, Sun, Georgescu, Macua, Tan, Momennejad, Hofmann, et~al.]{csbc}
Tim Pearce, Tabish Rashid, Anssi Kanervisto, David Bignell, Mingfei Sun, Raluca Georgescu, Sergio~Valcarcel Macua, Shan~Zheng Tan, Ida Momennejad, Katja Hofmann, et~al.
\newblock Imitating human behaviour with diffusion models.
\newblock In \emph{Deep Reinforcement Learning Workshop NeurIPS}, 2022.

\bibitem[Peng et~al.(2019)Peng, Kumar, Zhang, and Levine]{awr}
Xue~Bin Peng, Aviral Kumar, Grace Zhang, and Sergey Levine.
\newblock Advantage-weighted regression: Simple and scalable off-policy reinforcement learning.
\newblock \emph{arXiv preprint arXiv:1910.00177}, 2019.

\bibitem[Peters et~al.(2010)Peters, Mulling, and Altun]{rwr}
Jan Peters, Katharina Mulling, and Yasemin Altun.
\newblock Relative entropy policy search.
\newblock In \emph{Twenty-Fourth AAAI Conference on Artificial Intelligence}, 2010.

\bibitem[Poole et~al.(2023)Poole, Jain, Barron, and Mildenhall]{dreamfusion}
Ben Poole, Ajay Jain, Jonathan~T. Barron, and Ben Mildenhall.
\newblock Dreamfusion: Text-to-3d using 2d diffusion.
\newblock In \emph{The Eleventh International Conference on Learning Representations}, 2023.

\bibitem[Roeder et~al.(2017)Roeder, Wu, and Duvenaud]{roeder2017sticking}
Geoffrey Roeder, Yuhuai Wu, and David~K Duvenaud.
\newblock Sticking the landing: Simple, lower-variance gradient estimators for variational inference.
\newblock \emph{Advances in Neural Information Processing Systems}, 30, 2017.

\bibitem[Saharia et~al.(2022)Saharia, Chan, Saxena, Li, Whang, Denton, Ghasemipour, Gontijo~Lopes, Karagol~Ayan, Salimans, et~al.]{imagen}
Chitwan Saharia, William Chan, Saurabh Saxena, Lala Li, Jay Whang, Emily~L Denton, Kamyar Ghasemipour, Raphael Gontijo~Lopes, Burcu Karagol~Ayan, Tim Salimans, et~al.
\newblock Photorealistic text-to-image diffusion models with deep language understanding.
\newblock \emph{Advances in Neural Information Processing Systems}, 35:\penalty0 36479--36494, 2022.

\bibitem[Sohl-Dickstein et~al.(2015)Sohl-Dickstein, Weiss, Maheswaranathan, and Ganguli]{sohl2015deep}
Jascha Sohl-Dickstein, Eric Weiss, Niru Maheswaranathan, and Surya Ganguli.
\newblock Deep unsupervised learning using nonequilibrium thermodynamics.
\newblock In \emph{International Conference on Machine Learning}, pp.\  2256--2265, 2015.

\bibitem[Song et~al.(2021)Song, Sohl-Dickstein, Kingma, Kumar, Ermon, and Poole]{sde}
Yang Song, Jascha Sohl-Dickstein, Diederik~P Kingma, Abhishek Kumar, Stefano Ermon, and Ben Poole.
\newblock Score-based generative modeling through stochastic differential equations.
\newblock In \emph{International Conference on Learning Representations}, 2021.

\bibitem[Sutton \& Barto(1998)Sutton and Barto]{rlbook}
Richard~S. Sutton and Andrew~G. Barto.
\newblock \emph{Introduction to Reinforcement Learning}.
\newblock MIT Press, Cambridge, MA, USA, 1st edition, 1998.
\newblock ISBN 0262193981.

\bibitem[Wang et~al.(2023{\natexlab{a}})Wang, Hunt, and Zhou]{diffusionql}
Zhendong Wang, Jonathan~J Hunt, and Mingyuan Zhou.
\newblock Diffusion policies as an expressive policy class for offline reinforcement learning.
\newblock In \emph{The Eleventh International Conference on Learning Representations}, 2023{\natexlab{a}}.

\bibitem[Wang et~al.(2023{\natexlab{b}})Wang, Lu, Wang, Bao, Li, Su, and Zhu]{prolificdreamer}
Zhengyi Wang, Cheng Lu, Yikai Wang, Fan Bao, Chongxuan Li, Hang Su, and Jun Zhu.
\newblock Prolificdreamer: High-fidelity and diverse text-to-3d generation with variational score distillation.
\newblock \emph{arXiv preprint arXiv:2305.16213}, 2023{\natexlab{b}}.

\bibitem[Wu et~al.(2022)Wu, Wu, Qiu, Wang, and Long]{spot}
Jialong Wu, Haixu Wu, Zihan Qiu, Jianmin Wang, and Mingsheng Long.
\newblock Supported policy optimization for offline reinforcement learning.
\newblock \emph{Advances in Neural Information Processing Systems}, 2022.

\bibitem[Wu et~al.(2019)Wu, Tucker, and Nachum]{brac}
Yifan Wu, George Tucker, and Ofir Nachum.
\newblock Behavior regularized offline reinforcement learning.
\newblock \emph{arXiv preprint arXiv:1911.11361}, 2019.

\bibitem[Xu et~al.(2021)Xu, Zhan, Li, and Yin]{SBAC}
Haoran Xu, Xianyuan Zhan, Jianxiong Li, and Honglei Yin.
\newblock Offline reinforcement learning with soft behavior regularization.
\newblock \emph{arXiv preprint arXiv:2110.07395}, 2021.

\bibitem[Xu et~al.(2022)Xu, Yu, Song, Shi, Ermon, and Tang]{xu2022geodiff}
Minkai Xu, Lantao Yu, Yang Song, Chence Shi, Stefano Ermon, and Jian Tang.
\newblock Geodiff: A geometric diffusion model for molecular conformation generation.
\newblock In \emph{International Conference on Learning Representations}, 2022.

\bibitem[Yue et~al.(2022)Yue, Kang, Ma, Xu, Huang, and Yan]{yue2022boosting}
Yang Yue, Bingyi Kang, Xiao Ma, Zhongwen Xu, Gao Huang, and Shuicheng Yan.
\newblock Boosting offline reinforcement learning via data rebalancing.
\newblock \emph{arXiv preprint arXiv:2210.09241}, 2022.

\end{thebibliography}
\bibliographystyle{iclr2024_conference}

\newpage
\appendix

\section{Complete 2D experiments}
\label{appendix:toy_more}
\begin{figure}[!hbt]
\centering

\begin{minipage}{0.03\linewidth}
\rotatebox{90}{
\normalsize{(g) TD3+BC \ \ \   \  \   \  (f) BEAR   \ \ \ \ \  \ \ \ \  (e) BCQ \ \ \  (d) \textbf{SRPO (ours)} \ \    \ \ \  (c) VAE \ \ \    \ \ \ \ \  \  (b) \textbf{diffusion}  \ \ \ \ \    \ \ \  \  (a) data \ \ \ \ \ \ }
}
\end{minipage}
\begin{minipage}{0.155\linewidth}
	\centering
        8gaussians\\
	\includegraphics[width=\linewidth]{pics/toy_figures/8gaussians_dataset.pdf}\\
	\includegraphics[width=\linewidth]{pics/toy_figures/8gaussians_behavior.pdf}\\
	\includegraphics[width=\linewidth]{pics/toy_figures/8gaussians_VAE.pdf}\\
	\includegraphics[width=\linewidth]{pics/toy_figures/8gaussians_transition.pdf}\\
	\includegraphics[width=\linewidth]{pics/toy_figures/8gaussians_BCQ.pdf}\\
	\includegraphics[width=\linewidth]{pics/toy_figures/8gaussians_BEAR.pdf}\\
	\includegraphics[width=\linewidth]{pics/toy_figures/8gaussians_TD3BC.pdf}\\
\end{minipage} 
\begin{minipage}{0.155\linewidth}
	\centering
        swissroll\\
	\includegraphics[width=\linewidth]{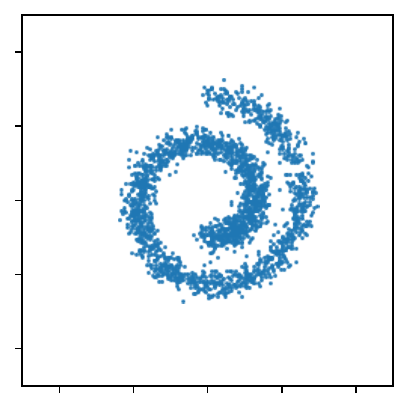}\\
	\includegraphics[width=\linewidth]{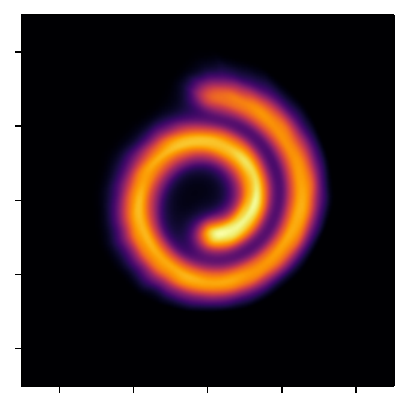}\\
	\includegraphics[width=\linewidth]{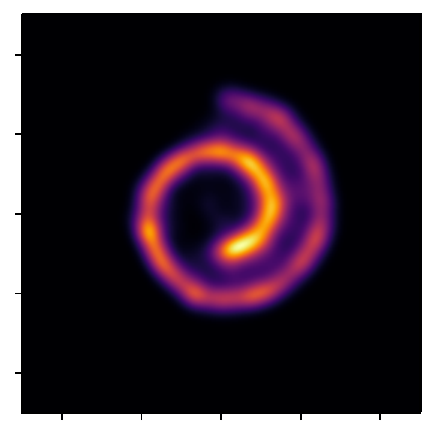}\\
	\includegraphics[width=\linewidth]{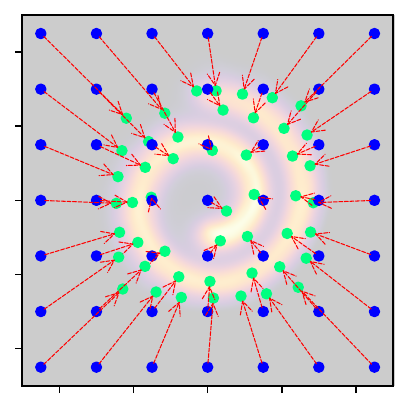}\\
	\includegraphics[width=\linewidth]{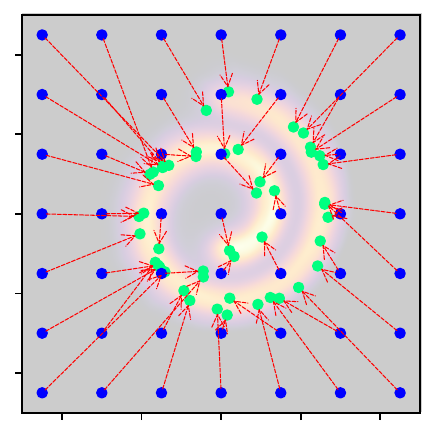}\\
	\includegraphics[width=\linewidth]{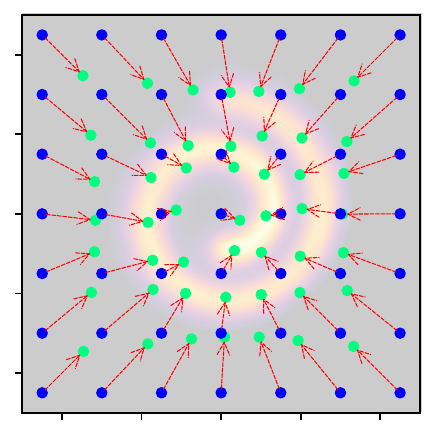}\\
	\includegraphics[width=\linewidth]{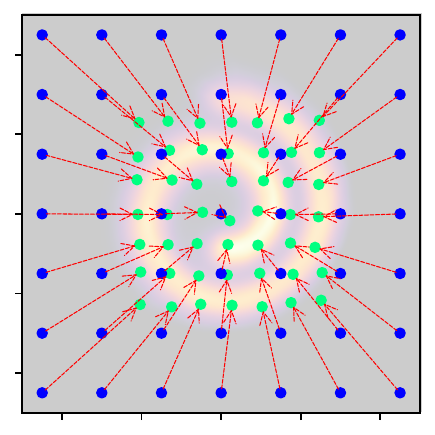}\\
\end{minipage} 
\begin{minipage}{0.155\linewidth}
	\centering
        checkboard\\
	\includegraphics[width=\linewidth]{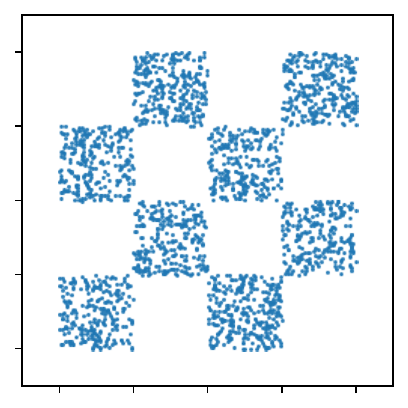}\\
	\includegraphics[width=\linewidth]{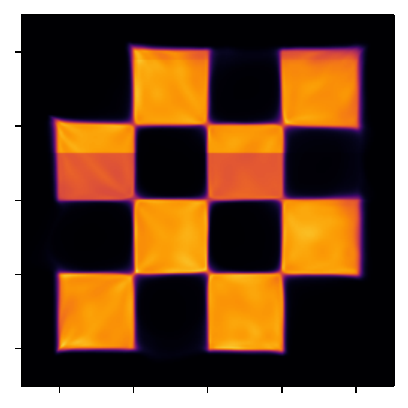}\\
	\includegraphics[width=\linewidth]{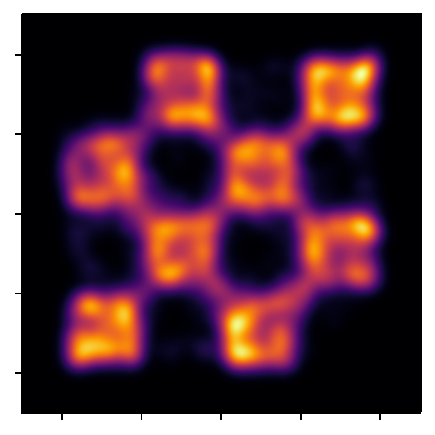}\\
	\includegraphics[width=\linewidth]{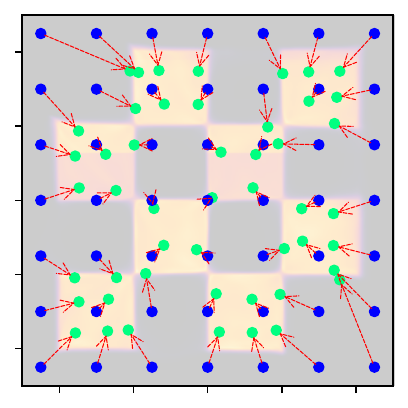}\\
	\includegraphics[width=\linewidth]{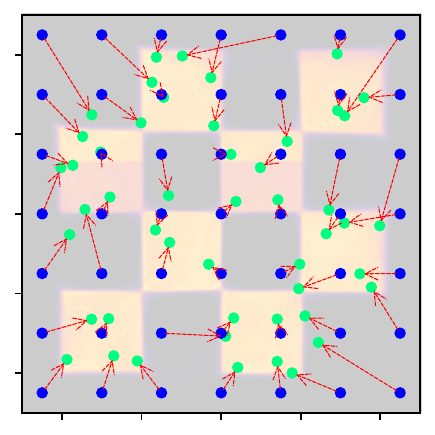}\\
	\includegraphics[width=\linewidth]{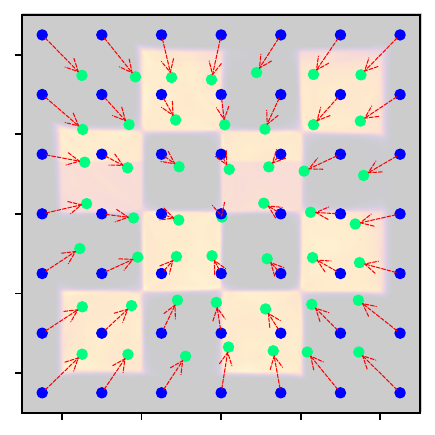}\\
	\includegraphics[width=\linewidth]{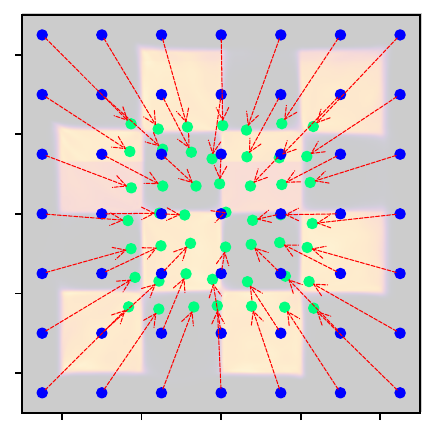}\\
\end{minipage} 
\begin{minipage}{0.155\linewidth}
	\centering
         2spirals\\
	\includegraphics[width=\linewidth]{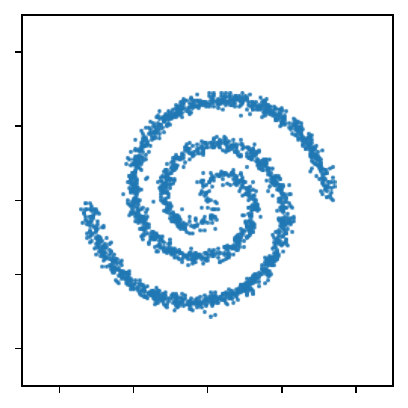}\\
	\includegraphics[width=\linewidth]{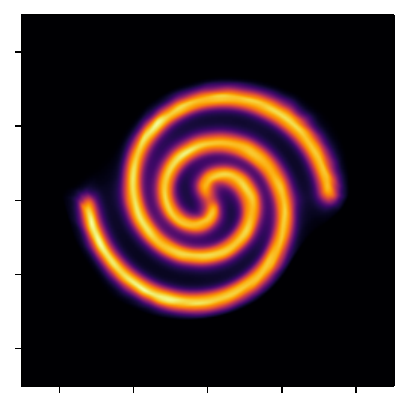}\\
	\includegraphics[width=\linewidth]{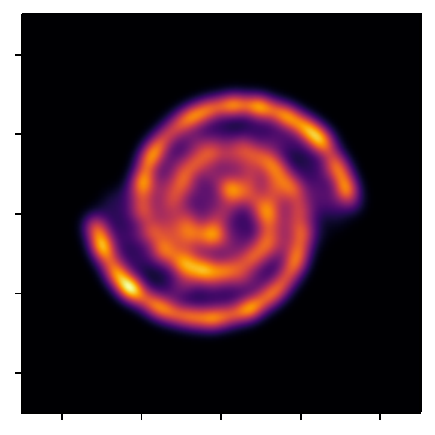}\\
	\includegraphics[width=\linewidth]{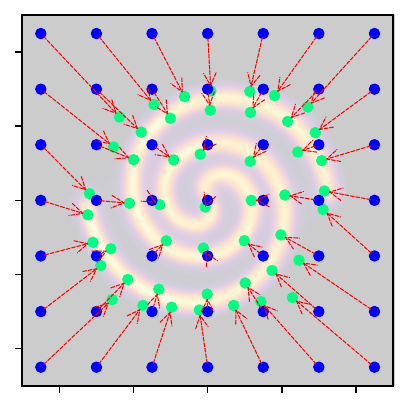}\\
	\includegraphics[width=\linewidth]{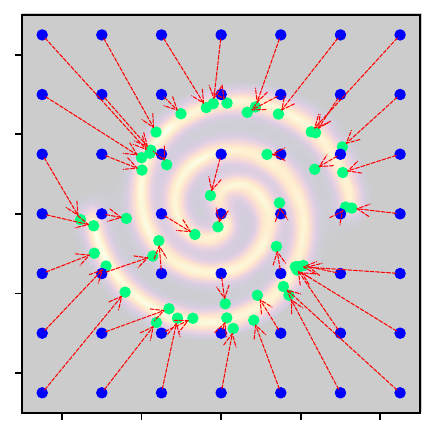}\\
	\includegraphics[width=\linewidth]{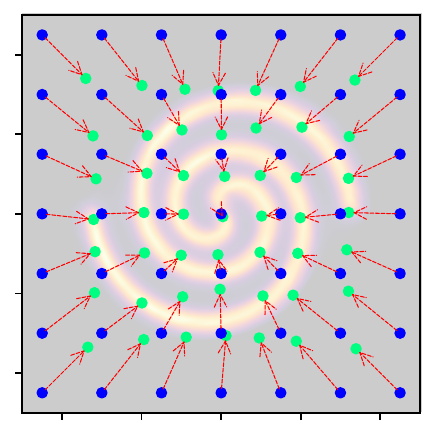}\\
	\includegraphics[width=\linewidth]{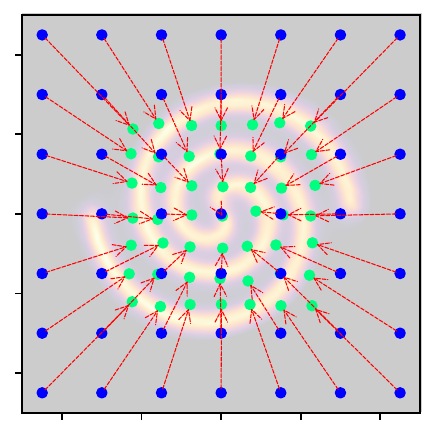}\\
\end{minipage} 
\begin{minipage}{0.155\linewidth}
	\centering
        rings\\
	\includegraphics[width=\linewidth]{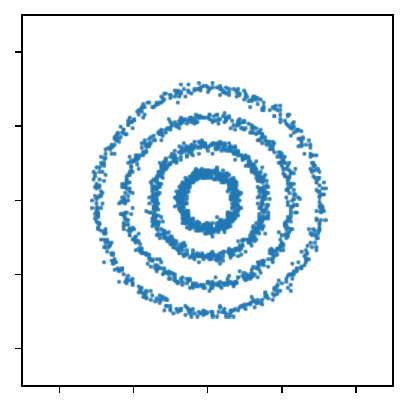}\\
	\includegraphics[width=\linewidth]{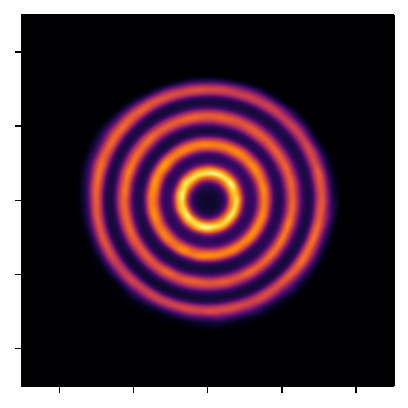}\\
	\includegraphics[width=\linewidth]{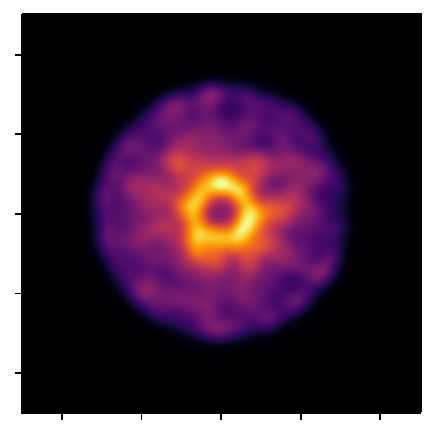}\\
	\includegraphics[width=\linewidth]{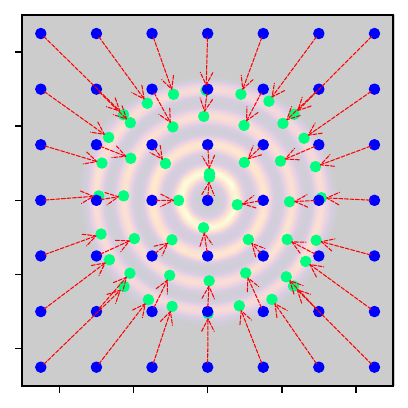}\\
	\includegraphics[width=\linewidth]{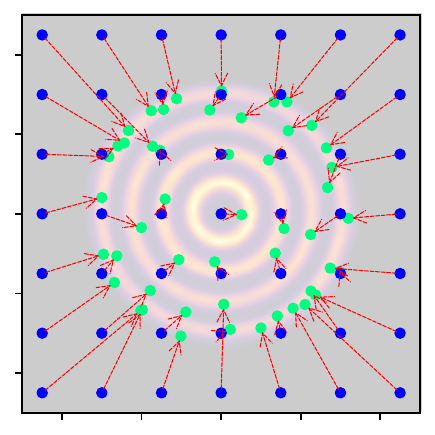}\\
	\includegraphics[width=\linewidth]{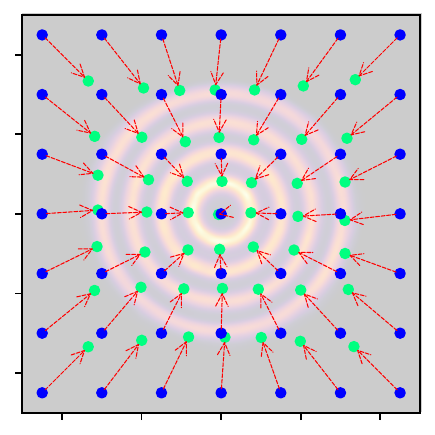}\\
	\includegraphics[width=\linewidth]{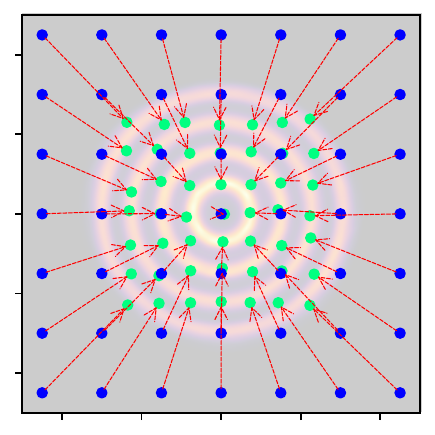}\\
\end{minipage} 
\begin{minipage}{0.155\linewidth}
	\centering
         moons\\
	\includegraphics[width=\linewidth]{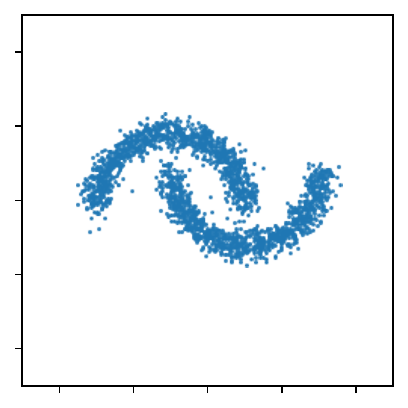}\\
	\includegraphics[width=\linewidth]{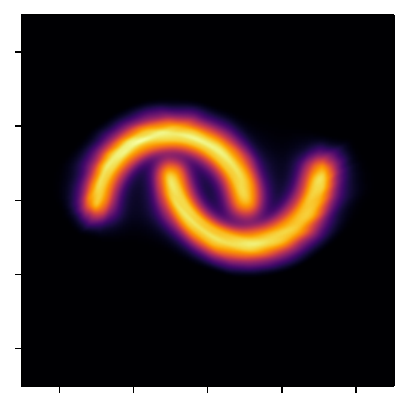}\\
	\includegraphics[width=\linewidth]{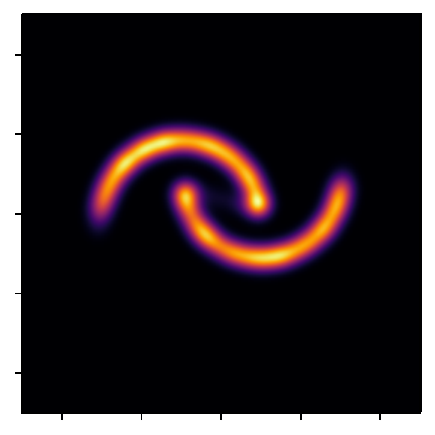}\\
	\includegraphics[width=\linewidth]{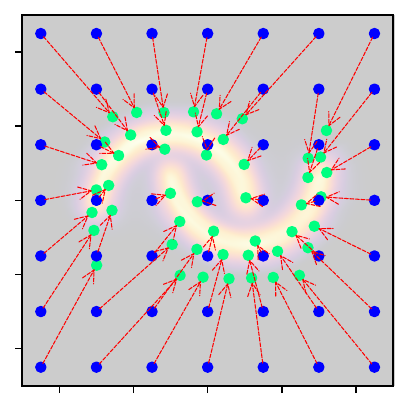}\\
	\includegraphics[width=\linewidth]{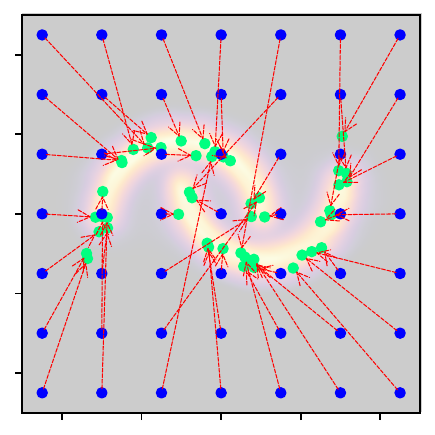}\\
	\includegraphics[width=\linewidth]{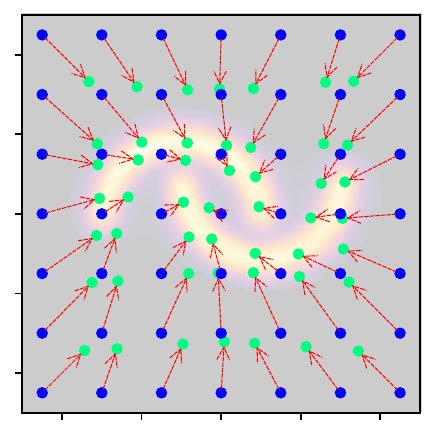}\\
	\includegraphics[width=\linewidth]{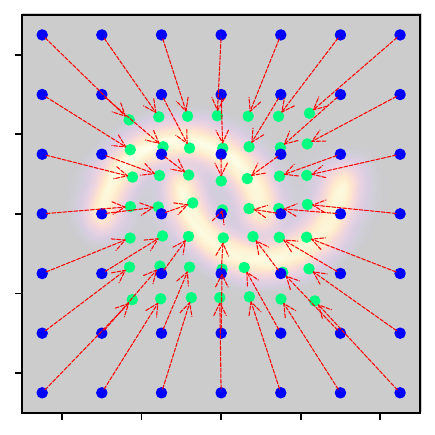}\\
\end{minipage} 
 \\
\vspace{3.5mm}
\begin{minipage}{\linewidth}
\centering
\includegraphics[width=0.5\linewidth]{pics/toy_figures/legend.drawio.pdf}
\end{minipage} 
\caption{Experimental results of SRPO and other baselines in 2D bandit settings.}
\vspace{-8mm}
\end{figure}

\newpage
\begin{figure}[!hbt]
\centering
\begin{minipage}{0.03\linewidth}
\rotatebox{90}{\normalsize{density map}}
\end{minipage}
\begin{minipage}{0.187\linewidth}
	\centering
        {\footnotesize diffusion $t=0.0$} 
	\includegraphics[width=\linewidth]{pics/toy_figures/moons_behavior.pdf}
\end{minipage} 
\begin{minipage}{0.187\linewidth}
	\centering
 {\footnotesize diffusion $t=0.1$}
	\includegraphics[width=\linewidth]{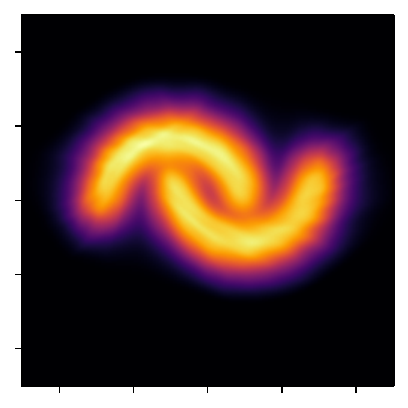}
\end{minipage} 
\begin{minipage}{0.187\linewidth}
	\centering
 {\footnotesize diffusion $t=0.3$}
	\includegraphics[width=\linewidth]{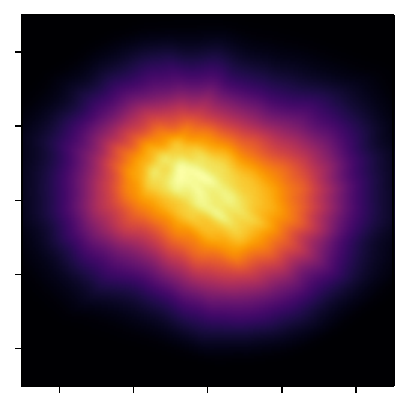}
\end{minipage} 
\begin{minipage}{0.187\linewidth}
	\centering
 {\footnotesize diffusion $t=1.0$}
	\includegraphics[width=\linewidth]{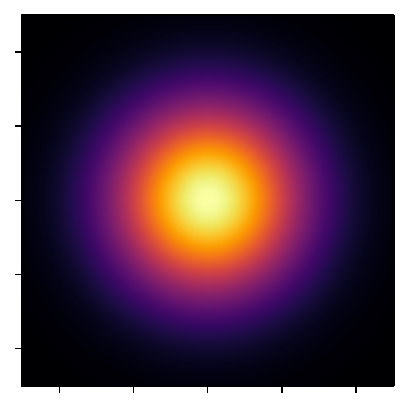}
\end{minipage}
\begin{minipage}{0.187\linewidth}
	\centering
 {\footnotesize \textbf{ensemble $t \in (0,1)$}}
	\includegraphics[width=\linewidth]{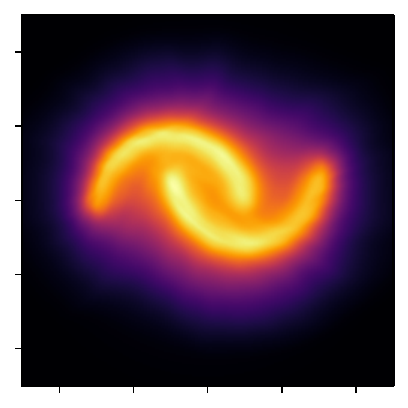}
\end{minipage} \\

\begin{minipage}{0.03\linewidth}
\rotatebox{90}{\normalsize{regularization}}
\end{minipage}
\begin{minipage}{0.187\linewidth}
	\centering
	\includegraphics[width=\linewidth]{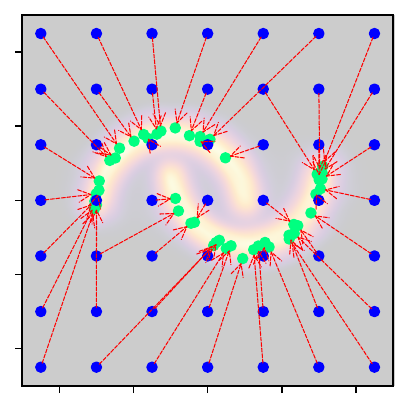}
\end{minipage} 
\begin{minipage}{0.187\linewidth}
	\centering
	\includegraphics[width=\linewidth]{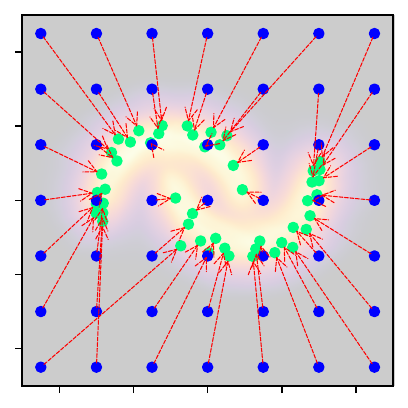}
\end{minipage} 
\begin{minipage}{0.187\linewidth}
	\centering
	\includegraphics[width=\linewidth]{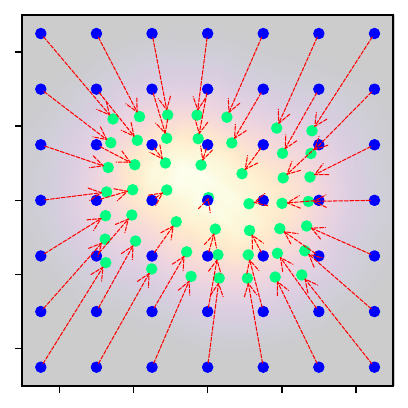}
\end{minipage} 
\begin{minipage}{0.187\linewidth}
	\centering
	\includegraphics[width=\linewidth]{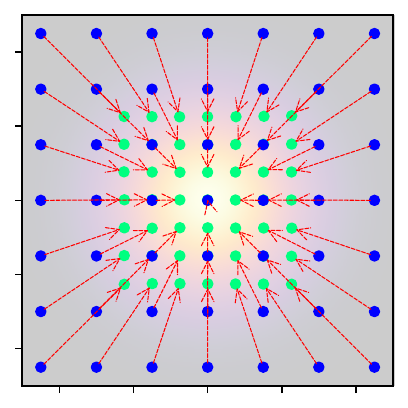}
\end{minipage}
\begin{minipage}{0.187\linewidth}
	\centering
	\includegraphics[width=\linewidth]{pics/toy_figures/moons_transition.pdf}
\end{minipage} \\

\begin{minipage}{0.03\linewidth}
\rotatebox{90}{\normalsize{density-map}}
\end{minipage}
\begin{minipage}{0.187\linewidth}
	\centering
	\includegraphics[width=\linewidth]{pics/toy_figures/swissroll_behavior.pdf}
\end{minipage} 
\begin{minipage}{0.187\linewidth}
	\centering
	\includegraphics[width=\linewidth]{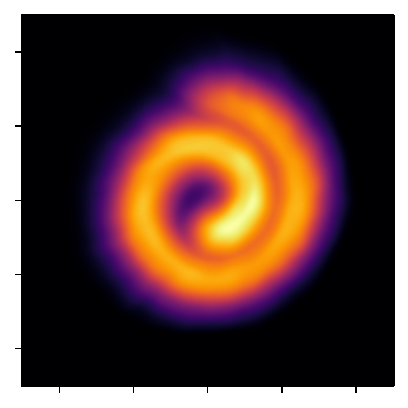}
\end{minipage} 
\begin{minipage}{0.187\linewidth}
	\centering
	\includegraphics[width=\linewidth]{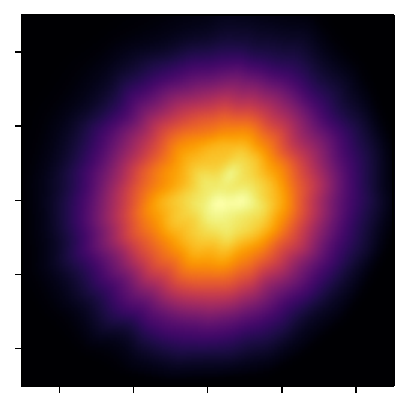}
\end{minipage} 
\begin{minipage}{0.187\linewidth}
	\centering
	\includegraphics[width=\linewidth]{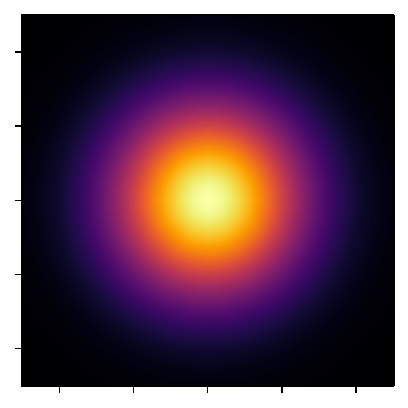}
\end{minipage}
\begin{minipage}{0.187\linewidth}
	\centering
	\includegraphics[width=\linewidth]{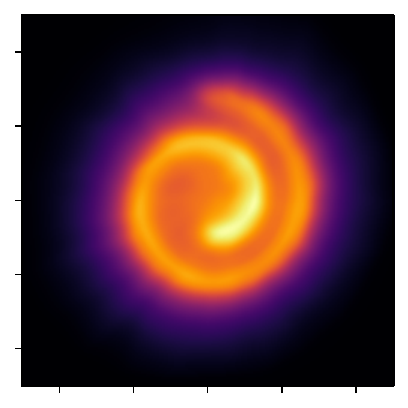}
\end{minipage} \\

\begin{minipage}{0.03\linewidth}
\rotatebox{90}{\normalsize{regularization}}
\end{minipage}
\begin{minipage}{0.187\linewidth}
	\centering
	\includegraphics[width=\linewidth]{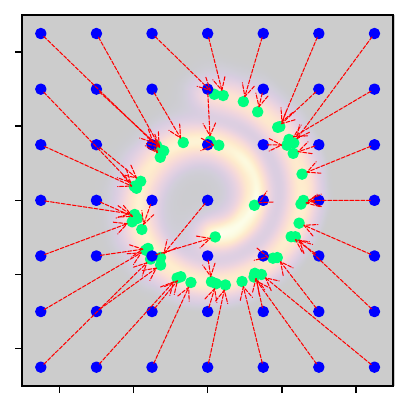}
\end{minipage} 
\begin{minipage}{0.187\linewidth}
	\centering
	\includegraphics[width=\linewidth]{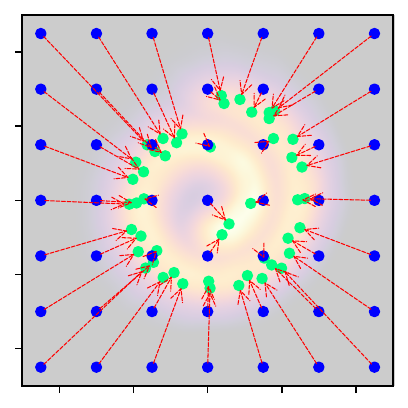}
\end{minipage} 
\begin{minipage}{0.187\linewidth}
	\centering
	\includegraphics[width=\linewidth]{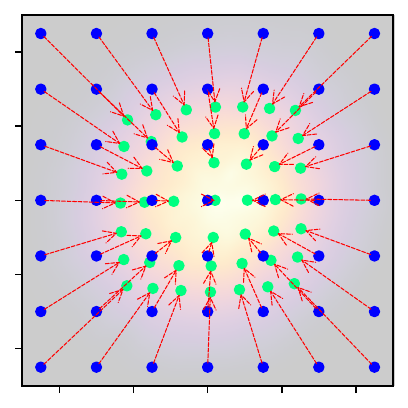}
\end{minipage} 
\begin{minipage}{0.187\linewidth}
	\centering
	\includegraphics[width=\linewidth]{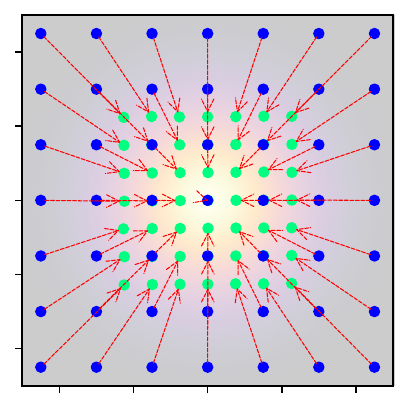}
\end{minipage}
\begin{minipage}{0.187\linewidth}
	\centering
	\includegraphics[width=\linewidth]{pics/toy_figures/swissroll_transition.pdf}
\end{minipage} \\

\begin{minipage}{0.03\linewidth}
\rotatebox{90}{\normalsize{density-map}}
\end{minipage}
\begin{minipage}{0.187\linewidth}
	\centering
	\includegraphics[width=\linewidth]{pics/toy_figures/checkerboard_behavior.pdf}
\end{minipage} 
\begin{minipage}{0.187\linewidth}
	\centering
	\includegraphics[width=\linewidth]{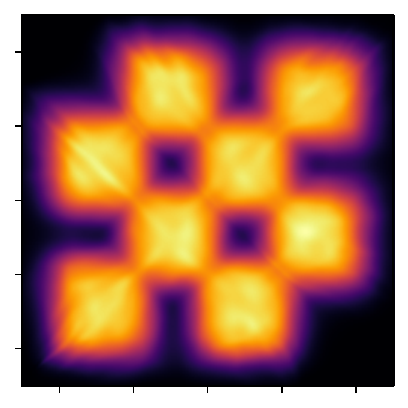}
\end{minipage} 
\begin{minipage}{0.187\linewidth}
	\centering
	\includegraphics[width=\linewidth]{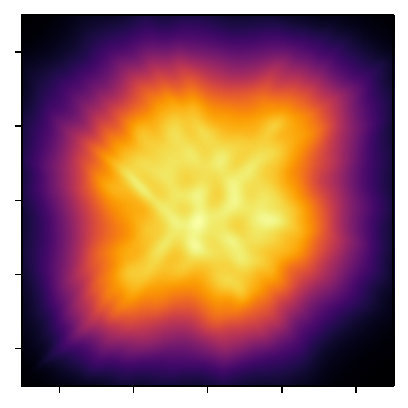}
\end{minipage} 
\begin{minipage}{0.187\linewidth}
	\centering
	\includegraphics[width=\linewidth]{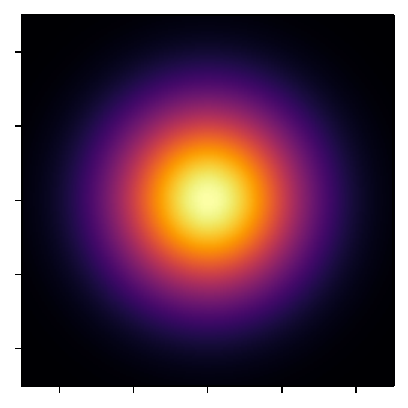}
\end{minipage}
\begin{minipage}{0.187\linewidth}
	\centering
	\includegraphics[width=\linewidth]{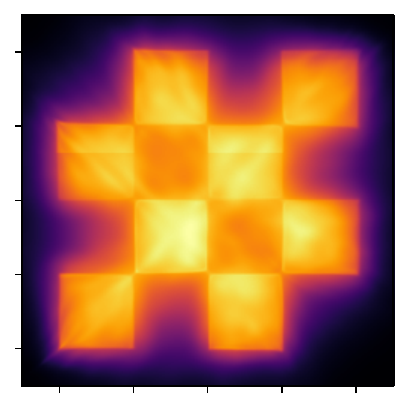}
\end{minipage} \\

\begin{minipage}{0.03\linewidth}
\rotatebox{90}{\normalsize{regularization}}
\end{minipage}
\begin{minipage}{0.187\linewidth}
	\centering
	\includegraphics[width=\linewidth]{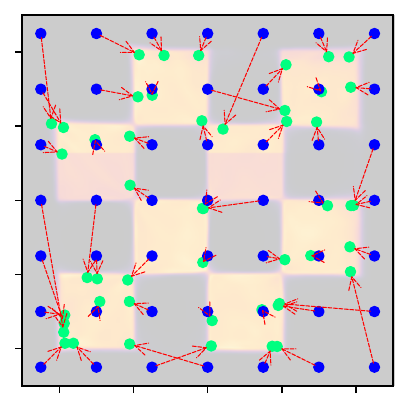}
\end{minipage} 
\begin{minipage}{0.187\linewidth}
	\centering
	\includegraphics[width=\linewidth]{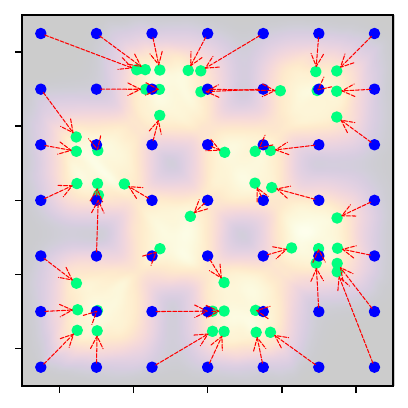}
\end{minipage} 
\begin{minipage}{0.187\linewidth}
	\centering
	\includegraphics[width=\linewidth]{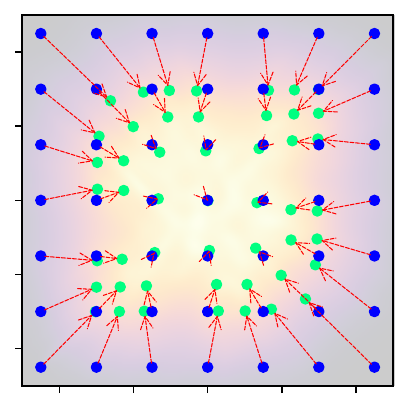}
\end{minipage} 
\begin{minipage}{0.187\linewidth}
	\centering
	\includegraphics[width=\linewidth]{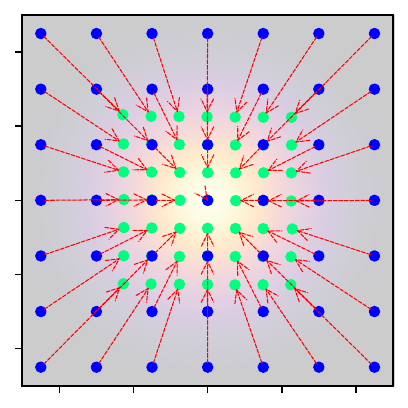}
\end{minipage}
\begin{minipage}{0.187\linewidth}
	\centering
	\includegraphics[width=\linewidth]{pics/toy_figures/checkerboard_transition.pdf}
\end{minipage} \\

\begin{minipage}{0.03\linewidth}
\rotatebox{90}{\normalsize{density-map}}
\end{minipage}
\begin{minipage}{0.187\linewidth}
	\centering
	\includegraphics[width=\linewidth]{pics/toy_figures/rings_behavior.pdf}
\end{minipage} 
\begin{minipage}{0.187\linewidth}
	\centering
	\includegraphics[width=\linewidth]{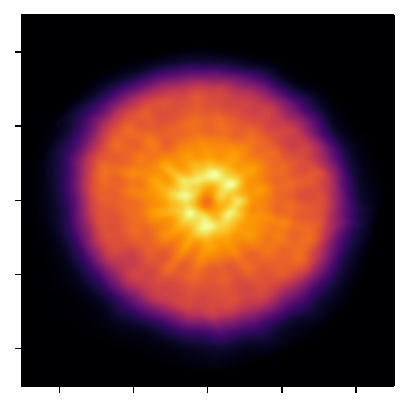}
\end{minipage} 
\begin{minipage}{0.187\linewidth}
	\centering
	\includegraphics[width=\linewidth]{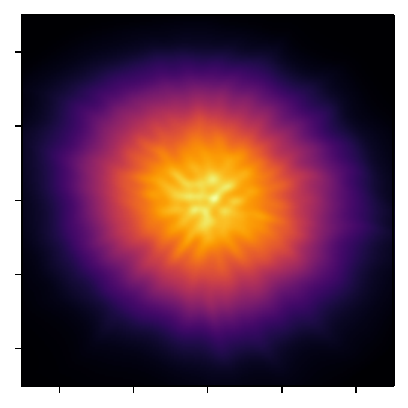}
\end{minipage} 
\begin{minipage}{0.187\linewidth}
	\centering
	\includegraphics[width=\linewidth]{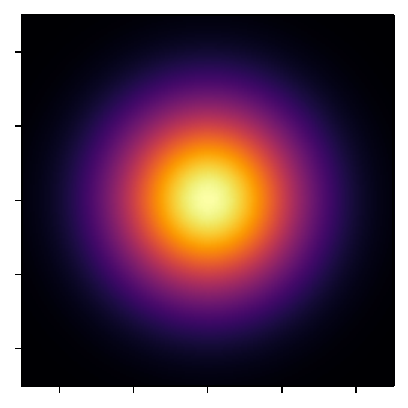}
\end{minipage}
\begin{minipage}{0.187\linewidth}
	\centering
	\includegraphics[width=\linewidth]{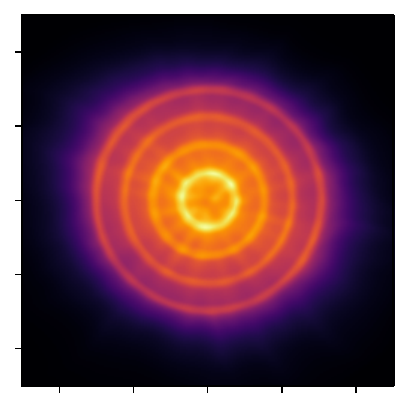}
\end{minipage} \\

\begin{minipage}{0.03\linewidth}
\rotatebox{90}{\normalsize{regularization}}
\end{minipage}
\begin{minipage}{0.187\linewidth}
	\centering
	\includegraphics[width=\linewidth]{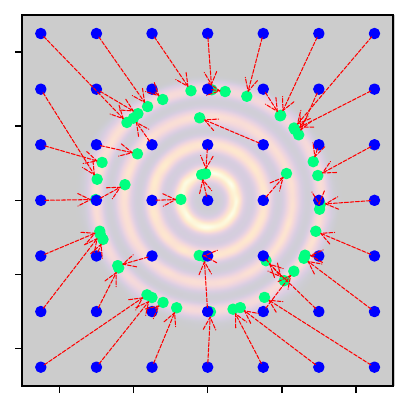}
\end{minipage} 
\begin{minipage}{0.187\linewidth}
	\centering
	\includegraphics[width=\linewidth]{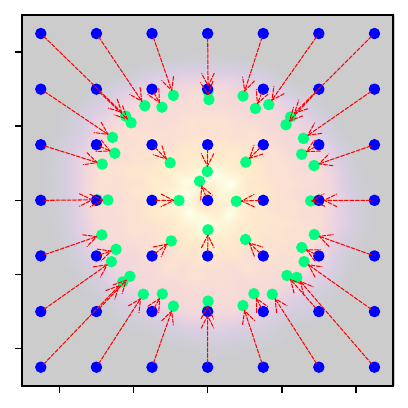}
\end{minipage} 
\begin{minipage}{0.187\linewidth}
	\centering
	\includegraphics[width=\linewidth]{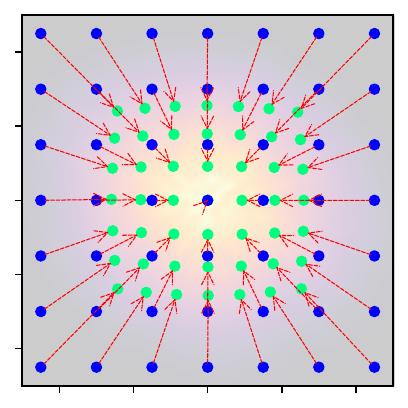}
\end{minipage} 
\begin{minipage}{0.187\linewidth}
	\centering
	\includegraphics[width=\linewidth]{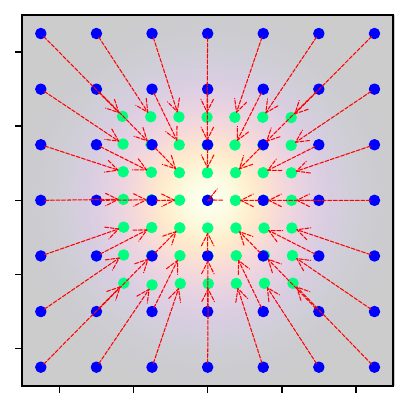}
\end{minipage}
\begin{minipage}{0.187\linewidth}
	\centering
	\includegraphics[width=\linewidth]{pics/toy_figures/rings_transition.pdf}
\end{minipage} \\

\vspace{3.5mm}
\begin{minipage}{\linewidth}
\centering
\includegraphics[width=0.5\linewidth]{pics/toy_figures/legend.drawio.pdf}
\end{minipage} 
\caption{Empirical benefits of ensembling multiple diffusion times}
\vspace{-20mm}
\end{figure}

\newpage

\begin{figure}[!hbt]
\centering
\begin{minipage}{0.03\linewidth}
\rotatebox{90}{\normalsize{density-map}}
\end{minipage}
\begin{minipage}{0.187\linewidth}
	\centering
        {\footnotesize diffusion $t=0.0$} 
	\includegraphics[width=\linewidth]{pics/toy_figures/2spirals_behavior.pdf}
\end{minipage} 
\begin{minipage}{0.187\linewidth}
	\centering
 {\footnotesize diffusion $t=0.1$}
	\includegraphics[width=\linewidth]{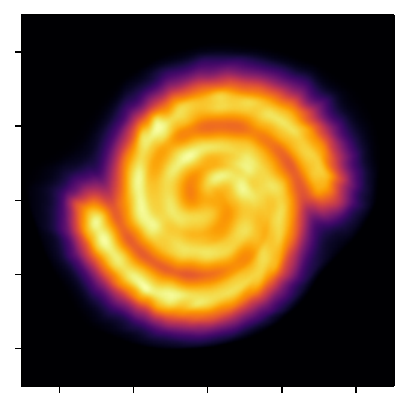}
\end{minipage} 
\begin{minipage}{0.187\linewidth}
	\centering
 {\footnotesize diffusion $t=0.3$}
	\includegraphics[width=\linewidth]{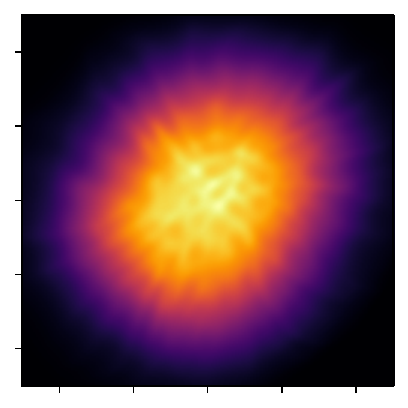}
\end{minipage} 
\begin{minipage}{0.187\linewidth}
	\centering
 {\footnotesize diffusion $t=1.0$}
	\includegraphics[width=\linewidth]{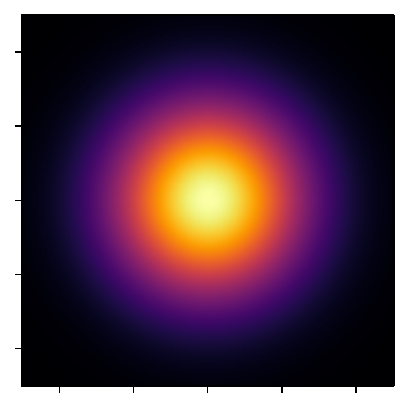}
\end{minipage}
\begin{minipage}{0.187\linewidth}
	\centering
 {\footnotesize \textbf{ensemble $t \in (0,1)$}}
	\includegraphics[width=\linewidth]{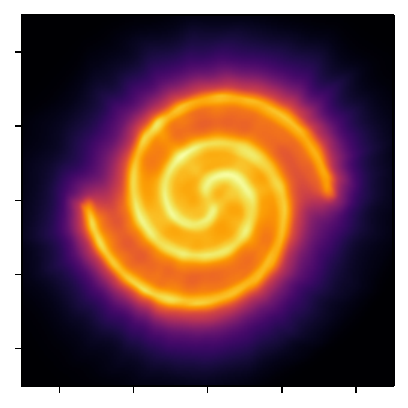}
\end{minipage} \\

\begin{minipage}{0.03\linewidth}
\rotatebox{90}{\normalsize{regularization}}
\end{minipage}
\begin{minipage}{0.187\linewidth}
	\centering
	\includegraphics[width=\linewidth]{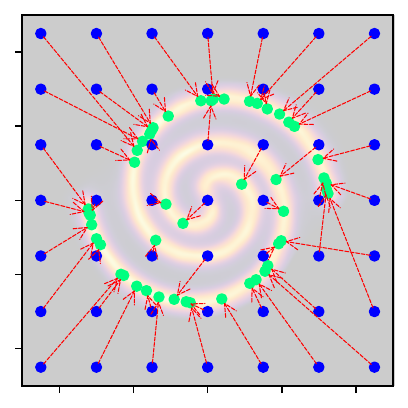}
\end{minipage} 
\begin{minipage}{0.187\linewidth}
	\centering
	\includegraphics[width=\linewidth]{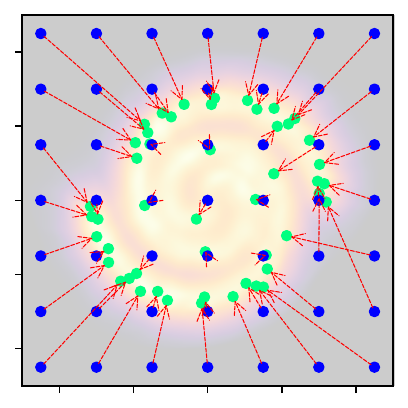}
\end{minipage} 
\begin{minipage}{0.187\linewidth}
	\centering
	\includegraphics[width=\linewidth]{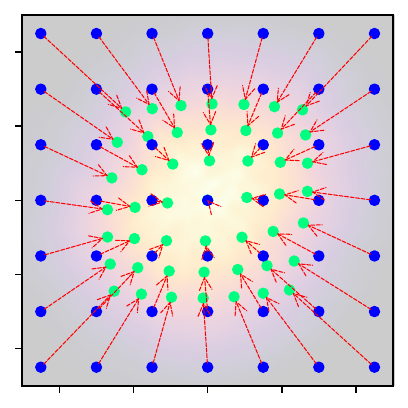}
\end{minipage} 
\begin{minipage}{0.187\linewidth}
	\centering
	\includegraphics[width=\linewidth]{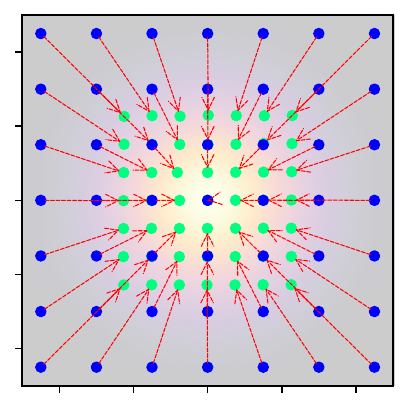}
\end{minipage}
\begin{minipage}{0.187\linewidth}
	\centering
	\includegraphics[width=\linewidth]{pics/toy_figures/2spirals_transition.pdf}
\end{minipage} \\

\vspace{3.5mm}
\begin{minipage}{\linewidth}
\centering
\includegraphics[width=0.5\linewidth]{pics/toy_figures/legend.drawio.pdf}
\end{minipage} 
\caption{Empirical benefits of ensembling multiple diffusion times}
\end{figure}

\begin{figure}[!hbt]
\centering
\begin{minipage}{0.15\linewidth}
	\centering
	\includegraphics[width=\linewidth]{pics/toy_figures/8gaussians_multiple_alpha.pdf}
\end{minipage} 
\begin{minipage}{0.15\linewidth}
	\centering
	\includegraphics[width=\linewidth]{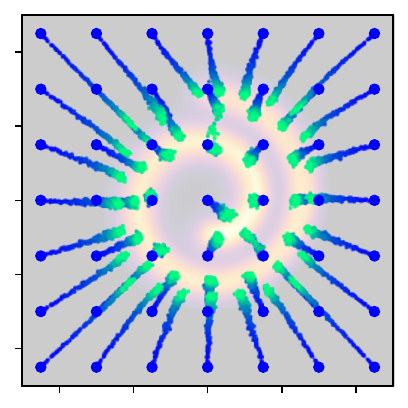}
\end{minipage} 
\begin{minipage}{0.15\linewidth}
	\centering
	\includegraphics[width=\linewidth]{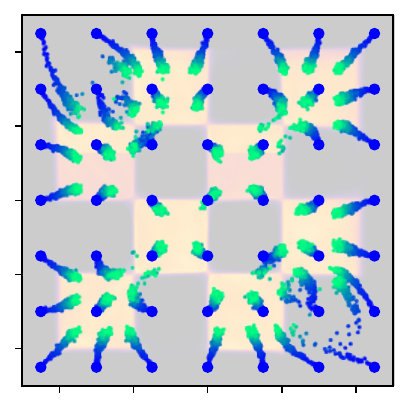}
\end{minipage} 
\begin{minipage}{0.15\linewidth}
	\centering
	\includegraphics[width=\linewidth]{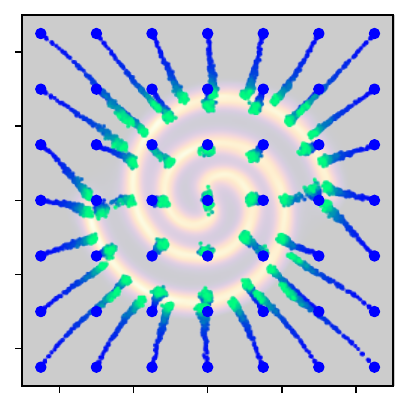}
\end{minipage}
\begin{minipage}{0.15\linewidth}
	\centering
	\includegraphics[width=\linewidth]{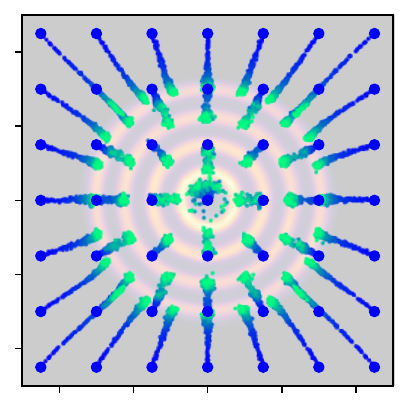}
\end{minipage} 
\begin{minipage}{0.15\linewidth}
	\centering
	\includegraphics[width=\linewidth]{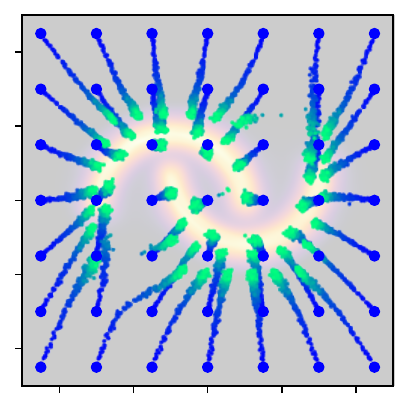}
\end{minipage}

\caption{Effect of varying temperature $\beta$.}
\vspace{-20mm}
\end{figure}

\newpage

\section{Theoretical Analysis}
\label{sec:analysis}
In this section, we provide some theoretical analysis related to the SPRO algorithm.

First, we aim to provide some insights into why it is reasonable to replace 
\begin{equation*}
     \gL_{\pi}(\theta) = \mathbb{E}_{\vs \sim \mathcal{D}^\mu, \va \sim \pi_\theta} Q_\phi(\vs, \va)
    - \frac{1}{\beta} \textcolor{blue}{\KL \left[\pi_\theta(\cdot |\vs) || \mu(\cdot |\vs) \right]}
\end{equation*}
with the surrogate objective
\begin{equation}
\label{eq:proof_surr}
    \gL^{\text{surr}}_{\pi}(\theta) = \mathbb{E}_{\vs, \va \sim \pi_\theta} Q_\phi(\vs, \va)
     - \frac{1}{\beta} \mathbb{E}_{t, \vs} \omega(t) \frac{\sigma_t}{\alpha_t} \textcolor{blue}{\KL \left[\pi_{t, \theta}(\cdot |\vs) || \mu_t(\cdot |\vs) \right]}.
\end{equation}

\begin{proposition}
\label{prop1} Given that $\pi$ is sufficiently expressive, for any time $t$, any state $\vs$, we have
\begin{equation*}
\argmin_\pi \KL \left[\pi_{t}(\cdot |\vs)   || \mu_t(\cdot |\vs) \right]=\argmin_\pi \KL \left[\pi(\cdot |\vs) || \mu(\cdot |\vs) \right],
\end{equation*}
where both $\mu_t$ and $\pi_{t}$ follow the same predefined diffusion process in \Eqref{Eq:forward_diffusion}.
\end{proposition}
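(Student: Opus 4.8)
The plan is to reduce both sides to the statement ``the minimizer is exactly the distribution matching the target,'' and then show that the diffusion map preserves this matching. Since $\KL[p\,\|\,q]\ge 0$ with equality if and only if $p=q$ almost everywhere, and since $\pi$ is assumed sufficiently expressive to represent any distribution, the right-hand side gives $\argmin_\pi \KL[\pi(\cdot|\vs)\,\|\,\mu(\cdot|\vs)] = \{\pi : \pi(\cdot|\vs)=\mu(\cdot|\vs)\}$, whose unique element is $\pi=\mu$. Likewise the left-hand side gives $\argmin_\pi \KL[\pi_t(\cdot|\vs)\,\|\,\mu_t(\cdot|\vs)] = \{\pi : \pi_t(\cdot|\vs)=\mu_t(\cdot|\vs)\}$. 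It therefore suffices to prove, for each fixed $\vs$ and each finite $t$, the equivalence $\pi_t(\cdot|\vs)=\mu_t(\cdot|\vs) \iff \pi(\cdot|\vs)=\mu(\cdot|\vs)$.

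The forward implication is immediate from the definition $\mu_t(\va_t|\vs)=\E_{\va\sim\mu(\cdot|\vs)}\gN(\va_t|\alpha_t\va,\sigma_t^2\mI)$ (and analogously for $\pi_t$): equal un-diffused distributions are mapped to equal diffused distributions. The substantive direction is the converse, which amounts to showing that the forward map $p\mapsto p_t$ is injective. I would establish this via characteristic functions. The law $p_t$ is the distribution of $\alpha_t\va+\sigma_t\epsilonv$ with $\va\sim p$ and $\epsilonv\sim\gN(\bm{0},\mI)$, so its characteristic function factorizes as $\phi_{p_t}(\vk)=\phi_p(\alpha_t\vk)\,\exp(-\tfrac{1}{2}\sigma_t^2\|\vk\|^2)$, where $\phi_p$ is the characteristic function of $p$. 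Because the Gaussian factor $\exp(-\tfrac{1}{2}\sigma_t^2\|\vk\|^2)$ is nowhere vanishing for any finite $t$, the identity $\pi_t=\mu_t$ forces $\phi_\pi(\alpha_t\vk)=\phi_\mu(\alpha_t\vk)$ for all $\vk$; since $\alpha_t>0$ the rescaling $\vk\mapsto\alpha_t\vk$ is a bijection, so $\phi_\pi=\phi_\mu$ everywhere, and by uniqueness of characteristic functions $\pi=\mu$.

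The main obstacle is precisely this injectivity step, and the care it demands is that $\pi$ and $\mu$ may be multimodal or non-smooth measures rather than nice densities. I would handle this by phrasing the argument entirely at the level of characteristic functions of probability measures, where the factorization and the non-vanishing of the Gaussian characteristic function hold with no regularity assumptions and Fourier inversion (uniqueness) applies directly. Combining the two implications shows that the two solution sets coincide, each equal to $\{\pi:\pi=\mu\}$, which establishes the claimed equality of $\argmin$ sets.
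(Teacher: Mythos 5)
Your proposal is correct and follows essentially the same route as the paper's own proof: reduce both $\argmin$ sets to the distribution-matching conditions $\pi=\mu$ and $\pi_t=\mu_t$ respectively, then prove their equivalence via the characteristic-function factorization $\phi_{p_t}(\vk)=\phi_p(\alpha_t\vk)\,e^{-\sigma_t^2\|\vk\|^2/2}$. If anything, you spell out more carefully than the paper why the implication $\pi_t=\mu_t \Rightarrow \pi=\mu$ holds (non-vanishing of the Gaussian factor and bijectivity of the rescaling $\vk\mapsto\alpha_t\vk$), steps the paper leaves implicit in its chain of equivalences.
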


\begin{proof} 
Our proof is inspired by \citet{prolificdreamer}. Regarding the property of the KL divergence, we have $\argmin_\pi \KL \left[\pi(\cdot |\vs) || \mu(\cdot |\vs) \right] = \mu(\cdot|\vs)$ and $\pi_t^{*}(\cdot|\vs) := \argmin_\pi \KL \left[\pi_{t}(\cdot |\vs)   || \mu_t(\cdot |\vs) \right] = \mu_t(\cdot|\vs)$. 

We conclude that $\pi_t^{*}(\cdot|\vs) = \mu_t(\cdot|\vs)$ is equivalent to $\pi^{*}(\cdot|\vs) = \mu(\cdot|\vs)$ by transforming all distributions into their characteristic functions.

According to the forward diffusion process defined by \Eqref{Eq:forward_diffusion}, for any prespecified state $\vs$, we have
\begin{equation*}
    \va_t^{\pi} = \alpha_t \va_0^{\pi} + \sigma_t \epsilon = \alpha_t \va^{\pi} + \sigma_t \epsilon,
\end{equation*}
such that
\begin{equation*}
    \pi_{t}(\va_t|\vs) =  \int \gN(\va_t|\alpha_t\va,\sigma_t^2\mI)\pi(\va|\vs) \mathrm{d} \va,
\end{equation*}
 Therefore, the characteristic function of $\pi_{t}(\va_t|\vs)$ satisfies
 \begin{equation*}
    \phi_{\pi_t(\cdot|\vs)}(u) = \phi_{\pi(\cdot|\vs)}(\alpha_t u) \phi_{\gN(\mathbf{0},\mI)}(\sigma_t u) =  \phi_{\pi(\cdot|\vs)}(\alpha_t s) e^{-\frac{\sigma_t^2 u^2}{2}}.
\end{equation*}
Similarly, we also have
 \begin{equation*}
    \phi_{\mu_t(\cdot|\vs)}(u) = \phi_{\mu(\cdot|\vs)}(\alpha_t u) \phi_{\gN(\mathbf{0},\mI)}(\sigma_t u) =  \phi_{\mu(\cdot|\vs)}(\alpha_t s) e^{-\frac{\sigma_t^2 u^2}{2}}.
\end{equation*}
Finally, we can see that
\begin{equation*}
    \pi_t^{*}(\cdot|\vs) = \mu_t(\cdot|\vs) \Leftrightarrow  \phi_{\pi^{*}_t(\cdot|\vs)}(u) = \phi_{\mu_t(\cdot|\vs)}(u) \Leftrightarrow \phi_{\pi^{*}(\cdot|\vs)}(u) = \phi_{\mu(\cdot|\vs)}(u) \Leftrightarrow \pi^{*}(\cdot|\vs) = \mu(\cdot|\vs)
\end{equation*}
\end{proof}
\begin{remark}
\label{remark1_label}
It is imperative to note that although $\argmin_\pi \KL \left[\pi_{t}(\cdot |\vs) || \mu_t(\cdot |\vs) \right]$ and $\argmin_\pi \KL \left[\pi(\cdot |\vs) || \mu(\cdot |\vs) \right]$ converge to the same global optimal solution. $\max \gL^{\text{surr}}_{\pi}(\theta)$ does not converge to the wanted optimal policy $\pi^*(\va|\vs) \propto \mu(\va|\vs) \ \mathrm{exp}\left(\beta Q_\phi(\vs, \va) \right)$ while $\max \gL_{\pi}(\theta)$ does. This discrepancy arises primarily from the inclusion of the additional Q-function term. Furthermore, in practical scenarios, the parameterized policy $\pi_\theta$ lacks expressivity, which hinders its ability to truly attain the global optimum. Nevertheless, ensembling a series of $t$ might be beneficial for the optimization to get to a better minimum in practice. Since the distribution $\mu(\cdot|\vs)$ is usually complex and high-dimensional, it is easy to become trapped in a local minimum while maximizing its likelihood. However, the distribution of $\mu_t(\cdot|\vs)$ gets smoother as $t$ gets larger.
\end{remark}
Next, we derive the gradient for $\gL^{\text{surr}}_{\pi}(\theta)$ under the condition that $\pi_\theta$ is a deterministic actor.
\begin{proposition}
\label{prop2} Given that $\pi_\theta$ is a deterministic actor ($\va=\pi_\theta(\vs)$) and $\epsilonv^*$ is an optimal diffusion behavior model ($\epsilonv^*(\va_t|\vs, t) = -\sigma_t \nabla_{\va_t}\log \mu_t(\va_t|\vs)$), the gradient for optimizing $\max_\theta \gL^{\text{surr}}_{\pi}(\theta)$ in \Eqref{eq:proof_surr} satisfies
\begin{equation*}
    \nabla_{\theta} \gL^{\text{surr}}_{\pi}(\theta) = \left[ \mathbb{E}_{\vs} \nabla_{\va} Q_\phi(\vs, \va)|_{\va=\pi_\theta(\vs)} - \frac{1}{\beta} \mathbb{E}_{t, \vs, \epsilonv} \textcolor{black}{\omega(t)} (\epsilonv^*(\va_t|\vs, t) \textcolor{black}{- \epsilonv})|_{\va_t=\alpha_t \pi_\theta(\vs) + \sigma_t \epsilonv}\right] \nabla_{\theta} \pi_\theta(\vs)
\end{equation*}

\end{proposition}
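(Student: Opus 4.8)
The plan is to differentiate the two pieces of $\gL^{\text{surr}}_{\pi}(\theta)$ separately and then recombine. The policy-optimization term is immediate: since $\pi_\theta$ is deterministic, $\mathbb{E}_{\vs,\va\sim\pi_\theta}Q_\phi(\vs,\va)=\mathbb{E}_{\vs}Q_\phi(\vs,\pi_\theta(\vs))$, and the chain rule gives $\mathbb{E}_{\vs}\nabla_{\va}Q_\phi(\vs,\va)|_{\va=\pi_\theta(\vs)}\,\nabla_\theta\pi_\theta(\vs)$, which is exactly the first bracketed term in the claim. All the real work lies in the regularization term.

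For the KL term, first I would write $\KL[\pi_{t,\theta}(\cdot|\vs)||\mu_t(\cdot|\vs)]=-\mathcal{H}(\pi_{t,\theta}(\cdot|\vs))-\mathbb{E}_{\va_t\sim\pi_{t,\theta}}\log\mu_t(\va_t|\vs)$. The key observation is that, because $\pi_\theta$ is deterministic, $\pi_{t,\theta}(\cdot|\vs)$ is a Gaussian whose mean is $\alpha_t\pi_\theta(\vs)$ but whose covariance does not depend on $\theta$; hence its differential entropy is constant in $\theta$ and $\nabla_\theta\mathcal{H}(\pi_{t,\theta})=0$. The surviving cross-entropy term I would differentiate through the reparameterization $\va_t=\alpha_t\pi_\theta(\vs)+\sigma_t\epsilonv$ with $\epsilonv\sim\gN(\bm{0},\mI)$: pushing $\nabla_\theta$ inside the expectation over $\epsilonv$ and applying the chain rule through $\va_t$ produces a factor $\alpha_t$ from $\partial\va_t/\partial\pi_\theta$, yielding $\nabla_\theta\KL[\pi_{t,\theta}||\mu_t]=-\alpha_t\,\mathbb{E}_{\epsilonv}\nabla_{\va_t}\log\mu_t(\va_t|\vs)|_{\va_t=\alpha_t\pi_\theta(\vs)+\sigma_t\epsilonv}\,\nabla_\theta\pi_\theta(\vs)$. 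Substituting the optimal-score identity $\nabla_{\va_t}\log\mu_t(\va_t|\vs)=-\epsilonv^*(\va_t|\vs,t)/\sigma_t$ turns this into $\frac{\alpha_t}{\sigma_t}\mathbb{E}_{\epsilonv}\epsilonv^*(\va_t|\vs,t)|_{\va_t=\alpha_t\pi_\theta(\vs)+\sigma_t\epsilonv}\,\nabla_\theta\pi_\theta(\vs)$. Multiplying by the objective's weight $\frac{\sigma_t}{\alpha_t}$ cancels the $\alpha_t/\sigma_t$ factor exactly — this cancellation is precisely why that weight was inserted into $\gL^{\text{surr}}_{\pi}$ — so the regularization gradient collapses to $-\frac{1}{\beta}\mathbb{E}_{t,\vs,\epsilonv}\omega(t)\,\epsilonv^*(\va_t|\vs,t)|_{\va_t=\alpha_t\pi_\theta(\vs)+\sigma_t\epsilonv}\,\nabla_\theta\pi_\theta(\vs)$.

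Finally I would introduce the baseline. Because $\epsilonv\sim\gN(\bm{0},\mI)$ is drawn independently of $t$ and $\vs$ and has zero mean, we have $\mathbb{E}_{t,\vs,\epsilonv}\omega(t)\,\epsilonv\,\nabla_\theta\pi_\theta(\vs)=\mathbb{E}_{t,\vs}\omega(t)(\mathbb{E}_{\epsilonv}\epsilonv)\nabla_\theta\pi_\theta(\vs)=0$, so subtracting $\epsilonv$ inside the expectation leaves the gradient unchanged and reproduces the stated form with $\epsilonv^*(\va_t|\vs,t)-\epsilonv$. Combining the two pieces yields the claim. I expect the main obstacle to be the entropy step: one must argue carefully that the $\theta$-dependence of $\pi_{t,\theta}$ enters only through the Gaussian mean, so the entropy contribution drops out and only the cross-entropy (score) term survives. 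Everything else is bookkeeping of the $\alpha_t,\sigma_t$ factors, which becomes routine once the weight-induced cancellation is recognized.
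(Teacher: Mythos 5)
Your proposal is correct, and the overall skeleton (reparameterize $\va_t=\alpha_t\pi_\theta(\vs)+\sigma_t\epsilonv$, apply the chain rule, substitute the score identity $\nabla_{\va_t}\log\mu_t=-\epsilonv^*/\sigma_t$, and let the $\tfrac{\sigma_t}{\alpha_t}$ weight cancel the resulting $\tfrac{\alpha_t}{\sigma_t}$ factor) matches the paper. Where you genuinely diverge is in how the $-\log\pi_{t,\theta}$ part of the KL is handled, and hence where the baseline comes from. The paper keeps this term and splits its $\theta$-derivative into a \emph{policy score} (pathwise) piece and a \emph{parameter score} piece; it shows the parameter score piece vanishes identically by the score-expectation identity $\mathbb{E}_{\va_t\sim\pi_{t,\theta}}\nabla_\theta\log\pi_{t,\theta}(\va_t|\vs)=0$ (valid for \emph{any} distribution, not only Gaussians), while the policy score piece evaluates, under the reparameterization, exactly to the $-\epsilonv$ term. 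Thus in the paper the expression $(\epsilonv^*-\epsilonv)$ emerges as the exact pathwise gradient, and only afterwards (in a remark) is $-\epsilonv$ observed to have zero mean. You instead collapse the entire entropy term at once—$\pi_{t,\theta}=\gN(\alpha_t\pi_\theta(\vs),\sigma_t^2\mI)$ has $\theta$-independent covariance, so $\mathcal{H}(\pi_{t,\theta})$ is constant in $\theta$—obtain the gradient with only $\epsilonv^*$, and then reinsert $-\epsilonv$ as a zero-mean control variate using independence of $\epsilonv$ from $(t,\vs)$. Both arguments are sound and yield the stated equality (which, being an expectation identity, holds either way). Your route is more elementary for this Gaussian-deterministic case; the paper's route buys two things you lose: the parameter-score cancellation is established in a form that does not rely on Gaussianity, and the baseline is identified as precisely the policy score term, which is what explains (following the sticking-the-landing / DreamFusion lineage cited in the paper) why it is correlated with $\epsilonv_\psi(\va_t|\vs,t)$ and therefore reduces estimator variance rather than being an arbitrary zero-mean insertion.
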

\begin{proof}
According to the predefined forward diffusion process (\Eqref{Eq:forward_diffusion}), for any state $\vs$, we have
 \begin{equation*}
    \pi_{\theta, t}(\va_t|\vs) =  \int \gN(\va_t|\alpha_t\va,\sigma_t^2\mI)\pi_\theta(\va|\vs) \mathrm{d} \va =  \int \gN(\va_t|\alpha_t\va,\sigma_t^2\mI)\delta(\va- \pi_\theta(\vs)) \mathrm{d} \va = \gN(\va_t|\alpha_t\pi_\theta(\vs),\sigma_t^2\mI).
\end{equation*}
Therefore $\pi_{\theta, t}(\cdot|\vs)$ is a Gaussian with expected value $\alpha_t\pi_\theta(\vs)$ and variance $\sigma_t^2\mI$. We rewrite the training objective below:
\begin{align*}
    \gL^{\text{surr}}_{\pi}(\theta) &= \mathbb{E}_{\vs, \va \sim \pi_\theta(\cdot|\vs)} Q_\phi(\vs, \va)
     - \frac{1}{\beta} \mathbb{E}_{t, \vs} \omega(t) \frac{\sigma_t}{\alpha_t} \KL \left[\pi_{t, \theta}(\cdot |\vs) || \mu_t(\cdot |\vs) \right] \\
     &= \mathbb{E}_{\vs, \va \sim \pi_\theta(\cdot|\vs)} Q_\phi(\vs, \va) + \frac{1}{\beta} \mathbb{E}_{t, \vs, \va_t \sim \pi_{t, \theta}(\cdot|\vs)} \omega(t) \frac{\sigma_t}{\alpha_t} [ \log \mu_t(\va_t |\vs) - \log \pi_{t, \theta}(\va_t |\vs)]\\
     &= \mathbb{E}_{\vs} Q_\phi(\vs, \va)|_{\va=\pi_\theta(\vs)} + \frac{1}{\beta} \mathbb{E}_{t, \vs} \omega(t) \frac{\sigma_t}{\alpha_t} \mathbb{E}_{\va_t \sim \gN(\cdot|\alpha_t\pi_\theta(\vs),\sigma_t^2\mI)}  [ \log \mu_t(\va_t |\vs) - \log \pi_{t, \theta}(\va_t |\vs)]
\end{align*}
Then we derive the gradient of $\gL^{\text{surr}}_{\pi}(\theta)$ by applying the chain rule and the parameterization trick:
\begin{align}
    \nabla_\theta \gL^{\text{surr}}_{\pi}(\theta) = &\frac{\partial \mathbb{E}_{\vs}  Q_\phi(\vs, \va)}{\partial \va}|_{\va=\pi_\theta(\vs)} \frac{\partial \pi_\theta(\vs)}{\partial \theta} + \frac{1}{\beta}\mathbb{E}_{t, \vs} \omega(t) \frac{\sigma_t}{\alpha_t} \mathbb{E}_{\epsilon}\underbrace{\frac{\partial   [ \log \mu_t(\va_t |\vs)]}{\partial \va_t}}_{\text{behavior score}}\frac{\partial \va_t}{\partial \theta}|_{\va_t = \alpha_t \pi_\theta(\vs) + \sigma_t \epsilon} \nonumber\\
    &\hspace{-2.0cm}\label{eq:SDS_split}- \frac{1}{\beta}\mathbb{E}_{t, \vs} \omega(t) \frac{\sigma_t}{\alpha_t} \biggl[  \mathbb{E}_{\epsilon}    \underbrace{\frac{\partial   [ \log \pi_{t,\theta}(\va_t |\vs)]}{\partial \va_t}}_{\text{policy score}}  \frac{\partial \va_t}{\partial \theta}|_{\va_t = \alpha_t \pi_\theta(\vs) + \sigma_t \epsilon} + \underbrace{\mathbb{E}_{\va_t \sim \pi_{t, \theta}(\cdot|\vs)} \frac{\partial \log \pi_{t, \theta}(\va_t|\vs)}{\partial\theta}}_{\text{parameter score}}\biggr]
\end{align}
The behavior score term in the above equation can be represented by the optimal diffusion behavior model:
\begin{equation*}
\frac{\partial   [ \log \mu_t(\va_t |\vs)]}{\partial \va_t} = - \frac{\epsilonv^*(\va_t|\vs, t)}{\sigma_t}.
\end{equation*}
The policy score term is the score function of $\pi_{t,\theta}(\cdot|\vs)=\gN(\alpha_t\pi_\theta(\vs),\sigma_t^2\mI)$, we have
\begin{equation*}
    \frac{\partial   [ \log \pi_{t,\theta}(\va_t |\vs)]}{\partial \va_t} = - \frac{\partial}{\partial \va_t}\frac{\|\va_t - \alpha_t\pi_\theta(\vs)\|_2^2}{2 \sigma_t^2}|_{\va_t = \alpha_t \pi_\theta(\vs)} = \frac{\epsilon}{\sigma_t}.
\end{equation*}
The parameter score term equals 0 for any distribution $\pi_{t,\theta}$, regardless of whether it is Gaussian:
\begin{align*}
    \mathbb{E}_{\va_t \sim \pi_{t, \theta}(\cdot|\vs)} \frac{\partial \log \pi_{t, \theta}(\va_t|\vs)}{\partial\theta} =& \int \pi_{t, \theta}(\va_t|\vs) \frac{\partial \log \pi_{t, \theta}(\va_t|\vs)}{\partial\theta} \mathrm{d} \va_t \\ =& \int \frac{\partial \pi_{t, \theta}(\va_t|\vs)}{\partial\theta} \mathrm{d} \va_t \\=& \frac{\partial}{\partial \theta}\int \pi_{t, \theta}(\va_t|\vs) \mathrm{d} \va_t \\=& 0
\end{align*}
Continue on $\nabla_\theta \gL^{\text{surr}}_{\pi}(\theta)$ and substitute the conclusions from above:
\begin{align*}
    \nabla_\theta \gL^{\text{surr}}_{\pi}(\theta) = &\frac{\partial \mathbb{E}_{\vs}  Q_\phi(\vs, \va)}{\partial \va}|_{\va=\pi_\theta(\vs)} \frac{\partial \pi_\theta(\vs)}{\partial \theta} + \frac{1}{\beta}\mathbb{E}_{t, \vs} \omega(t) \frac{\sigma_t}{\alpha_t} \mathbb{E}_{\epsilon}- \frac{\epsilonv^*(\va_t|\vs, t)}{\sigma_t}\frac{\partial \va_t}{\partial \theta}|_{\va_t = \alpha_t \pi_\theta(\vs) + \sigma_t \epsilon}\\
    &- \frac{1}{\beta}\mathbb{E}_{t, \vs} \omega(t) \frac{\sigma_t}{\alpha_t} \biggl[  \mathbb{E}_{\epsilon}    \frac{\epsilon}{\sigma_t}  \frac{\partial \va_t}{\partial \theta}|_{\va_t = \alpha_t \pi_\theta(\vs) + \sigma_t \epsilon} + 0\biggr]\\
    = &\frac{\partial \mathbb{E}_{\vs}  Q_\phi(\vs, \va)}{\partial \va}|_{\va=\pi_\theta(\vs)} \frac{\partial \pi_\theta(\vs)}{\partial \theta} - \frac{1}{\beta}\mathbb{E}_{t, \vs} \omega(t) \frac{1}{\alpha_t} \mathbb{E}_{\epsilon} \left[\epsilonv^*(\va_t|\vs, t) - \epsilon\right] \frac{\partial \va_t}{\partial \theta}|_{\va_t = \alpha_t \pi_\theta(\vs) + \sigma_t\epsilon}\\
    = &\frac{\partial \mathbb{E}_{\vs}  Q_\phi(\vs, \va)}{\partial \va}|_{\va=\pi_\theta(\vs)} \frac{\partial \pi_\theta(\vs)}{\partial \theta} - \frac{1}{\beta}\mathbb{E}_{t, \vs} \omega(t) \frac{1}{\alpha_t} \mathbb{E}_{\epsilon} \left[\epsilonv^*(\va_t|\vs, t) - \epsilon\right] \alpha_t\frac{\partial \pi_\theta(\vs)}{\partial \theta}|_{\va_t = \alpha_t \pi_\theta(\vs) + \sigma_t\epsilon}\\
    = &\left[ \mathbb{E}_{\vs} \nabla_{\va} Q_\phi(\vs, \va)|_{\va=\pi_\theta(\vs)} - \frac{1}{\beta} \mathbb{E}_{t, \vs, \epsilonv} \textcolor{black}{\omega(t)} (\epsilonv^*(\va_t|\vs, t) \textcolor{blue}{\underbrace{- \epsilonv}_{\hspace{-6mm}\text{subtracted baseline}\hspace{-6mm}}})|_{\va_t=\alpha_t \pi_\theta(\vs) + \sigma_t \epsilonv}\right] \nabla_{\theta} \pi_\theta(\vs)
\end{align*}
\end{proof}
\begin{remark}
The subtracted baseline $\epsilonv$ above corresponds to the policy score term in \Eqref{eq:SDS_split}. It does not influence the expected value of the empirical surrogate gradient $\nabla_\theta \gL^{\text{surr}}_{\pi}(\theta)$. To see this, consider isolating the baseline term $\frac{1}{\beta}\mathbb{E}_{t, \vs, \epsilonv} \epsilonv$. Given that \(\epsilonv\) is a random Gaussian noise independent of both state $\vs$ and time $t$, we can prove $\mathbb{E}_{t, \vs, \epsilonv} \epsilonv = 0$. Still, previous work \citep{roeder2017sticking, dreamfusion} shows that keeping the policy score term can reduce the variance of the gradient estimate and thus speed up training.
\end{remark}

\section{Experimental Details for D4RL Benchmarks}
\label{sec:details}
\paragraph{Critic training.} We train our critic models following \citet{iql}. For the convenience of readers, we recap some key hyperparameters: All networks are 2-layer MLPs with 256 hidden units and ReLU activations. We train them for 1.5M gradient steps using Adam optimizer with a learning rate of 3e-4. Batchsize is 256. Temperature: $\tau=0.7$ (MuJoCo locomotion) and $\tau=0.9$ (Antmaze).

\paragraph{Behavior training.} We adopt the model architecture proposed by \citet{idql}, with a modification to accommodate continuous-time input. A single scalar time input is mapped to a high-dimensional feature using Gaussian Fourier Projection before concatenated with other inputs. The network is basically a 6-layer MLP with residual connections, layer normalizations, and dropout regularizations. We train the behavior model for 2.0M gradient steps using AdamW optimizer with a learning rate of 3e-4 and a batchsize of 2048. Empirical observations suggest that much fewer pretraining iterations (e.g., 0.5M steps) do not cause a drastic performance drop, but we want to ensure training convergence in this work. The diffusion data perturbation method follows the default VPSDE setting in \citet{sde} and is consistent with prior work \citep{qgpo}.

\paragraph{Policy extraction (Locomotion).} The policy model is a 2-layer MLP with 256 hidden units and ReLU activations. It is trained for 1.0M gradient steps using Adam optimizer with a learning rate of 3e-4 and a batchsize of 256. For all tasks $\omega(t)=\sigma_t^2$. For the temperature coefficient, we sweep over $\beta \in \{0.01, 0.02, 0.05, 0.1, 0.2, 0.5\}$ and observe large variances in appropriate values across different tasks (Figure \ref{fig:ablate_alpha}). We speculate this might be due to $\beta$ being closely intertwined with the behavior distribution and the variance of the Q-value. These factors might exhibit entirely different characteristics across diverse tasks. Our choices for $\beta$ are detailed in Table \ref{tbl:gradient_scales_rl}.

\begin{wrapfigure}{r}{0.5\textwidth}
    \centering
    \vspace{-6mm}
    \includegraphics[width=0.9\linewidth]{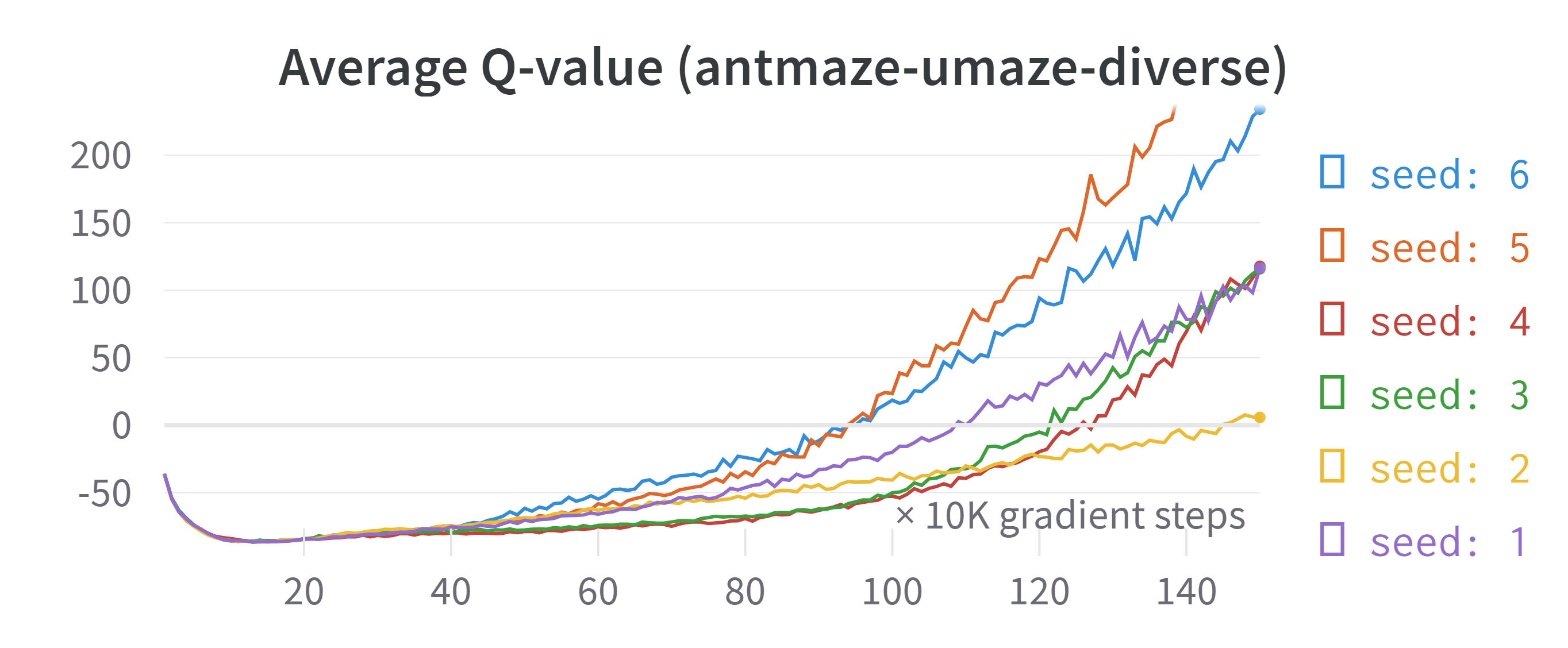}
    \vspace{-2mm}
    \caption{The training instability issue.}
    \label{fig:Q_stability}
    \includegraphics[width=0.9\linewidth]{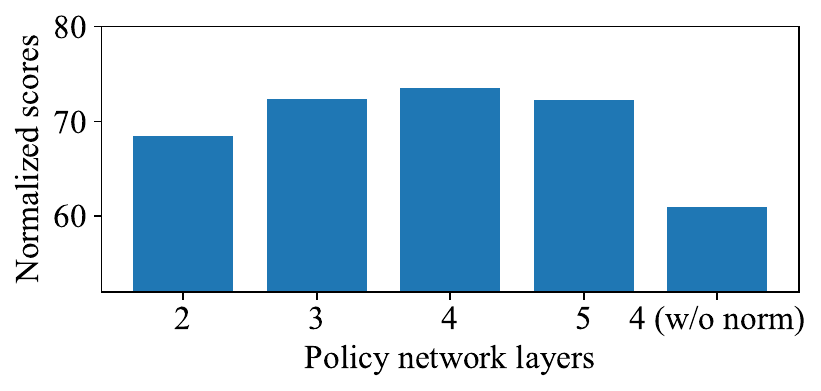} 
    \vspace{-2mm}
    \caption{Ablation for Antmaze tasks.}
    \label{fig:Antmaze_detail_ablation}
    \vspace{-5mm}
\end{wrapfigure}
\paragraph{Policy extraction (Antmaze).} We empirically find that a deeper policy network improves overall performance in Antmaze tasks (Figure \ref{fig:Antmaze_detail_ablation}). As a result, we employ a 4-layer MLP as the policy model. Additionally, we observe that the adopted implicit Q-learning method sometimes has a training instability issue in Antmaze tasks, resulting in highly divergent estimated Q-values (Figure \ref{fig:Q_stability}). To stabilize training for policy extraction, we replace the temperature coefficient $\beta$ with $\beta_\text{norm}(\vs, \va):= \frac{\beta}{\|\nabla_\va Q(\vs, \va)\|_2}$. We sweep over $\beta \in \{0.01, 0.02, ..., 0.05\}$ for umaze environments, and $\beta \in \{0.03, 0.04, ..., 0.08\}$ for other environments. Our choices for $\beta$ are detailed in Table \ref{tbl:gradient_scales_rl}. Other hyperparameters remain consistent with those in the locomotion tasks.

\paragraph{Evaluation.} We run all experiments over 6 independent trials. For each trial, we additionally collect the evaluation score averaged across 20 test episodes at regular intervals for plots in Figure \ref{fig:plots}. The average performance at the end of training is reported in Table \ref{tbl:rl_results}. We use NVIDIA A40 GPUs for reporting computing results in Figure \ref{fig:computation}.

\newpage

\begin{figure}[!h]
\vspace{3mm}
\centering
\includegraphics[width = .32\linewidth]{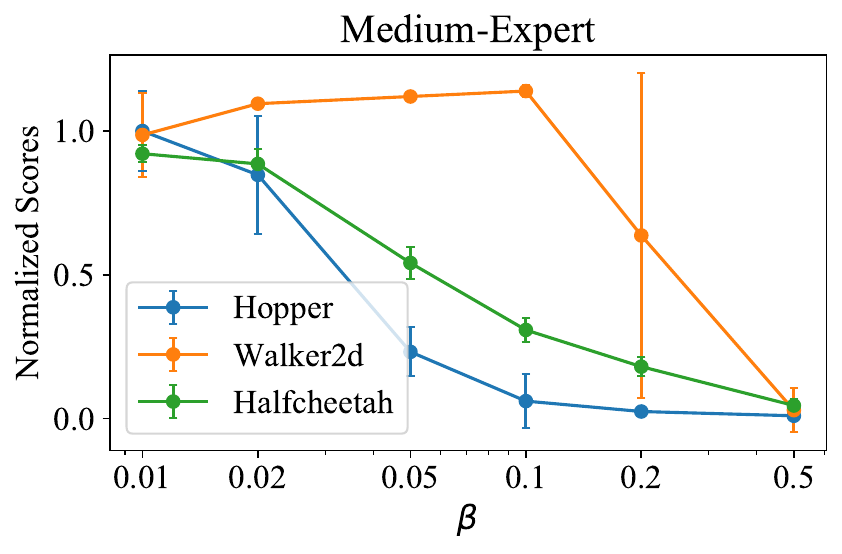}
\includegraphics[width = .32\linewidth]{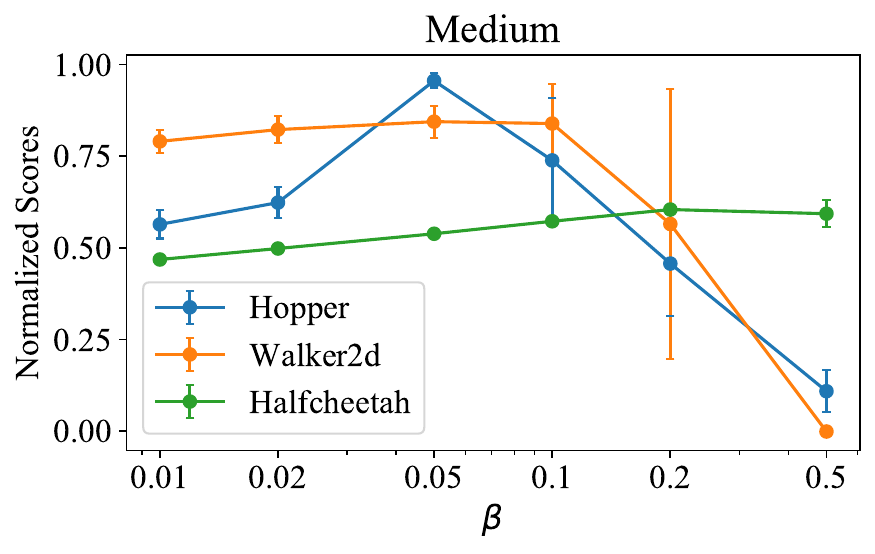}
\includegraphics[width = .32\linewidth]{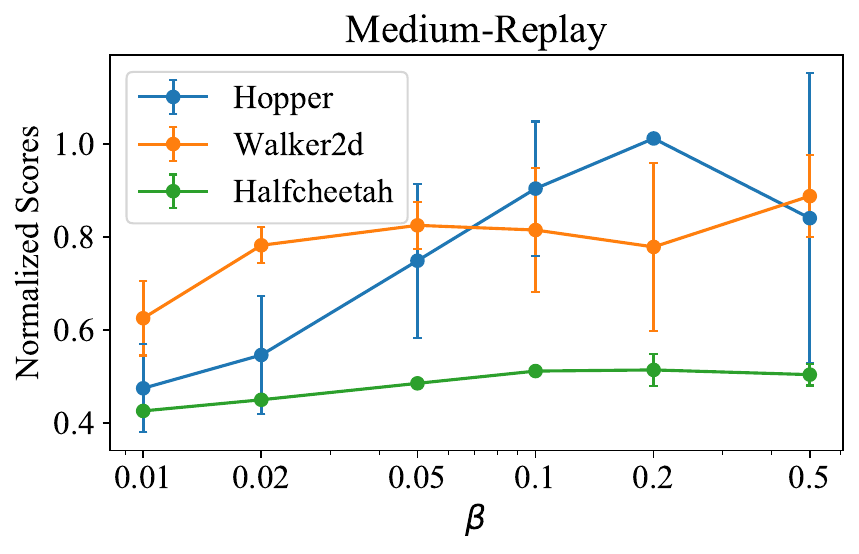}\\
\includegraphics[width = .32\linewidth]{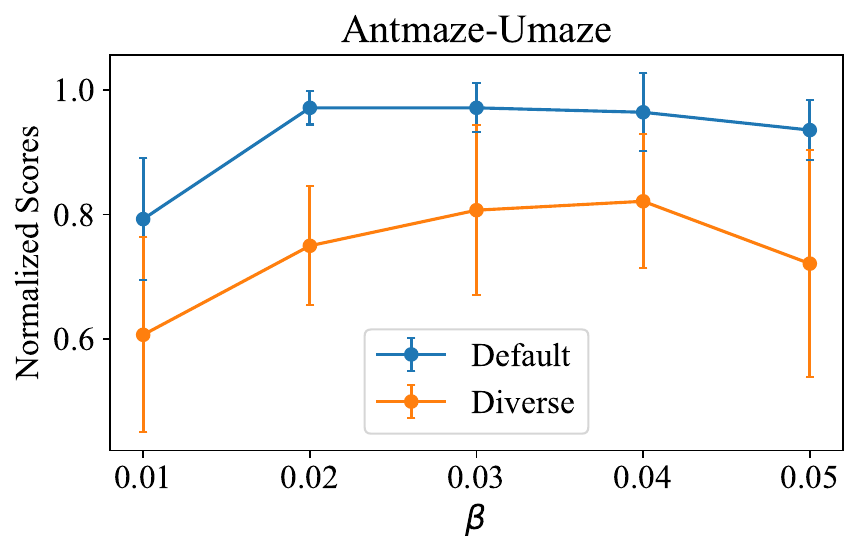}
\includegraphics[width = .32\linewidth]{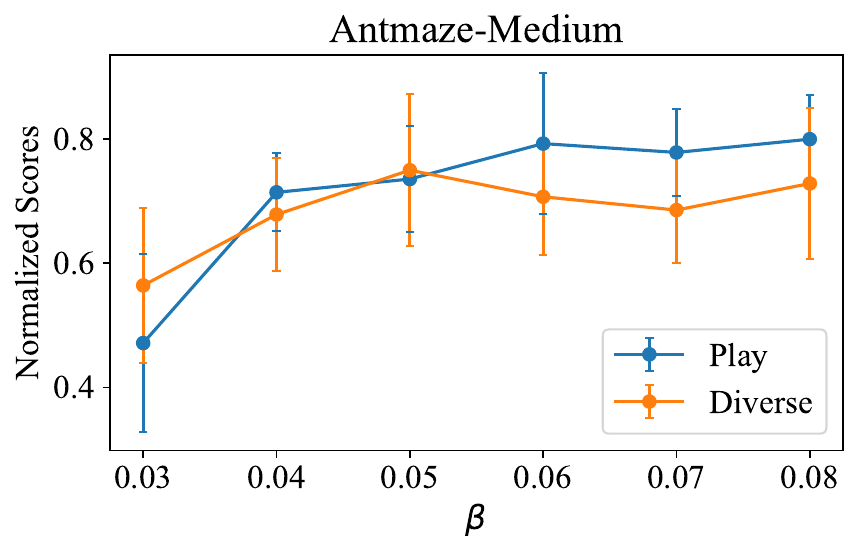}
\includegraphics[width = .32\linewidth]{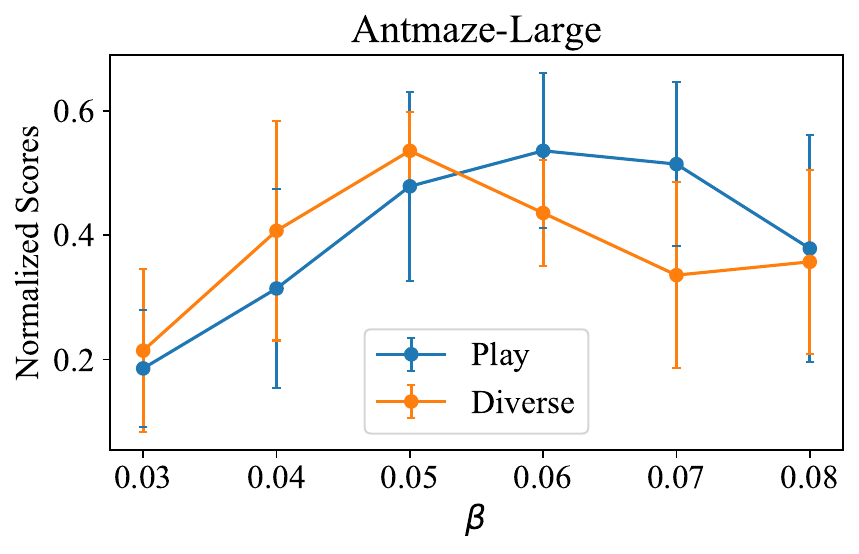}
\vspace{-2mm}
\caption{Ablation of the temperature coefficient $\beta$ in D4RL benchmarks.}
\label{fig:ablate_alpha}
\end{figure}

\begin{table}[h]
\centering
\begin{tabular}{c|c|c|c}
\hline
\multirow{2}*{Locomotion-Medium-Expert} & Walker2d & Halfcheetah & Hopper\\
~ & 0.1 & 0.01 & 0.01\\
\hline
\multirow{2}*{Locomotion-Medium} & Walker2d & Halfcheetah & Hopper\\
~ & 0.05 & 0.2 & 0.05\\
\hline
\multirow{2}*{Locomotion-Medium-Replay} & Walker2d & Halfcheetah & Hopper\\
~ & 0.5 & 0.2 & 0.2\\
\hline    
\multirow{2}*{AntMaze-Fixed} & Umaze & Medium & Large\\
~ & 0.02 & 0.08 & 0.06\\
\hline
\multirow{2}*{AntMaze-Diverse} & Umaze & Medium & Large\\
~ & 0.04 & 0.05 & 0.05\\
\hline
\end{tabular}
\caption{Temperature coefficient $\beta$ for every individual task.}
\label{tbl:gradient_scales_rl}
\end{table}
\vspace{5mm}

\newpage

\section{Training Curves for Offline Reinforcement Learning}
\label{sec:training_curves}
\begin{figure}[!h]
\vspace{-2mm}
\centering
\includegraphics[width = .32\linewidth]{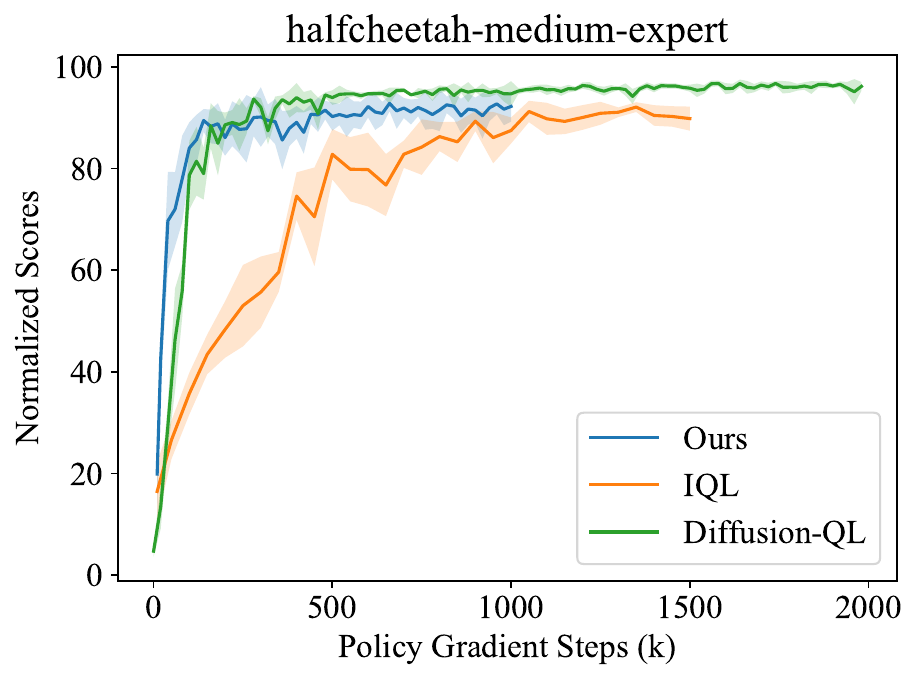}
\includegraphics[width = .32\linewidth]{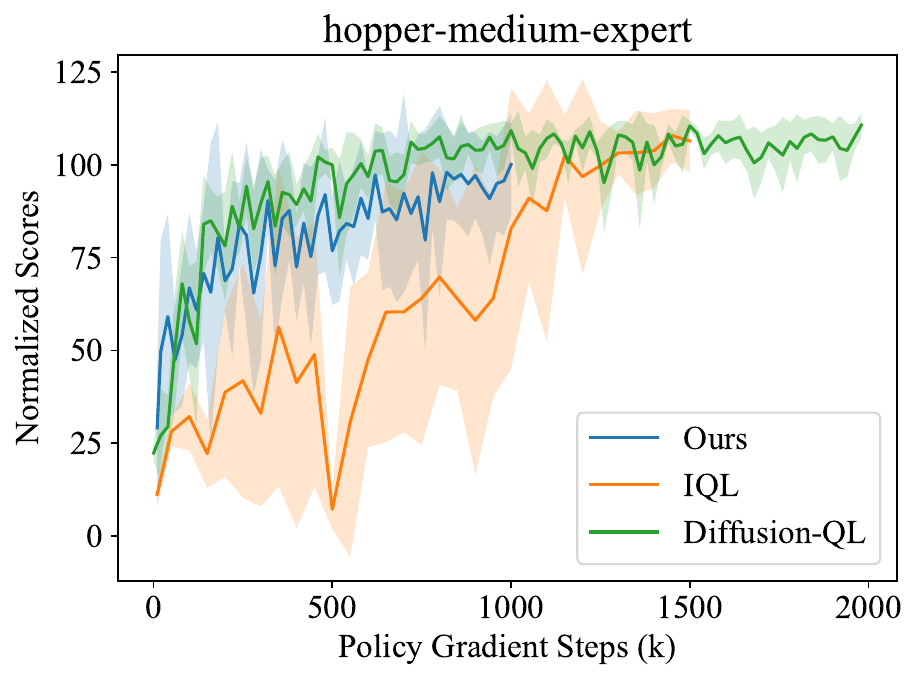}
\includegraphics[width = .32\linewidth]{pics/plot_figures/walker2d-medium-expert-v2_plot.pdf}\\
\includegraphics[width = .32\linewidth]{pics/plot_figures/halfcheetah-medium-v2_plot.pdf}
\includegraphics[width = .32\linewidth]{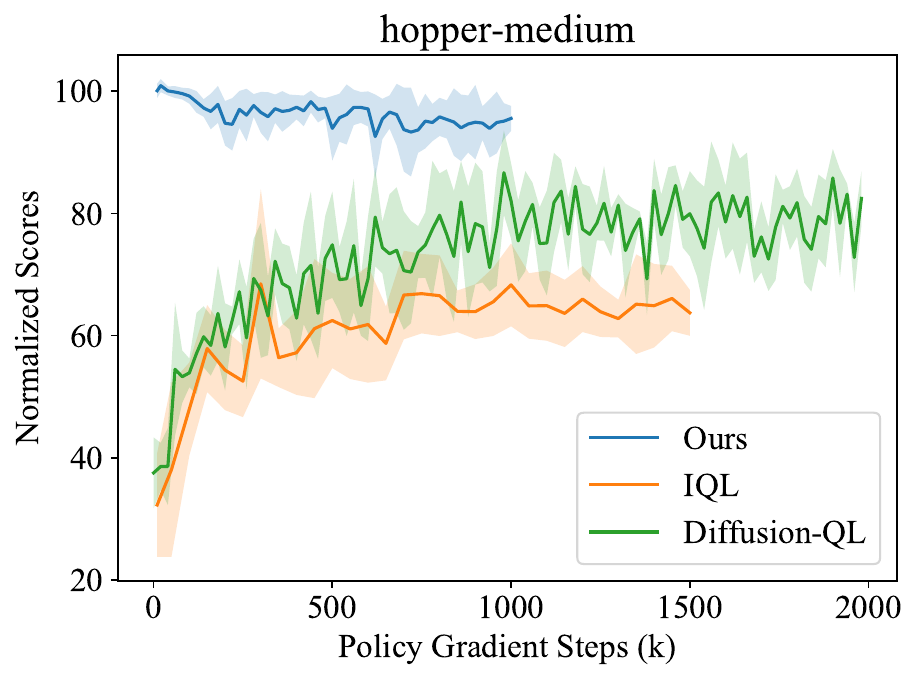}
\includegraphics[width = .32\linewidth]{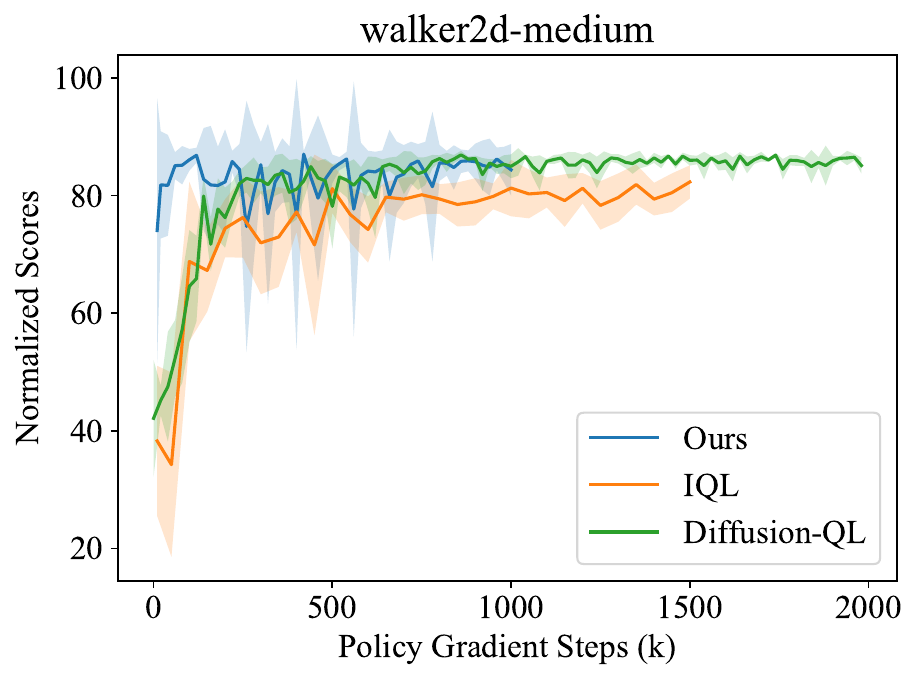}\\
\includegraphics[width = .32\linewidth]{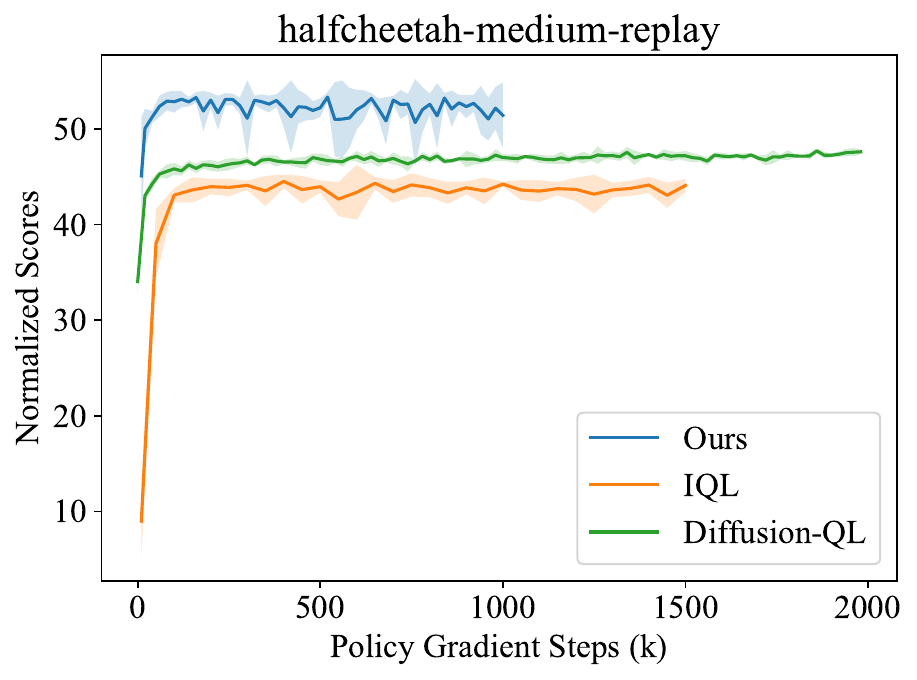}
\includegraphics[width = .32\linewidth]{pics/plot_figures/hopper-medium-replay-v2_plot.pdf}
\includegraphics[width = .32\linewidth]{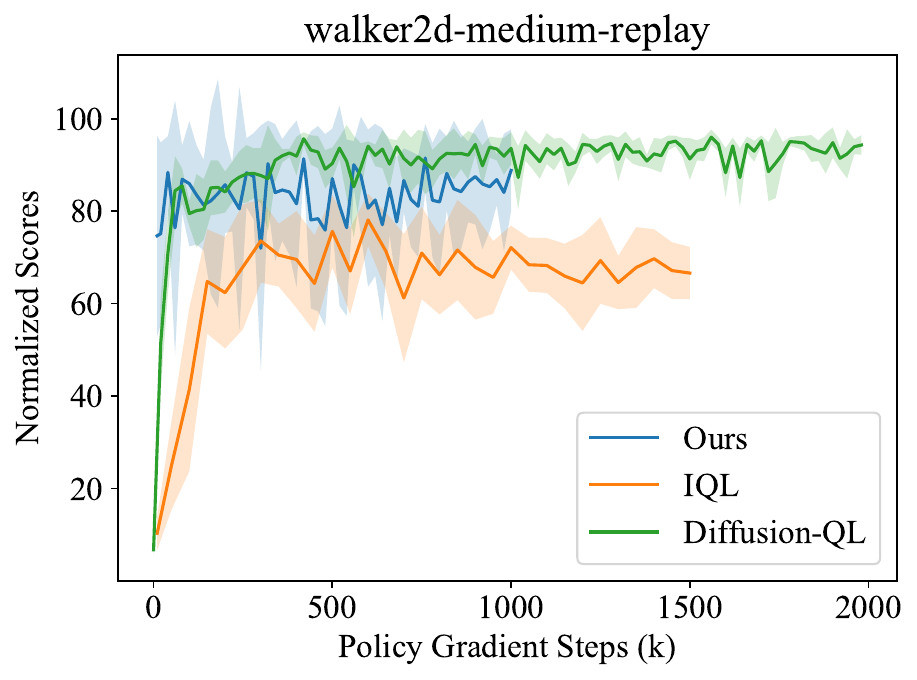}\\
\includegraphics[width = .32\linewidth]{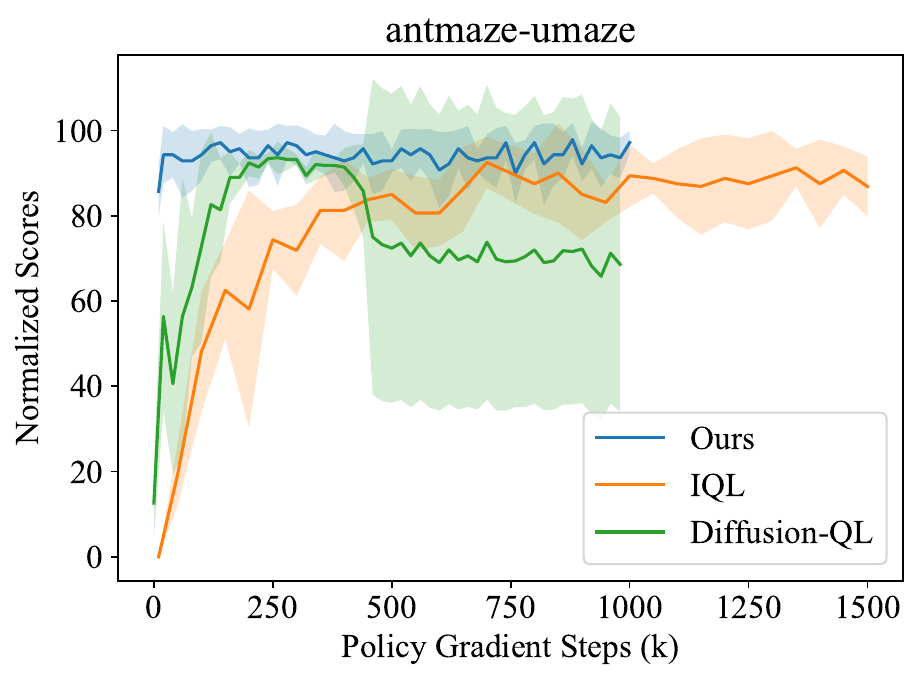}
\includegraphics[width = .32\linewidth]{pics/plot_figures/AntMaze-umaze-diverse-v2_plot.pdf}
\includegraphics[width = .32\linewidth]{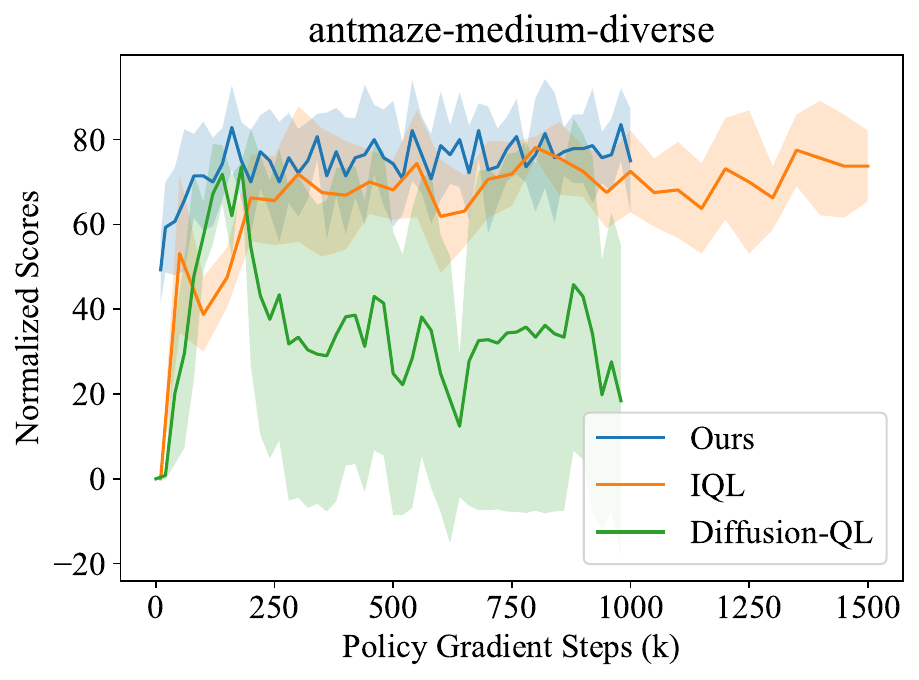}\\
\includegraphics[width = .32\linewidth]{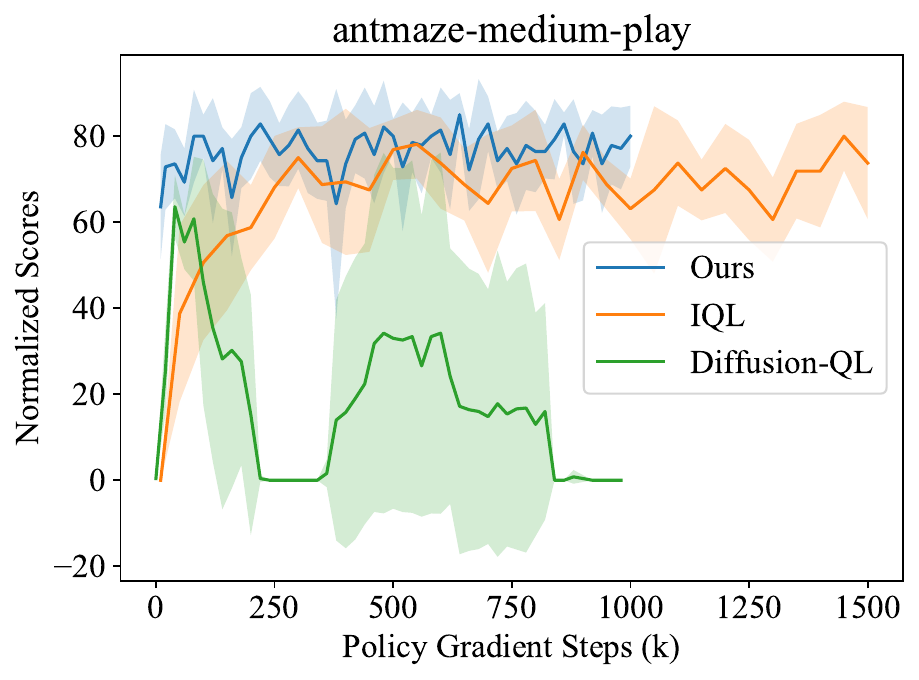}
\includegraphics[width = .32\linewidth]{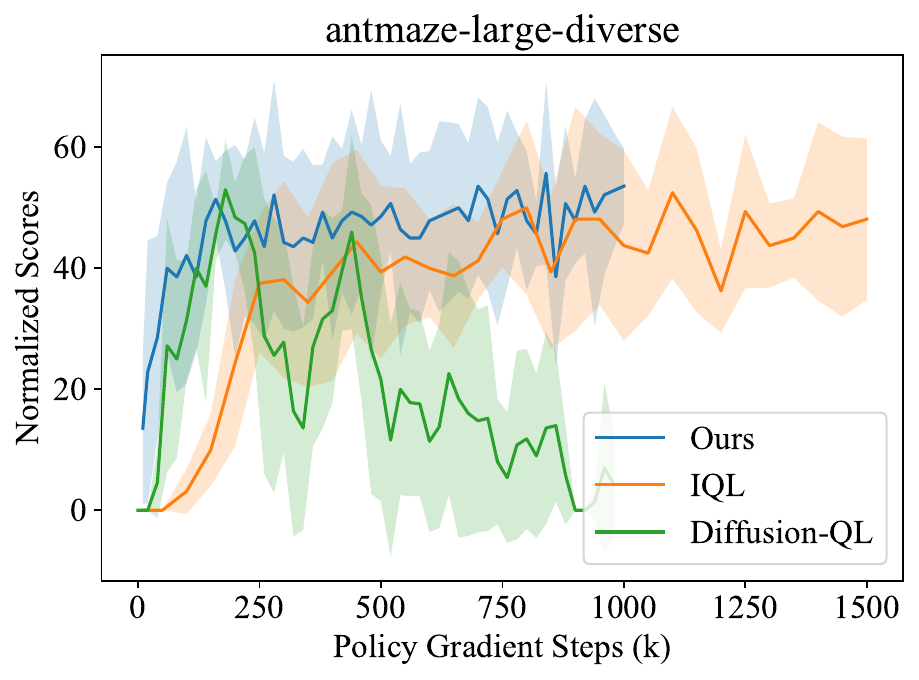}
\includegraphics[width = .32\linewidth]{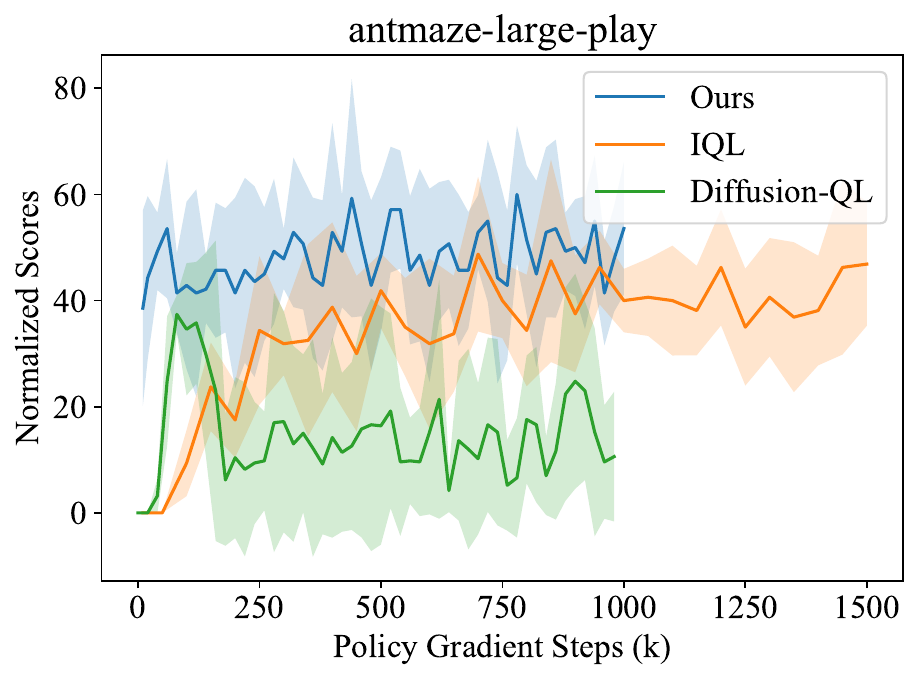}\\
\vspace{-2mm}
\caption{Training curves of SRPO (ours) and several baselines. Scores are normalized according to \cite{d4rl}.}
\vspace{-4mm}
\label{fig:plots}
\end{figure}

\end{document}